\newtheorem{proposition}{Proposition} 
\newtheorem{theorem}{Theorem}
\newtheorem{lemma}{Lemma}
\newtheorem{remark}{\emph{Remark}}
\begin{document}
	\title{Optimal Privacy Preserving for Federated Learning in Mobile Edge Computing}
    \author{\IEEEauthorblockN{Hai~M.~Nguyen, Nam~H.~Chu, Diep~N.~Nguyen, Dinh~Thai~Hoang,
    Van-Dinh Nguyen,\\
    Minh Hoàng Hà, Eryk~Dutkiewicz, and Marwan Krunz\\}
    \thanks{Hai M. Nguyen, Nam H. Chu, Diep N. Nguyen, Dinh Thai Hoang, and Eryk Dutkiewicz are with the School of Electrical and Data Engineering, University of Technology Sydney, Australia (email: hai.nguyen-2@student.uts.edu.au, namhoai.chu@student.uts.edu.au, diep.nguyen@uts.edu.au, hoang.dinh@uts.edu.au, eryk.dutkiewicz@uts.edu.au).}
    \thanks{Van-Dinh Nguyen is with the College of Engineering and Computer Science, VinUniversity, Hanoi, Vietnam (email: dinh.nv2@vinuni.edu.vn).}
    \thanks{Minh Hoàng Hà is with the ORLab, Faculty of Computer Science, Phenikaa University, Hanoi, Vietnam (email: hoang.haminh@phenikaa-uni.edu.vn).}
    \thanks{Marwan Krunz is with the Department of Electrical and Computer Engineering, University of Arizona, USA (email: krunz@email.arizona.edu).}
    \thanks{The preliminary results of this work will be presented at the IEEE International Conference on Communications (ICC), Rome, Italy, 2023 \cite{hai2023}.}
    \vspace{-10mm}
    }
    \maketitle
    \thispagestyle{empty}

\begin{abstract}
Federated Learning (FL) with quantization and deliberately added noise over wireless networks is a promising approach to preserve user differential privacy while reducing wireless resources. 
	Specifically, an FL learning process can be fused with quantized Binomial mechanism-based updates contributed by multiple users to reduce the communication overhead/cost and to protect the privacy of participating users.
	However, optimizing quantization parameters, communication resources (e.g., transmit power, bandwidth, and quantization bits), and the added noise to guarantee the differential privacy requirement and performance of the learned FL model remains an open and challenging problem.
	This article aims to jointly optimize the level of quantization, parameters of the Binomial mechanism, and communication resources so as to maximize the convergence rate under the constraints of the wireless network and differential privacy (DP) requirement.
    To that end, we first derive a novel DP budget estimation of the FL with quantization and Binomial noise that is tighter than the state-of-the-art bound.
    We then analyze the relationship between the convergence rate and the transmit power, the bandwidth, the transmission time, and the quantization/noise parameters and provide a theoretical bound on the convergence rate.
    This theoretical bound is decomposed into two components, including the variance of the global gradient and an upper bound on the quadratic bias that can be minimized by optimizing the communication resources, quantization, and added noise parameters.
	The resulting optimization turns out to be a Mixed-Integer Non-linear Programming (MINLP) problem.
	To tackle it, we first transform this MINLP problem into a new problem whose solutions are proved to be the optimal solutions of the original one.
	We then propose an approximate algorithm to solve the transformed problem with an arbitrary relative error guarantee.
	Extensive simulations show that under the same wireless resource constraints and differential privacy protection requirements, the proposed approximate algorithm achieves an accuracy close to the accuracy of the conventional FL without quantization and no added noise.
	The results can achieve a higher convergence rate while preserving users' privacy.
\end{abstract}

\begin{IEEEkeywords}
	Binomial mechanism, differential privacy, federated learning, quantization, communication resources, convergence rate, approximate algorithm, wireless.
\end{IEEEkeywords}

\section{Introduction}
The rapid growth of mobile devices and services resulted in a huge amount of data for artificial intelligence (AI) based mobile applications, e.g., healthcare and e-commerce services. 
	However, constructing a global model from big data is still challenging.
	First, due to privacy concerns, mobile users are not always willing to share their raw data (e.g., location, information, and travel habits/data).
	Second, fusing users' data at a server may incur significant communication overhead/cost.
	In this context, Federated Learning (FL), among various distributed learning frameworks, has recently emerged as a potential solution to address these two challenges. 
	Specifically, instead of requiring mobile users to share their raw data, FL only requires users to send their gradients based on their local data to a centralized server for the learning process. 
	By doing so, not only the communication cost significantly decreases but also users' privacy concerns are alleviated \cite{Agarwal2018}.

However, FL faces different challenges when deployed over wireless networks \cite{chen2021distributed}.
	First, although only local gradients from mobile users are sent to the server, the communication cost remains a major concern for the FL over wireless networks (FLoWNs).
	The reason is that a mobile AI-based application may require updates/data from a large number of devices (in the order of thousands or more), thus putting significant stress on network resources \cite{lin2017deep}.
	Additionally, to achieve a certain accuracy level, multiple rounds of information exchange between the participating devices and the aggregating server are required.
	These problems are particularly more pronounced with complex deep learning models in which a local update may contain millions of parameters \cite{He2016}.
	Second, due to its broadcast/open nature, wireless networks are vulnerable to many types of attacks, such as Man-in-the-Middle, DDoS, and Sybil, leading to privacy concerns \cite{zou16}. 
	Recent studies (e.g., \cite{nasr2018comprehensive}, \cite{DBLP:journals/corr/abs-2001-02610}, and \cite{zhu19}) revealed that it is possible to retrieve the original data from the victims' shared local gradients.
	This can void the privacy protection advantage of FL.

To address the above challenges, a few works have adopted a quantization technique to reduce communication costs. 
    For example, in \cite{seide20141} and \cite{strom2015scalable}, the gradient elements are rounded to either 1 or -1. 
    The authors in \cite{Reisizadeh20a} presented another quantized FL framework that periodically averages the model's parameters at the server's side and quantizes the message-passing from edge nodes to the server.
    Furthermore, to improve the performance of FL (e.g., convergence rate), each node updates its local model by applying stochastic gradient descent (SGD) after a fixed number of iterations.
    Finally, to better scale the system, the server only updates the model with a fraction of the total nodes in each round. 
    Similarly, the study in \cite{9054168} proposed an FL with quantization constraint on the gradients.
    To establish a theoretical guarantee, the authors showed that the error caused by the quantization scheme is bounded by a term that decreases exponentially with the number of users.
    In \cite{pmlr-v130-haddadpour21a}, the authors proposed algorithms with periodic quantization and analyzed their convergence properties.
    In particular, they derived an upper bound on the learning time of various objective functions, including strongly convex and non-convex ones.
    In \cite{9712310}, the authors proposed a heterogeneous quantization approach that allows users to adapt the quantization parameters according to their communication resources.
    The network was partitioned into groups, and the local model user updates were divided into segments and aggregated the updates on segments.
    The authors demonstrated that their framework guarantees secure aggregation simultaneously in Byzantine scenarios and achieves convergence in non-Byzantine scenarios.
    Unlike the above works, the authors in \cite{9413697} proposed a strategy to adjust the quantization levels during the training process.
    Through simulations on deep neural networks, they showed that their method achieves fewer communicated bits compared to a fixed quantization level policy.
    Interested readers are referred to \cite{survey_quantization_rokh} for a comprehensive survey on model quantization for deep neural networks.

To address privacy concerns in FL, there is a rising interest in Differential Privacy (DP), which is a scheme to share group pattern information of a dataset while securing the privacy of individuals.
    It uses a privacy budget parameter to measure the distinguishing probability between two datasets that differ by one individual record.
    The idea of DP is to add noise to private records in the dataset before aggregation.
    In \cite{9069945}, the authors proved that by adjusting the artificial noise, any privacy protection level can be satisfied.
    They also analyzed the optimal number of devices to maximize the convergence rate of the underlying learning process.
    This theoretical analysis also captures the trade-off between the privacy level and the convergence rate as well as the impact of the number of devices.
    To preserve user privacy and reduce the communication cost, the authors in \cite{9253545} integrated FL with two-bit quantization and local DP mechanisms over an Internet of Vehicles network.
    The local DP mechanisms include a three-output mechanism for a small privacy budget, an optimal piecewise mechanism (PM-OPT), a suboptimal mechanism (PM-SUB), and a hybrid combining of PM-OPT and PM-SUB mechanism for a large privacy budget. 
    In \cite{9413764}, the authors considered a Gaussian mechanism for adding noise to the gradients of FL.
    Compared to other works, this work achieves a tighter bound on the privacy budget.
    The authors in \cite{pmlr-v130-girgis21a} studied the communication efficiency, privacy, and convergence trade-offs between the federated communication cost and local DP SDG algorithm.
    In particular, the proposed algorithm applied for the empirical risk minimization (ERM) optimization problem while guaranteeing the communication efficiency and privacy restrictions applied for the FL network.
    The authors leveraged the advantages of client subsampling and data subsampling as well as the shuffled model of privacy to deal with the limitation of DP.
    Theoretically, the proposed algorithm provides a lower bound on the ERM problem.
    Readers are referred to \cite{el2022differential} for a more comprehensive survey of DP in FL.

    Note that all aforementioned works do not take into account optimization of the system parameters (e.g., transmit power, bandwidth, and transmission time) and the quantization/noise factors while guaranteeing the DP of users in the underlying FL process.
	This problem is, in fact, very challenging since privacy-preserving methods often add noise to data or use quantization, hence significantly reducing the learning quality.
	For example, the authors in \cite{Agarwal2018} proposed a framework that leverages quantization and Binomial mechanisms to reduce communication costs and provide DP.
	However, they only focused on the theoretical side and did not study the inherent factors of an FL system over wireless networks, e.g., limited bandwidth, transmit power, and transmission time.
    Moreover, the optimization of wireless/communication resources is often done on a short-term basis (e.g., at a packet of a frame length) while the learning convergence rate/accuracy must be optimized over a much longer time scale.
	Studying the impact of these system parameters on the performance of FLoWNs with regard to DP protection, learning accuracy, and convergence rate is the focus of this article. Our major contributions are as follows:
\begin{itemize}
    \item Derive a novel differential privacy budget estimation of the FL with quantization and Binomial noise.
    This tighter privacy budget estimation allows us to study the convergence rate optimization problem over a larger feasible region, hence achieving a higher convergence rate compared to ones that use the known privacy budget estimation \cite{Agarwal2018}. This is inline with the idea recently reported in \cite{jingliang-yuan-infocom-2023} where the privacy budget can be treated as a type of resource.
    
    \item Analyze the relationship between the convergence rate and the transmit power, the bandwidth, the transmission time, and the quantization/noise parameters and provide a theoretical bound on the convergence rate.
    Later, it can be seen that the bandwidth, the transmission time, and the transmit power only appear on the right-hand side of a constraint which is a function that monotonically increases with respect to these parameters.
    Thus, we can fix two among three parameters, i.e., the bandwidth, the transmission time, and the transmit power, to optimize the other parameter.
    \item Decompose the bound into two components, including the variance of the global gradient and the quadratic bias introduced by the quantization/noise mechanism, that can be minimized by optimizing the transmit power, quantization, and noise-added parameters.
The resulting optimization turns out to be a Mixed-Integer Non-linear Programming (MINLP) problem.
To tackle it, we transform this MINLP problem into a new problem whose solutions are proved to be the optimal solutions of the original one.
We then design an approximate algorithm that can solve the transformed problem with an arbitrary relative error guarantee. 
\item Extensive simulations show that for the same resources, the proposed approach achieves an accuracy close to that of the conventional FL without quantization and no noise added.
This suggests a faster convergence rate for the proposed wireless FL framework while optimally preserving users' privacy.
    
\end{itemize}
    
	The remainder of this paper is organized as follows. 
	Section \ref{s:system-model} presents the architecture of FL with added noise and quantized gradients over Mobile Edge Computing (MEC), the theoretical analysis of the privacy budget estimation, and the bound on the convergence rate.
    The problem formulation, its approximate algorithm, and the complexity analysis are in Section \ref{s:problem-formulation}. The experiments and discussion of the numerical results are in Section \ref{s:numerical-results}.
	Finally, conclusions are drawn in Section \ref{s:conclusions}.

\section{System Model and convergence analysis} \label{s:system-model}
This work considers a MEC architecture in which a Mobile Edge Server (MES) orchestrates an FL process 
consisting of $M$ mobile devices
\cite{7474412}.
	 Each mobile device
  $k$, $k \in \{1, \ldots, M\}$, has a private local dataset.
	 This dataset can be created through the user's activities captured by this device (e.g., health- or travel-related data) and hence subject to data privacy protection.
	    \begin{figure}[H]
		\centerline{\includegraphics[width=0.475\textwidth]{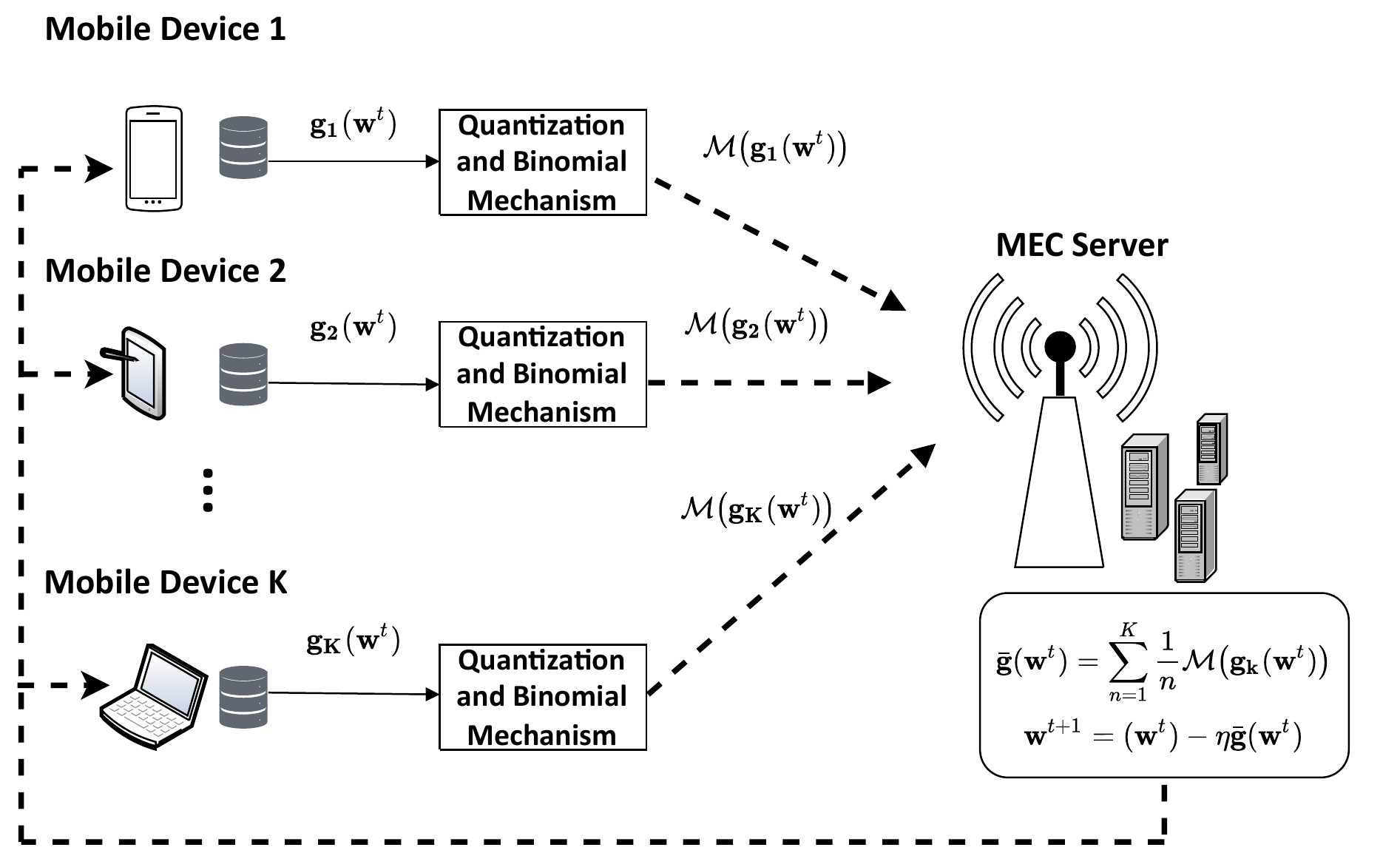}}
		\caption{Stochastic Binomial mechanism and Level quantization FL over MEC architecture.}
		\label{fig:1}
	\end{figure}
\subsection{Federated Learning over MEC}\label{com-efficiency-DP}
In general, the tasks in FL can be expressed as an optimization problem \cite{mcmahan17} of the average loss function $F(\mathbf{w})$:
	\begin{equation}
 \min_{\mathbf{\mathbf{w}} \in \mathbb{R}^d} \left\{ F(\mathbf{w}) = \frac{1}{M} \sum_{k = 1}^{M} f_k(\mathbf{w})\right\},
    \label{eq1}
	\end{equation}
	where  $f_k(\mathbf{w})$ is the loss function processed on device $k \in \{1, \ldots, M\}$, $\mathbf{w} \in \mathbb{R}^d$ is the gradient vector and $d$ is the dimension of $\mathbf{w}$.
	The objective is to minimize the loss function by finding the optimal model parameter set $\mathbf{w}$.
	Problem \eqref{eq1} can be solved by the Federated Stochastic Gradient Descent (FSGD) approach \cite{mcmahan17}, which continuously iterates the following steps:
	\begin{enumerate}
		\item \textbf{\textit{Broadcast:}} At the beginning of iteration $t$, the MES broadcasts the current model parameters $\mathbf{w}^t$ to all $M$ mobile devices.
		\item \textbf{\textit{Local computation:}} After receiving $\mathbf{w}^t$, mobile device $k$ computes its local gradient $\mathbf{g}_k(\mathbf{w}^t)=\nabla f_k(\mathbf{w}^t)$ based on its local dataset, and then sends $\mathbf{g}_k(\mathbf{w}^t)$ to the server. 
		\item \textbf{\textit{Model update:}} 
  The MES selects a subset of mobile devices $\mathcal{K}\subset \{1, \ldots, M\}$ to update the global gradients. As soon as it receives the updates from all mobile devices in $\mathcal{K}$, the MES estimates the gradient $\nabla F(\mathbf{w}^t)$ by aggregating the local gradients.
    Then, it updates the model parameters for the next iteration $\mathbf{w}^{t + 1}$ as follows:
		\begin{align*}
    		\mathbf{g}(\mathbf{w}^t) =	\frac{1}{K} \sum_{k \in \mathcal{K}} \mathbf{g}_{k}(\mathbf{w}^t), \\ \mathbf{w}^{t + 1} = \mathbf{w}^t - \gamma \mathbf{g}(\mathbf{w}^t),
		\end{align*}
	   \end{enumerate}
    where $K = |\mathcal{K}|$ is the number of selected devices and $\gamma$ is the learning rate. 
	
	Since the expectation of the gradient $\mathbb{E}[\mathbf{g}(\mathbf{w}^t)] = \nabla F(\mathbf{w}^t)$  \cite{Ghadimi2013}, $\mathbf{g}(\mathbf{w}^t)$ is an unbiased estimation of $\nabla F(\mathbf{w}^t)$. The process stops when the loss function converges, e.g., $||\mathbf{g}(\mathbf{w}^t)||^2 \le \theta$ where $ \theta$ is a given threshold  $0 \le \theta \le 1$, or achieves a desirable accuracy. 
	In the next section, we describe how the quantization and Binomial mechanisms can effectively lower communication costs and guarantee data privacy for the FL.
	
\subsection{Quantization and Privacy for FL over MEC}
As mentioned before, to deal with data-intensive local gradients (millions of data points, as in \cite{zaremba2014}) that significantly consume the resources of mobile devices and the MES, quantization is often employed  \cite{Agarwal2018, Reisizadeh20a}.
    Therefore, we adopt a stochastic $q$-level quantization which converts the real values of the gradients into integer values with $\log_2(q)$ bits \cite{horvath2022stochastic}, thus significantly reducing the communication overhead.
    This quantization mechanism is parameterized by the maximum value of the gradient $D$ and the quantization level $q$.
	 	
At the beginning of the training process, the server instructs the devices of the quantization parameters $D$ and $q$.
	 A simple choice of $D$ is the maximum value of the loss function gradient \cite{Agarwal2018}.
	 Then, all devices re-scale each element $g_k^i(\mathbf{w}^t)$ of their local gradients $\mathbf{g}_k(\mathbf{w}^t)$ to the range $\left[-D, \ D\right]$ \cite{Agarwal2018}, where $i$ is the index of element $g^i_k(\mathbf{w}^t)$ of the vector $\mathbf{g}_k(\mathbf{w}^t)$.
	 Specifically, similar to \cite{Agarwal2018}, we define $V(j)$ of an integer $j \in [0, q - 1]$ as follows:
	\begin{equation}
		V(j) = -D+\frac{2D}{q-1}j. \label{eq:18}
	\end{equation}
    Thus, $V(j)$ is always within $[-D, D]$.
	Then, the quantized local gradient of $g_k^i(\mathbf{w}^t)$, denoted by $Q(g_k^i(\mathbf{w}^t))$, is defined as follows:
	\begin{equation*}
		Q(g_k^i(\mathbf{w}^t)) =
		\begin{cases}
			V(r+1) & \text{with probability }  \frac{g_k^i(\mathbf{w}^t) - V(r)}{V(r+1) - V(r)}, \\
			V(r) & \text{otherwise},
		\end{cases}			
	\end{equation*}
	where $r \in [0, \ q - 1]$ is an integer such that the value of $g_k^i(\mathbf{w}^t)$ is within $\big[V(r)$,  $V(r+1)\big)$.
    Thereby, the gradient size is significantly reduced by controlling the parameters, i.e., the quantization level $q$ and the maximum value of the gradient $D$.  It is worth noting that here we assume homogeneous users/devices where the same quantization parameters are applied to gradients of all users and all rounds. In practice, one can also use adaptive gradient quantization \cite{liu2022communicationefficient} to leverage the heterogeneity of users/devices or even the change in each updating round/iteration.   
		
Another major challenge when employing an FL system over MEC is the leak of users' privacy while devices share their local gradients.
	A potential solution to guarantee the DP for mobile devices is to add random noise to the local gradient updates \cite{Dwork2014}.
	As defined in \cite{Dwork2014}, a randomized mechanism $\mathcal{M}$ satisfies $(\epsilon,\delta)$-differential privacy if for two neighboring input datasets, say $x$ and $y$, differ by up to one element, and for any output set $\mathcal{S}$ of $\mathcal{M}$ we have:
		\[\Pr{\left[\mathcal{M}\left(x\right)\in\mathcal{S}\right]}\le e^{\epsilon}\Pr{\left[\mathcal{M}\left(y\right)\in\mathcal{S}\right]}+\delta,\]
	where $\epsilon > 0$ is a parameter that represents the privacy loss, often referred to as the differential privacy budget. 
	The smaller the value of $\epsilon$ is, the better privacy protection can be achieved.
    The given $\delta$ is an upper bound on the probability of a bad event, i.e., the privacy is broken.
	In our work, we adopt the Binomial mechanism \cite{Agarwal2018} to achieve the $(\epsilon, \delta)$-differential privacy.
				
Under the Binomial mechanism, the noise vector $\mathbf{z}$ is drawn from the Binomial distribution $\mathcal{B}\left(n,p\right)$, i.e., for each coordinate $i$, $z_i \sim \mathcal{B}\left(n, p\right)$ is added to $Q(g^i_k(\mathbf{w}^t))$ as \cite{10.1007/11761679_29}:
		\[\mathcal{M}(\mathbf{g}_k(\mathbf{w}^t)) = Q\left(\mathbf{g}_k(\mathbf{w}^t)\right) +s\left(\mathbf{z}-np\right),\]
	where $n$ and $p$ are parameters of the Binomial distribution, and $s$ is the noise scale, computed as in \cite{Agarwal2018}: \begin{equation} \label{eq:noise-level}
	    s = \frac{2D}{q-1}.
	\end{equation}
	
The above stochastic level quantization and Binomial mechanism-based FL (referred to as SLQBM-FL)  \cite{Agarwal2018} under the mobile edge computing framework is illustrated in Fig. \ref{fig:1}.
    With $K$ selected mobile devices, the SLQBM-FL is proved to achieve $\left(\epsilon,\delta\right)$-differential privacy \cite{Agarwal2018} if the following inequality holds:
	\begin{equation}\label{eq:DP_cons}
		Knp(1 - p) \ge \max \bigg\{23\ln\frac{10d}{\delta}, 2(q + 1)\bigg\},
	\end{equation}
	then $\epsilon$ is calculated as:
	\begin{alignat}{3}
	 \nonumber \epsilon & = &\frac{\Delta_2 \sqrt{2 \ln\frac{1.25}{\delta}}}{\sqrt{np(1 - p)}}
	+ \frac{\Delta_2 c_p \sqrt{\ln\frac{10}{\delta}} + \Delta_1 b_p} {np(1 - p)(1-\frac{\delta}{10})} \\
	&& + \frac{\frac{2}{3}\Delta_\infty \ln\frac{1.25}{\delta} + \Delta_\infty d_p \ln\frac{20d}{\delta} \ln\frac{10}{\delta}} {np(1 - p)}, \label{eq:expression-epsilon}
	\end{alignat}
	where
	\begin{alignat}{3}
	&c_p &&\overset{\Delta}{=} \sqrt2\left(3p^3+3\left(1-p\right)^3+2p^2+2\left(1-p\right)^2\right) \label{eq:cp-exp},\\
	&b_p &&\overset{\Delta}{=} \frac{2}{3}\left(p^2+\left(1-p\right)^2\right)+\left(1-2p\right) \label{eq:bp-exp},\\
	&d_p &&\overset{\Delta}{=} \frac{4}{3}\left(p^2+\left(1-p\right)^2\right) \label{eq:dp-exp},
	\end{alignat}
and the sensitivity bounds $\Delta_1$, $\Delta_2$, and $\Delta_\infty$:
	\begin{alignat}{2}
	&\Delta_1  &&\overset{\Delta}{=} \frac{2\sqrt{d}D}{s} + \sqrt{\frac{4\sqrt{d}D\ln\frac{2}{\delta}}{s}} + \frac{4}{3}\ln\frac{2}{\delta} \label{eq:Delta_1-1-exp},\\
	&\Delta_2  &&\overset{\Delta}{=} \frac{2D}{s} + \sqrt{\Delta_1 +  \sqrt{\frac{4\sqrt{d}D\ln\frac{2}{\delta}}{s}}} \label{eq:Delta_2-1-exp}, \\
	&\Delta_\infty &&\overset{\Delta}{=} q + 1. 
    \label{eq:Delta_inf-1-exp}
	\end{alignat}
The rationale behind Eq. \eqref{eq:DP_cons} is that the variance of the Binomial mechanism $np(1-p)$ needs to exceed a lower bound to guarantee the ($\epsilon, \delta$)-differential privacy requirement.
    This lower bound is directly proportional to the number of dimensions $d$ and inversely proportional to the probability of privacy broken.

The authors in \cite{Agarwal2018} showed that the right-hand side of Eq. \eqref{eq:expression-epsilon} is a privacy budget estimation when the condition \eqref{eq:DP_cons} holds. Here, we derive a tighter privacy budget estimation in Theorem \ref{therm-new-estimation-privacy-budget} below. Intuitively, for the same value of $(q, n, p, P_k)$, the privacy budget estimation in \eqref{new-privacy-budget-estimation} is smaller than the privacy budget estimation in \eqref{eq:expression-epsilon} (more details are presented in Appendix \ref{new-estimation-privacy-budget-thrm-proof}).
Therefore, given a privacy budget upper bound $\bar{\epsilon}$, the proposed privacy budget estimation allows us to study the convergence rate optimization problem over a larger feasible region, compared to the known privacy budget estimation in [1] (hence potentially yielding a higher convergence rate). The significance of this tighter privacy budget estimation can be more pronounced where the privacy budget can be treated as a type of resource, as recently reported in \cite{jingliang-yuan-infocom-2023}. 

    
\begin{theorem}[Privacy Budget Estimation]\label{therm-new-estimation-privacy-budget}
For any $\delta$ that satisfies:
\begin{equation*}
    Knp(1 - p) \ge \max \bigg\{23\ln\frac{10d}{\delta}, 2(q + 1)\bigg\},
\end{equation*}
the Binomial mechanism is ($\epsilon, \delta$)-privacy for
\begin{align}
    \epsilon = & \frac{\Delta_2\sqrt{2\ln{\frac{1.25}{\delta}}}}{\sqrt{np(1-p)}} + \alpha \frac{\Delta_1 (np(1-p) + 1)}{n^2p^2(1-p)^2(1-\frac{\delta}{10})} \nonumber
\end{align}
\begin{align}
    & (p^2+(1-p)^2) + \frac{\Delta_2}{\sqrt{1 - \frac{\delta}{10}}} \sqrt{2S_1\ln{\frac{10}{\delta}}} \nonumber
\end{align}
\begin{align}
    & + \frac{2}{3}\alpha\frac{S_2\left(p^2 + (1-p)^2\right)\ln{\frac{10}{\delta}} \Delta_{\infty}}{n^2p^2(1-p)^2} \nonumber\\
    & + \frac{2\ln{\frac{1.25}{\delta}}\Delta_{\infty}}{np(1-p)} \label{new-privacy-budget-estimation}
\end{align}
where:
\begin{alignat}{3}
    &\alpha &=& -3 - 9\ln{\frac{2}{3}},\nonumber\\
    &S_1 &=&   
        \frac{3p^2-3p+1}{n(n+1)(n+2)p^2(1-p)^2}\bigg[ 3n+2  \nonumber\\
        &&& + \frac{2}{p(1-p)} \bigg],\label{s-1-formula}
\end{alignat}
\begin{alignat}{3}
    &S_2 &=& \bigg( \sqrt{2np(1-p)\ln{\frac{20d}{\delta}}} + 1 + \frac{2}{3}\max\{p, 1-p\} \nonumber\\
    &&&  \ln{\frac{20d}{\delta}} \bigg) ^2,
    \label{s-2-formula}
\end{alignat}
$\Delta_1$, $\Delta_2$, and $\Delta_\infty$ are defined in Eqs \eqref{eq:Delta_1-1-exp}-\eqref{eq:Delta_inf-1-exp}. The proposed privacy budget estimation is tighter than the privacy budget estimation in \cite{Agarwal2018}, i.e., the right-hand side value of Eq. \eqref{new-privacy-budget-estimation} is smaller than that of Eq. \eqref{eq:expression-epsilon} with respect to the same system parameters' and quantization/noise parameters' values.
\end{theorem}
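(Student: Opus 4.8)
The plan is to follow the privacy-loss analysis underlying the original Binomial-mechanism guarantee of \cite{Agarwal2018}, but to replace its crude intermediate estimates with sharper ones. Because the noise $s(\mathbf{z}-np)$ is injected coordinate-wise and independently, I would first reduce the $d$-dimensional statement to controlling the privacy-loss random variable $L(o)=\ln\frac{\Pr[\mathcal{M}(x)=o]}{\Pr[\mathcal{M}(y)=o]}$ for neighbouring inputs $x,y$; this quantity decomposes over coordinates into a sum of log-ratios of integer-shifted Binomial PMFs, where the per-coordinate shifts are governed by the sensitivities $\Delta_1,\Delta_2,\Delta_\infty$ of \eqref{eq:Delta_1-1-exp}--\eqref{eq:Delta_inf-1-exp}. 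The $(\epsilon,\delta)$ guarantee then reduces to $\Pr[L>\epsilon]\le\delta$, which I would establish by splitting on a high-probability event where the Binomial count stays inside a concentration range and absorbing its complement into $\delta$.

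The tightness is obtained from three refinements on that high-probability event. First, I would Taylor-expand each log-PMF ratio $\ln\frac{b(k;n,p)}{b(k+m;n,p)}$ to third order about the mean $np$; the quadratic term reproduces exactly the Gaussian contribution $\frac{\Delta_2\sqrt{2\ln\frac{1.25}{\delta}}}{\sqrt{np(1-p)}}$, which is identical to the first term of \eqref{eq:expression-epsilon}. Second, instead of bounding the remainder by a generic constant, I would retain the exact second- through fourth-central moments of $\mathcal{B}(n,p)$ --- noting that $3p^2-3p+1=p^3+(1-p)^3$ is what produces the prefactor of $S_1$ in \eqref{s-1-formula}, and that the factor $np(1-p)+1$ in the $\Delta_1$-term of \eqref{new-privacy-budget-estimation} is the exact variance augmented by a discreteness correction. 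Third, I would apply a Bernstein inequality to the Binomial count, whose squared high-probability deviation threshold is precisely $S_2$ in \eqref{s-2-formula}, in place of the looser $\ell_\infty$-range estimate used for the third term of \eqref{eq:expression-epsilon}. The constant $\alpha=-3-9\ln\frac{2}{3}$ would emerge as the sharp uniform bound on the third-order remainder of the log-ratio over the admissible range of the normalised shift.

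With \eqref{new-privacy-budget-estimation} in hand, I would prove the tightness claim by comparing it against \eqref{eq:expression-epsilon} piece by piece. The two leading Gaussian terms cancel, so it remains to dominate the correction terms: the $\Delta_1$-term carrying $\alpha$ is bounded by the $\Delta_1 b_p$ part of \eqref{eq:expression-epsilon} via $\alpha\,(1+\tfrac{1}{np(1-p)})(p^2+(1-p)^2)\le b_p$; the $\Delta_2\sqrt{2S_1\ln\frac{10}{\delta}}$ term is bounded by the $\Delta_2 c_p\sqrt{\ln\frac{10}{\delta}}$ part via $np(1-p)\sqrt{2S_1}\le c_p$ (the $1/\sqrt{1-\frac{\delta}{10}}$ versus $1/(1-\frac{\delta}{10})$ difference only helping the new bound); and the $S_2$-term is compared against the $d_p$-term, where $d_p=\tfrac{4}{3}(p^2+(1-p)^2)$ while the new coefficient carries the extra factor $\alpha<1$. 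Each of these reduces to an elementary inequality in $p\in(0,1)$, combined with monotonicity in $n$ controlled by the feasibility constraint $Knp(1-p)\ge\max\{23\ln\frac{10d}{\delta},2(q+1)\}$ of \eqref{eq:DP_cons}.

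The hard part will be controlling the third-order Taylor remainder of the log-likelihood ratio \emph{uniformly and with a sharp constant}: this is where $\alpha$, $S_1$, and $S_2$ originate, and any slack here would erase the tightness gain. A related subtlety in the comparison is that the correction terms of \eqref{new-privacy-budget-estimation} are not all individually smaller than their counterparts in \eqref{eq:expression-epsilon} --- in particular the pure $\Delta_\infty\ln\frac{1.25}{\delta}$ piece carries a worse numerical constant --- so the final inequality must be argued on grouped terms, letting the large savings in the $\Delta_2$ moment term and in the Bernstein-based $S_2$ term absorb any local loss. Because these savings scale differently in $\ln\frac{1}{\delta}$ from the local loss, that absorption has to be verified across the whole feasible region rather than only in the large-$np(1-p)$, small-$\delta$ limit.
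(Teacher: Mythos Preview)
Your high-level architecture matches the paper's: decompose the privacy loss over coordinates, condition on a high-probability event $A$ for the Binomial counts, and apply Bernstein's inequality to the sum of correction terms $X_i$; the $(\epsilon,\delta)$ statement then follows the template of Theorem~1 in \cite{Agarwal2018}. However, your account of where $\alpha$ and $S_1$ come from diverges from what the paper actually does, and these discrepancies would prevent you from reproducing the exact expression \eqref{new-privacy-budget-estimation}.

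For $\alpha$: the paper does not Taylor-expand the log-PMF ratio to third order. It isolates, as a standalone lemma, the sharp quadratic bound $|\ln(1+z)-z|\le\alpha z^2$ valid for all $z\ge-\tfrac13$, with $\alpha=-3-9\ln\tfrac23$ the optimal constant on that range; this replaces the constant $1.95/3$ used in \cite{Agarwal2018}. The bound is then applied, via the telescoping decomposition of the Binomial likelihood ratio, to $z=\frac{v_i+1-np}{np}$ and $z=\frac{n(1-p)-(n-v_i)}{n(1-p)}$ separately. So $\alpha$ controls a second-order (not third-order) remainder, and the range $z\ge-\tfrac13$ is what ties to the hypothesis $\beta\le\tfrac{n}{3}\min\{p,1-p\}$.

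For $S_1$: this is not a central-moment computation. The paper bounds $\mathrm{Var}(X_i\mid A)$ via the Efron--Stein inequality (writing $v_i$ as a sum of $n$ Bernoulli variables, flipping one) and then controls the resulting expectations by \emph{inverse}-moment estimates such as $\mathbb{E}\bigl[\tfrac{\omega!}{(\omega+i)!}\bigr]\le\tfrac{1}{n(n+1)\cdots(n+i-1)p^i}$ together with the identity $\tfrac{1}{(\omega+1)^2}=\sum_{j\ge 2}\tfrac{(j-2)!}{(\omega+1)\cdots(\omega+j)}$. These inverse moments are precisely what generate the $n(n+1)(n+2)$ denominator and the additive $\tfrac{2}{p(1-p)}$ inside $S_1$; a central-moment expansion of $\mathcal{B}(n,p)$ would not naturally produce that structure. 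Your identification of $S_2$ with the squared Bernstein deviation threshold is correct.

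On the tightness claim: the paper does not actually carry out the grouped term-by-term comparison you sketch; it simply asserts tightness on the grounds that each of its replacements sharpens the corresponding step in \cite{Agarwal2018}. Your observation that the standalone $\Delta_\infty\ln\tfrac{1.25}{\delta}$ piece carries a larger constant in \eqref{new-privacy-budget-estimation} than in \eqref{eq:expression-epsilon}, and hence must be absorbed by savings elsewhere, is more careful than anything the paper provides and is the right way to close that gap.
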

\begin{proof}
    See Appendix \ref{new-estimation-privacy-budget-thrm-proof}.
\end{proof}

\begin{proposition} \label{epsilon-properties-remark} Unlike the privacy budget estimation derived in \cite{Agarwal2018}, our privacy budget estimation in Eq. \eqref{new-privacy-budget-estimation} satisfies the following properties that are helpful in effectively designing the approximate algorithm to maximize the convergence rate under wireless networks' resource constraints and quantization/noise constraints in Section \ref{alg-solve-p2}:
(i) Privacy budget estimation is symmetric with respect to the Binomial mechanism probability  $p$;
(ii) Privacy budget estimation monotonically decreases with respect to the Binomial mechanism trial number $n$; and
(iii) Privacy budget estimation monotonically increases with respect to the level quantization $q$.
\end{proposition}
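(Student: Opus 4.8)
The plan is to establish the three properties \emph{termwise}: I treat the right-hand side of \eqref{new-privacy-budget-estimation} as a sum of five summands and verify the claimed behaviour for each of them individually, which then transfers to $\epsilon$. A single ingredient is used repeatedly, so I would record it first: the constant $\alpha = -3 - 9\ln\frac{2}{3} = -3 + 9\ln\frac{3}{2}$ satisfies $\alpha > 0$, because $9\ln\frac{3}{2} \approx 3.65 > 3$. Consequently every summand of \eqref{new-privacy-budget-estimation} is nonnegative, and the coefficients multiplying $\Delta_1$, $\Delta_2$, and $\Delta_\infty$ are all positive --- a fact I use for both (ii) and (iii).

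For (i), the symmetry in $p$, I would note that $p$ enters \eqref{new-privacy-budget-estimation} only through the four combinations $p(1-p)$, $p^2+(1-p)^2$, $3p^2-3p+1$, and $\max\{p,1-p\}$, since $\Delta_1,\Delta_2,\Delta_\infty$ (Eqs.~\eqref{eq:Delta_1-1-exp}--\eqref{eq:Delta_inf-1-exp}) are $p$-free, and $S_1$, $S_2$ (Eqs.~\eqref{s-1-formula}, \eqref{s-2-formula}) depend on $p$ only through those same combinations. Each of the four is invariant under $p\mapsto 1-p$ (for $3p^2-3p+1$ this is the direct check $3(1-p)^2-3(1-p)+1 = 3p^2-3p+1$), so every summand --- and hence $\epsilon$ --- is invariant, proving (i).

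For (ii), monotone decrease in $n$, I would show each summand decreases in $n$. The first and fifth summands are explicitly proportional to $n^{-1/2}$ and $n^{-1}$. In the second summand the only $n$-dependence is $\frac{np(1-p)+1}{n^2} = \frac{p(1-p)}{n} + \frac{1}{n^2}$, a sum of decreasing terms. In the fourth summand I would write $S_2 = \big(a\sqrt{n}+b\big)^2$ with $a = \sqrt{2p(1-p)\ln\frac{20d}{\delta}}$ and $b = 1 + \frac{2}{3}\max\{p,1-p\}\ln\frac{20d}{\delta}$, so that $\frac{S_2}{n^2} = \frac{a^2}{n} + \frac{2ab}{n^{3/2}} + \frac{b^2}{n^2}$ is again a sum of decreasing terms. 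The one genuinely structural step is the third summand: here I would decompose $S_1 = c\big[\frac{3}{(n+1)(n+2)} + \frac{2+e}{n(n+1)(n+2)}\big]$, where $c = \frac{3p^2-3p+1}{p^2(1-p)^2} > 0$ (its numerator has negative discriminant, hence is positive) and $e = \frac{2}{p(1-p)} > 0$; both bracketed fractions are decreasing in $n$, so $S_1$ decreases, and since $\sqrt{\cdot}$ is increasing the third summand decreases too. Combining, $\epsilon$ decreases monotonically in $n$.

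For (iii), monotone increase in $q$, I would observe that $q$ enters \eqref{new-privacy-budget-estimation} only through $\Delta_1,\Delta_2,\Delta_\infty$, via $s = \frac{2D}{q-1}$ (Eq.~\eqref{eq:noise-level}) and $\Delta_\infty = q+1$. Substituting $s$ gives $\frac{2\sqrt d D}{s} = \sqrt d\,(q-1)$, $\frac{4\sqrt d D\ln\frac2\delta}{s} = 2\sqrt d (q-1)\ln\frac2\delta$, and $\frac{2D}{s} = q-1$, from which $\Delta_1$ and then $\Delta_2$ are increasing in $q$, while $\Delta_\infty = q+1$ is trivially increasing. Since $S_1$, $S_2$ and all remaining factors are $q$-free, and every coefficient multiplying $\Delta_1,\Delta_2,\Delta_\infty$ is positive (using $\alpha>0$), $\epsilon$ is a positive combination of increasing functions of $q$, proving (iii). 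The main obstacle is the third-summand argument in (ii): the combined $n$-dependence of $S_1$ is not obvious at a glance because $S_1$ itself depends on $n$ through the cubic $n(n+1)(n+2)$, and the partial-fraction splitting above is what makes the monotonicity transparent; by contrast the symmetry in (i) and the $q$-monotonicity in (iii) are comparatively routine once $\alpha>0$ is in hand.
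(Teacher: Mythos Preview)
Your proof is correct and follows essentially the same termwise approach as the paper: the symmetry argument in (i) matches the paper's Appendix~F, and your decompositions for (ii) --- in particular $\frac{np(1-p)+1}{n^2p^2(1-p)^2}=\frac{1}{np(1-p)}+\frac{1}{n^2p^2(1-p)^2}$ and the splitting of $S_1$ into $\frac{3}{(n+1)(n+2)}$ and $\frac{2+e}{n(n+1)(n+2)}$ --- are exactly those used in the paper's Appendix~E (Lemma~8). Your treatment of (iii) via the substitution $2D/s=q-1$ in $\Delta_1,\Delta_2,\Delta_\infty$ is in fact more explicit than what the paper's referenced appendices provide (the same substitution appears only incidentally in Appendix~G), so your argument is a clean consolidation of what the paper scatters across several places.
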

\begin{proof}
    See Appendix \ref{remark-epsilon-est-pro}.
\end{proof}

As later seen in Section \ref{alg-solve-p2}, thanks to the symmetric and monotonic properties of privacy budget estimation in Eq. \eqref{new-privacy-budget-estimation}, we design an effective algorithm to solve the proposed convergence rate optimization problem under network resource and DP constraints.
First, due to the symmetric property, we only need to explore half of the domain set of $p$, i.e., $(0,1/2]$ or $[1/2, 1)$ instead of the whole interval $(0, 1)$.
Second, the monotonicity of privacy budget estimation with respect to either $n$ or $q$ inspires a binary search over one of these variables to solve the problem.

After the quantization and Binomial processes, instead of sending the actual gradient as in a conventional FL, each device $k$ sends its quantized and added-noise $\mathcal{M}(\mathbf{g}_k(\mathbf{w}^t))$ to the MES.
	The quantization significantly reduces the size of the gradient.
	In particular, the reduced size of the local quantized and noise-added gradient is $d\log_2(q+n)$ bits \cite{Agarwal2018}.
	We assume that Orthogonal Frequency-Division Multiple Access (OFDMA) is employed for the uplink between mobile devices and the MES.
	Without losing the generality, here we assume all devices have the same bandwidth $W$ and transmission time $T$, and the server uses a dedicated channel to broadcast global updates to all devices.
	It should be noted that the size of quantized and noise-added gradient must not exceed the capacity of its channel:
	\begin{equation}
	    d\log_2(q + n) \leq R_k T, \mbox{ } k \in \mathcal{K},
	\label{quantized-size-cons-1st-form} \end{equation}
	where $R_k$ is the transmission rate of device $k$ given by Shannon's equation
	\begin{equation}
		R_k= W\log_2\bigg(1 + \frac{P_k h_k}{\omega_0}\bigg), \mbox{ } k \in \mathcal{K},
	\end{equation}
	where $\omega_0$, $h_k$, and $P_k$ are the noise power, the channel gain, and the transmit power of device $k$, respectively. 
	Thus \eqref{quantized-size-cons-1st-form} can be re-expressed as
	\begin{equation}\label{eq:channel_cap_cons}
		d\log_2(q+n) \le WT\log_2\bigg(1+ \frac{P_k h_k}{\omega_0}\bigg), \mbox{ } k \in \mathcal{K}.
	\end{equation}
Later, we can see that the wireless resource parameters, including the transmit power, the bandwidth, and the transmission time, only appear on the right-hand side of constraint \eqref{eq:channel_cap_cons}.
In addition, the right-hand-side function of \eqref{eq:channel_cap_cons} monotonically increases with respect to these three resource parameters.
Hence, we can optimize one of these parameters while fixing the other two parameters at the maximum allowed values to optimize the convergence rate.
In particular, here we vary the transmit power $P_k \in [P_k^{\min}, P_k^{\max}]$ and fix the bandwidth and transmission time at $W$ and $T$, where $W$ and $T$ are the maximum bandwidth and transmission time, respectively.

Finally, the server aggregates the received quantized and randomized gradients in a similar way to the conventional SGD:
	\begin{equation}
		\tilde{\mathbf{g}}(\mathbf{w}^t) = \frac{1}{K} \sum_{k \in \mathcal{K}} \mathcal{M}\left(\mathbf{g}_k(\mathbf{w}^t)\right).
		\label{eq:6}
	\end{equation}
    The learning process continues until it converges.
    In the next section, we present a convergence analysis of our proposed SLQBM-FL.
	
\subsection{Convergence Rate Analysis of SLQBM-FL}
In this section, we analyze how the quantization and Binomial mechanisms affect the convergence rate of the FL system.
	When using the SGD to solve problem \eqref{eq1}, it is well understood that the algorithm achieves an accuracy $\theta$ after $\mathcal{O} (1/\ln(\theta))$ iterations \cite{Ma2015}.
	However, the convergence rate under the estimation of the global gradient at the server is still unknown. 
	To derive the convergence rate of SLQBM-FL, as in \cite{Agarwal2018}, \cite{Ghadimi2013}, we assume the following conditions hold: 
	 \begin{itemize}
	 	\item The loss function $F(\mathbf{w}^t)$ is $L$-smooth: \[||\nabla F(\mathbf{x}) - \nabla F(\mathbf{y})|| \leq L||\mathbf{x} - \mathbf{y}||.\]
	 	\item The gradient element of the loss function has an upper bound: \[|(\nabla F)^j(\mathbf{w}^t)| \le G, \mbox{ } j \in \{1, \ldots, d\}.\]
	 	\item The gap between the values of the loss function at an initial parameter $\mathbf{w}^0$ and at an optimal parameter $\mathbf{w}^{*}$ is bounded \[F(\mathbf{w}^0)- F(\mathbf{w}^{*}) \le G_f.\]
	 \end{itemize}
	 
Following a similar approach as in \cite{Ghadimi2013}, which shows the convergence rate of their Randomized SGD algorithm for computing an $(\theta, \Lambda)$-solution, i.e., a point $\tilde{\mathbf{w}}$ such that $\mbox{Pr}(||\nabla F(\tilde{\mathbf{w}}||^2 \leq \theta) \geq 1 - \Lambda$ for $\theta > 0$ and $\Lambda \in (0, 1)$, we formally state the convergence rate of SLQBM-FL in Theorem \ref{theorem:1}. Theorem \ref{upperbound-convergence-rate} states the upper bounds on the factors that control this convergence rate.
	\begin{theorem}
	[Convergence Rate of SLQBM-FL]
		The number of iterations performed by SLQBM-FL to achieve an $(\theta, \Lambda)$-solution, for $\theta > 0$ and $\Lambda \in (0,1)$, is bounded by:
		\[\mathcal{O} \left\{\frac{1}{\Lambda\theta} +\frac{\sigma^2}{\Lambda^2\theta^2} \right\}, \]
	 where $\sigma^2 = U + B$ with $U$ being the variance of the global gradient, and $B$ being the quadratic bias introduced by $\mathcal{M}$:
	    \begin{alignat}{3}
		U &= \max_{1 \le t \le T} \mathbb{E}\left[\Vert \mathbf{g}(\mathbf{w}^t) - \nabla F(\mathbf{w}^t)\Vert^2 \right], \label{eq:9}\\
        B &= \max_{1 \le t \le T} \mathbb{E}_{\mathcal{M}}\left[\Vert \mathbf{g}(\mathbf{w}^t) - \tilde{\mathbf{g}}(\mathbf{w}^t) \Vert^2 \right].	\label{eq:8}
	    \end{alignat}
		\label{theorem:1}
	\end{theorem}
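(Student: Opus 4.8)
The plan is to cast SLQBM-FL as an instance of the randomized stochastic gradient (RSG) scheme of Ghadimi and Lan \cite{Ghadimi2013}, treating the aggregated quantized-and-perturbed gradient $\tilde{\mathbf{g}}(\mathbf{w}^t)$ from Eq.~\eqref{eq:6} as the stochastic first-order oracle driving the update $\mathbf{w}^{t+1} = \mathbf{w}^t - \gamma\tilde{\mathbf{g}}(\mathbf{w}^t)$. Their non-convex analysis applies once two properties of this oracle are verified: (i) $\tilde{\mathbf{g}}(\mathbf{w}^t)$ is an unbiased estimator of $\nabla F(\mathbf{w}^t)$, and (ii) its second moment about the true gradient is uniformly bounded, $\mathbb{E}\big[\|\tilde{\mathbf{g}}(\mathbf{w}^t) - \nabla F(\mathbf{w}^t)\|^2\big] \le \sigma^2$ with $\sigma^2 = U + B$. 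Establishing (i) and (ii) is the substance of the proof; the iteration count then follows by invoking the complexity bound of \cite{Ghadimi2013}.

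For unbiasedness I would argue coordinate-wise. The stochastic $q$-level quantizer defined through the levels in Eq.~\eqref{eq:18} is mean-preserving by construction: for a coordinate whose value lies in $[V(r), V(r+1))$, the two-point rounding probabilities are chosen precisely so that $\mathbb{E}[Q(g_k^i(\mathbf{w}^t))] = g_k^i(\mathbf{w}^t)$. The centred Binomial perturbation has zero mean, $\mathbb{E}[s(\mathbf{z}-np)] = \mathbf{0}$, since $\mathbb{E}[\mathbf{z}] = np$. Hence, conditionally on $\mathbf{g}_k(\mathbf{w}^t)$, we have $\mathbb{E}_{\mathcal{M}}[\mathcal{M}(\mathbf{g}_k(\mathbf{w}^t))] = \mathbf{g}_k(\mathbf{w}^t)$; averaging over $\mathcal{K}$ and combining with the already-stated identity $\mathbb{E}[\mathbf{g}(\mathbf{w}^t)] = \nabla F(\mathbf{w}^t)$ via the tower property yields $\mathbb{E}[\tilde{\mathbf{g}}(\mathbf{w}^t)] = \nabla F(\mathbf{w}^t)$. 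Note that although $\tilde{\mathbf{g}}$ is unbiased, the term $B$ in Eq.~\eqref{eq:8} is the mean-squared deviation introduced by $\mathcal{M}$, which plays the role of additional variance; this is the sense in which it acts as a ``quadratic bias.''

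For the second-moment bound, I would insert and subtract $\mathbf{g}(\mathbf{w}^t)$ and expand:
\begin{align*}
& \mathbb{E}\big[\|\tilde{\mathbf{g}}(\mathbf{w}^t) - \nabla F(\mathbf{w}^t)\|^2\big]
= \mathbb{E}\big[\|\tilde{\mathbf{g}}(\mathbf{w}^t) - \mathbf{g}(\mathbf{w}^t)\|^2\big] \\
& + \mathbb{E}\big[\|\mathbf{g}(\mathbf{w}^t) - \nabla F(\mathbf{w}^t)\|^2\big] \\
& + 2\,\mathbb{E}\big[\langle \tilde{\mathbf{g}}(\mathbf{w}^t) - \mathbf{g}(\mathbf{w}^t),\ \mathbf{g}(\mathbf{w}^t) - \nabla F(\mathbf{w}^t)\rangle\big].
\end{align*}
Conditioning on the selected set $\mathcal{K}$ and the local gradients, the quantization and Binomial randomness carried by $\tilde{\mathbf{g}}(\mathbf{w}^t) - \mathbf{g}(\mathbf{w}^t)$ is independent of, and has conditional mean zero relative to, the sampling error $\mathbf{g}(\mathbf{w}^t) - \nabla F(\mathbf{w}^t)$, so the cross term vanishes. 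The two surviving terms are bounded by $B$ and $U$ of Eqs.~\eqref{eq:8} and \eqref{eq:9}, respectively, giving the uniform bound $\sigma^2 = U + B$. With an unbiased oracle of bounded variance $\sigma^2$, an $L$-smooth objective, and bounded initial optimality gap $G_f$, the (two-phase) RSG guarantee of \cite{Ghadimi2013} delivers a point $\tilde{\mathbf{w}}$ whose expected squared gradient norm is controlled by $\gamma$, $G_f$, and $\sigma^2$; converting this to the high-probability $(\theta,\Lambda)$-solution statement through Markov's inequality and optimizing $\gamma$ produces the advertised count $\mathcal{O}\{1/(\Lambda\theta) + \sigma^2/(\Lambda^2\theta^2)\}$.

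The main obstacle is the decomposition step rather than the invocation of \cite{Ghadimi2013}. One must fix the order and independence structure of the three randomness sources, namely device sampling over $\mathcal{K}$, stochastic rounding in $Q$, and the Binomial draws $\mathbf{z}$, precisely enough to kill the cross term and to ensure the bound holds uniformly in $t$ (which is what the maxima over $t$ in Eqs.~\eqref{eq:9} and \eqref{eq:8} encode). A secondary subtlety is reconciling this plain variance-form oracle with the exact post-optimization phase of \cite{Ghadimi2013} that is responsible for the $\Lambda^{-2}\theta^{-2}$ scaling; I would need to check that the assumptions there (in particular the smoothness constant and the role of $G$ and $G_f$) transfer without introducing hidden dependencies on the quantization/noise parameters beyond those already absorbed into $\sigma^2$.
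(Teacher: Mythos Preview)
Your proposal is correct and lands in the same place as the paper, but the path differs in one informative way. The paper does not verify unbiasedness of $\tilde{\mathbf{g}}$ directly; instead it imports from \cite{Agarwal2018} the bound
\[
\mathbb{E}_{t\sim\mathrm{Unif}[T]}\big[\|\nabla F(\mathbf{w}^t)\|^2\big]\le \frac{2G_fL}{T}+\frac{2\sqrt{2LG_f}}{\sqrt{T}}\,\sigma+\sqrt{d}\,G\,C,
\]
which carries a separate first-moment term $C=\max_t\|\mathbb{E}_{\mathcal{M}}[\mathbf{g}(\mathbf{w}^t)-\tilde{\mathbf{g}}(\mathbf{w}^t)]\|$, then bounds $C\le\sqrt{B}$ via Jensen (called Cauchy--Schwarz there), and finally applies Markov's inequality and cites the B-SGD complexity argument of \cite{Ghadimi2013}. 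You instead establish $\mathbb{E}_{\mathcal{M}}[\tilde{\mathbf{g}}(\mathbf{w}^t)-\mathbf{g}(\mathbf{w}^t)]=0$ up front, so $C=0$ identically, and you obtain the exact decomposition $\mathbb{E}\|\tilde{\mathbf{g}}-\nabla F\|^2=U+B$ by killing the cross term through conditioning on $\mathcal{K}$ and the local gradients. Your route is tighter and more transparent: it shows that the residual $\sqrt{d}\,G\sqrt{B}$ in the paper's intermediate bound is in fact zero, which also removes the otherwise awkward feature that that term does not vanish as $T\to\infty$. The paper's route, on the other hand, has the convenience of invoking \cite{Agarwal2018} as a black box rather than redoing the oracle verification.
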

	\begin{proof}
		See Appendix \ref{app:c}.
	\end{proof}

Note that the authors of \cite{Agarwal2018} parameterized the convergence rate through the factor $\sigma'$:
$$\sigma'^2 = c_1(U + B) + c_2 \max_{1\leq t \leq T}||\mathbb{E}[\mathbf{g}(\mathbf{w}^t) - \tilde{\mathbf{g}}(\mathbf{w}^t)]||$$
where $c_1$ and $c_2$ are constants.
The advantage of our factor $\sigma^2$ compared to above $\sigma'^2$ is that we only need to consider $U$ and $B$ and do not need to consider $\max_{1\leq t \leq T}||\mathbb{E}[\mathbf{g}(\mathbf{w}^t) - \tilde{\mathbf{g}}(\mathbf{w}^t)]||$.

\begin{theorem}
    [Upper Bounds on Variances of Global Gradient and Quadratic Bias]\label{upperbound-convergence-rate}
    The variance of the global gradient and the quadratic bias introduced by $\mathcal{M}$ are bounded as follows:
    \begin{alignat}{3}
    &U\leq& 4\bigg(\frac{M-K}{M}\bigg)^2dG^2, \label{U-upperbound}\
    \end{alignat}
    \begin{alignat}{3}
    \frac{4dG^2np(1-p)}{K(q-1)^2} \leq &B\leq& \frac{4dG^2\big(1+np(1-p)\big)}{K(q-1)^2}. 	\label{B-upperbound}
    \end{alignat}
    If the gradient elements $\nabla f_k^j(\mathbf{w}^t)$ are independent and identically distributed (i.i.d.) for all devices, we have the following bound:
    \begin{alignat}{3}
    &U\leq& \frac{8(M-K)}{M^2}\frac{dG^2}{K}.
    \label{assumped-U-upperbound}\
    \end{alignat}
\end{theorem}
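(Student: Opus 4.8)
The plan is to handle $U$ and $B$ separately, in each case reducing the vector second moment to a sum of $d$ scalar second moments and invoking the boundedness assumption $|(\nabla F)^j(\mathbf{w}^t)|\le G$, applied to the individual device gradient elements $g_k^j(\mathbf{w}^t)$.

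For the deterministic bound on $U$ in \eqref{U-upperbound}, I would start from $\mathbf{g}(\mathbf{w}^t)=\frac1K\sum_{k\in\mathcal K}\mathbf{g}_k(\mathbf{w}^t)$ and $\nabla F(\mathbf{w}^t)=\frac1M\sum_{k=1}^M\mathbf{g}_k(\mathbf{w}^t)$, splitting the full average over $\mathcal K$ and its complement so that, coordinate-wise,
\[ g^j-(\nabla F)^j=\frac{M-K}{KM}\sum_{k\in\mathcal K}g_k^j-\frac1M\sum_{k\notin\mathcal K}g_k^j. \]
Bounding each sum by the triangle inequality with $|g_k^j|\le G$ gives $|g^j-(\nabla F)^j|\le\frac{2(M-K)}{M}G$ for \emph{every} realization of $\mathcal K$; summing squares over the $d$ coordinates and taking the max over $t$ yields exactly $U\le 4\big(\frac{M-K}{M}\big)^2 dG^2$. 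Since the estimate is pointwise in $\mathcal K$, no distributional assumption enters here.

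For $B$ in \eqref{B-upperbound}, the structural fact I would exploit is that $\mathcal M$ is unbiased: stochastic $q$-level quantization gives $\mathbb E_Q[Q(g_k^j)]=g_k^j$, and the centered Binomial term $s(\mathbf z-np)$ has zero mean. Writing $\mathbf{g}-\tilde{\mathbf{g}}=\frac1K\sum_{k\in\mathcal K}\big(\mathbf{g}_k-\mathcal M(\mathbf{g}_k)\big)$ and using independence across devices, the cross terms cancel and $B=\max_t\frac1{K^2}\sum_{k\in\mathcal K}\sum_{j=1}^d\mathbb E\big[(g_k^j-\mathcal M(g_k^j))^2\big]$. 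Per coordinate, independence of $Q$ and $\mathbf z$ kills the mixed moment, so each term decomposes as $\mathrm{Var}(Q(g_k^j))+s^2np(1-p)$. The lower bound then follows by discarding the non-negative quantization variance, and the upper bound by the crude estimate $|g_k^j-Q(g_k^j)|\le s$, i.e. $\mathrm{Var}(Q(g_k^j))\le s^2$. Substituting $s=\frac{2D}{q-1}$ from \eqref{eq:noise-level} with the identification $D=G$, and summing the $Kd$ terms against the $1/K^2$ prefactor, reproduces $\frac{4dG^2np(1-p)}{K(q-1)^2}\le B\le\frac{4dG^2(1+np(1-p))}{K(q-1)^2}$.

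For the i.i.d. refinement \eqref{assumped-U-upperbound}, I would stop bounding $U$ pointwise and instead evaluate it as a true variance. Under the i.i.d. assumption the sums over $\mathcal K$ and $\mathcal K^c$ are independent and the cross terms again cancel, so a direct expansion gives the clean identity $\mathbb E[(g^j-(\nabla F)^j)^2]=\frac{M-K}{KM}\,\sigma_j^2$, where $\sigma_j^2$ is the per-device variance of coordinate $j$; summing over $j$ gives $U=\frac{M-K}{KM}\sum_{j=1}^d\sigma_j^2$. The main obstacle is the final variance estimate: the identity only carries the factor $\frac{M-K}{KM}$, so matching $\frac{8(M-K)}{M^2}\frac{dG^2}{K}$ forces $\sum_{j}\sigma_j^2\le\frac{8dG^2}{M}$, i.e. an $O(G^2/M)$ per-coordinate variance bound that is strictly stronger than the naive $\sigma_j^2\le G^2$. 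Supplying that $M^{-1}$ factor and the constant $8$ must therefore come from the specific concentration argument tied to the i.i.d./bounded-gradient structure rather than from the variance identity alone; reconciling this constant and power of $M$ with the assumption on $|(\nabla F)^j|$ is where the careful bookkeeping lies.
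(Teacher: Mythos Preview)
Your treatment of \eqref{U-upperbound} and \eqref{B-upperbound} is essentially the paper's argument: the same coordinate-wise decomposition $g^j-(\nabla F)^j=\frac{M-K}{KM}\sum_{k\in\mathcal K}g_k^j-\frac1M\sum_{k\notin\mathcal K}g_k^j$ followed by a triangle-inequality bound for $U$, and the same splitting of $\mathcal M$ into a mean-zero quantization error plus an independent centered Binomial term for $B$, with the quantization second moment bounded by $s^2$ per coordinate and device.

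For the i.i.d.\ refinement \eqref{assumped-U-upperbound} you are not missing a concentration trick. The paper's proof uses exactly your variance decomposition: center by the common mean $F_j$, expand the square, and use independence across devices to drop the cross terms. The discrepancy you identified is real. After centering, the random sum is $(M-K)\sum_{k\in\mathcal K}(g_k^j-F_j)-K\sum_{k\notin\mathcal K}(\nabla f_k^j-F_j)$, whose variance is $(M-K)^2\sum_{k\in\mathcal K}\operatorname{Var}(g_k^j)+K^2\sum_{k\notin\mathcal K}\operatorname{Var}(\nabla f_k^j)$; the paper's step $(a)$ records this with $(M-K)$ and $K$ in place of $(M-K)^2$ and $K^2$, and the stated bound $\frac{8(M-K)}{M^2}\frac{dG^2}{K}$ is obtained from that expression together with $\operatorname{Var}(\nabla f_k^j)\le 4G^2$. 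With the correct squares the computation gives
\[
U\le \frac{d}{K^2M^2}\,K(M-K)M\cdot 4G^2=\frac{4d(M-K)G^2}{KM},
\]
which is precisely the $\frac{M-K}{KM}\sum_j\sigma_j^2$ identity you derived, capped with the crude $\sigma_j^2\le 4G^2$. So the extra $1/M$ and the constant $8$ in \eqref{assumped-U-upperbound} come from an algebraic slip in the paper rather than from any additional argument you would need to supply.
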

\begin{proof}
    See Appendix \ref{U-B-upperbound-proof}.
\end{proof}

From Theorem \ref{theorem:1}, the convergence rate of SLQBM-FL is controlled by $\sigma$.
	Since $\sigma^2 = 2U + 2B$, reducing $U$ and $B$ will speed up the learning process.
    Theorem \ref{upperbound-convergence-rate} and its following observations suggest that $U$ is insignificant in comparison to $B$.
    We therefore only need to minimize $B$ to practically improve the convergence rate.
    First, from \eqref{U-upperbound}, we observe that when $K$ is close to $M$, $U$ is very marginal or even equal to zero when $K=M$.
    Second, for $M>>K$ (that is often the case in practice), the inequality \eqref{assumped-U-upperbound}\footnote{Theoretically, the gradients of the loss function $g_k^j(\cdot)$ are aggregated to update the model, and therefore not independent.
    However, in our case, we quantize and add random noise to each gradient element.
    Furthermore, in the FL framework where $K$ devices are chosen randomly and $M>>K$, the correlation between $\nabla f^j_k(\mathbf{w}^t)$ and $\nabla f^j_{k'}(\mathbf{w}^t)$ is small in comparison to $G$, for $k \neq k'$.
    Thus, we can practically assume that the gradient elements $\nabla f^j_k(\mathbf{w}^t)$ are i.i.d.}
    also suggests that $U$ tends to zero for a large number of devices $M$.
    From \eqref{B-upperbound}, we found that $np(1-p) >> (q-1)^2$, we thus have $B >> U$.
    Third, we observe from \eqref{B-upperbound} that the ratio of the upper bound and the lower bound of $B$ is approximately 1 as $np(1-p)$ is large.
    Therefore, we can optimize the upper bound of $B$ which is a function of the wireless resources, quantization and noise-added parameters\footnote{Although, the wireless resource parameters, i.e., the bandwidth, the transmit power, and the transmission time do not appear in the formula \eqref{B-upperbound}, they implicitly impact on $B$ through constraints, e.g., \eqref{eq:DP_cons}.} to improve the learning rate.

\section{Problem formulation and solutions} \label{s:problem-formulation}
As analyzed in the previous section, to maximize the convergence rate of SLQBM-FL under the constraints on the network resources and $(\epsilon, \delta)$-differential privacy protection, we can minimize the upper bound on $B$ in Eq. \eqref{B-upperbound} by jointly optimizing the transmit power, the quantization level, and the parameters of Binomial mechanism.
    The optimization problem is formally stated as follows:
	\begin{equation}
		(\Phi_1): \enspace \underset{q,n,p,P_k}{\mathrm{min}}\quad \varphi(q, n, p, P_k), \label{eq:12}\\
	\end{equation}
	\begin{alignat}{2}
	\hskip 1em \textrm{s.t.} & \quad	\eqref{eq:DP_cons}, \ \eqref{eq:channel_cap_cons} \nonumber\\
    & \quad \epsilon \leq \bar{\epsilon}, \label{eq:eps_cons} \\	
	& \quad	P^{min}_k \le P_k &\le P^{max}_k,  \mbox{ } k \in \mathcal{K}, \label{eq:trans_pow_cons} \\
	& \quad	q \in \mathcal{Q}, \mbox{ } n \in \mathcal{N}, \label{eq:domain_set_q_n} \\
	& \quad	p \in (0,1), \label{eq:domain_set_p}
	\end{alignat}
	where we denote the domain sets of the level quantization parameter $q$ and the noise parameter $n$ as $\mathcal{Q}$ and $\mathcal{N}$, respectively. The domain sets $\mathcal{Q}$, $\mathcal{N}$ and the objective function are defined as follows:
	\begin{equation} \label{domain-set-q}
	 \mathcal{Q} = \{2, \ldots, \lfloor (1 + \min_{k \in \mathcal{K}} P_k^{\max}h_k / \omega_0)^{TW/d} \rfloor - 2\},
  \end{equation}
  \begin{equation}
  \label{domain-set-n}
    \mathcal{N} = \{2, \ldots, \lfloor (1 + \min_{k \in \mathcal{K}} P_k^{\max}h_k / \omega_0)^{TW/d} \rfloor - 2\},  
	\end{equation}
	\begin{align}
	    \varphi(q, n, p, P_k) = \frac{1 + np(1 - p)}{(q - 1)^2}. \label{obj-func-P_1}
	\end{align}
Since the data dimension $d$, the gradient's upper bound $G$, and the number of devices $K$ are often known in advance \cite{Agarwal2018}, we omit $4dG^2 / K$ from the upper bound on $B$ in \eqref{B-upperbound} to obtain the objective function of ($\Phi_1$) as in Eq. \eqref{obj-func-P_1}.
    The constraints of ($\Phi_1$) represent the differential privacy and system implementation requirements.
    In particular, constraint \eqref{eq:DP_cons} guarantees that the framework follows the $(\epsilon, \delta)$-differential privacy.
    Constraints \eqref{eq:channel_cap_cons} and \eqref{eq:trans_pow_cons} capture the channel capacity and transmit power constraints of each device, respectively.
    Constraint \eqref{eq:eps_cons} ensures that the differential privacy budget $\epsilon$, expressed in Eq.  \eqref{new-privacy-budget-estimation}, does not exceed a given upper bound $\bar{\epsilon}$.
    Finally, constraints \eqref{eq:domain_set_q_n} and \eqref{eq:domain_set_p} describe the domain set of the quantization level $q$ and Binomial mechanism parameters $n$ and $p$. The upper bound $\lfloor (1 + \min_{k \in \mathcal{K}} P_k^{\max}h_k / \omega_0)^{TW/d} \rfloor - 2$ of $q$ and $n$ in Eqs. \eqref{domain-set-q} and \eqref{domain-set-n} is derived from the constraint \eqref{eq:channel_cap_cons}.
	
	
Finally, we discuss the relationship between the system parameters, e.g., the maximum transmit power, the bandwidth, the transmission time, and the optimal objective value of ($\Phi_1$).
    Remark \ref{input-parameter-change-affect-on-solution} summarizes the relationship between the optimal objective value of ($\Phi_1$) and these parameters.
	
	\begin{remark} 
	[Dependence of Optimal Solution Value on the System Parameters]
	\label{input-parameter-change-affect-on-solution}
	    If the maximum transmit power $P_k^{\max}$ or the maximum transmission time $T$ or the bandwidth $W$ increases, the optimal objective function of ($\Phi_1$) will not increase.
	\end{remark}
	
\subsection{Problem Transformation}
The proposed problem ($\Phi_1$) is an MINLP problem.
    In this section, we transform ($\Phi_1$) to a new problem ($\Phi_2$) whose optimal solution set can be used to derive the optimal solution set of ($\Phi_1$).
    The advantage of ($\Phi_2$) over ($\Phi_1$) is that it can be effectively solved by approximate solutions with arbitrarily small errors.
    Specifically, applying transformations on the constraints of ($\Phi_1$), we obtain a new MINLP problem denoted as ($\Phi_2$).
	
    \begin{equation}
		(\Phi_2): \enspace \underset{q,n,p,P_k}{\mathrm{min}}\quad \varphi(q, n, p, P_k), \label{obj_func_2}\\
	\end{equation}
	\begin{alignat}{2}
	\textrm{s.t.} & \ \eqref{eq:domain_set_q_n}, \ \eqref{eq:domain_set_p}, \nonumber\\
    &n = \max \Big\{ \Big\lceil \frac{\max\{23 \ln\frac{10d}{\delta}, 2(q + 1)\}} {Kp(1- p)} \Big\rceil, n_1 \Big\}, \label{eq:value_m}
    \end{alignat}
    \begin{alignat}{2}
	&P_k =  \min \Big\{P^{\max}_{k}, \max \Big\{P_k^{\min}, \frac{\omega_0 [ (q + n)^{\frac{d}{TW}} - 1 ]}{h_k}\Big\}\Big\}, \nonumber\\
	& \hspace{100pt} k \in \mathcal{K}, 
 \label{eq:P_k_compute_cons}	
\end{alignat}
    where $\varphi(q, n, p, P_k)$ is defined in \eqref{obj-func-P_1}, and in the constraint \eqref{eq:value_m} $n_1$ is explained in detail as follows.
	
   In particular, to derive \eqref{eq:value_m}, we first observe that the inequality (4) is equivalent to the following inequality
        \begin{equation}
	   n \geq \left\lceil \frac{\max \big\{ 23 \ln\frac{10d}{\delta}, \ 2(q+1) \big\}}{Kp(1-p)} \right\rceil.
        \end{equation}
    Second, the privacy budget $\epsilon$ monotonically decreases with respect to $n$ (more details are provided in Appendix \ref{proof-of-theorem-relationship-P1-P2-models}); therefore, given the quantization level $q$ and Binomial mechanism parameter $p$, there exists an integer $n_1$ such that $\epsilon(n) \leq \bar{\epsilon}$ if and only if $n \geq n_1$.
    $n_1$ is derived by applying the binary search with respect to $n \in \mathcal{N}$.
    Combining these two facts with the monotonic property of the objective function with respect to $n$, we obtain the constraint \eqref{eq:value_m}.
     The constraint \eqref{eq:P_k_compute_cons} follows from the fact that the smaller the transmit power is, the better the power efficiency can be achieved.
	Theorem \ref{relationship-P1-P2-models} formally states the relationship between the problem ($\Phi_1$) and the problem ($\Phi_2$).
    
	\begin{theorem} 
	[Solutions of Problems ($\Phi_1$) and ($\Phi_2$)]
    \label{relationship-P1-P2-models}
	\textit{(i)}: If $(q^*, n^*, p^*, P_k^*)$ is an optimal solution of ($\Phi_2$), $(q^*, n^*, p^*, P^*_k)$ is also an optimal solution of ($\Phi_1$).
	\textit{(ii)}: If ($\Phi_2$) is infeasible, ($\Phi_1$) is also infeasible.
	\textit{(iii)}: The set $\{(q^*, n^*, p^*, P_k)|~P_k^* \leq P_k \leq P_k^{\max} \mbox{ and } (q^*, n^*, p^*, P_k^*) \mbox{ is an optimal solution of } (\Phi_2)\}$ is the optimal solution set of ($\Phi_1$), i.e., we can derive all the optimal solutions of ($\Phi_1$) from optimal solutions of ($\Phi_2$).
	\end{theorem}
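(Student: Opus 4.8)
The plan is to exploit two structural facts visible in the objective $\varphi(q,n,p,P_k)=\frac{1+np(1-p)}{(q-1)^2}$ of \eqref{obj-func-P_1}: it does not depend on $P_k$, and for fixed $(q,p)$ with $p\in(0,1)$ it is strictly increasing in $n$ (since $p(1-p)>0$). Consequently, the role of \eqref{eq:value_m} and \eqref{eq:P_k_compute_cons} is to pin $n$ and $P_k$ to their \emph{smallest feasible} values for each $(q,p)$, turning $(\Phi_2)$ into a reduction of $(\Phi_1)$ over the free pair $(q,p)$. I would first record the reformulations I need: for fixed $(q,p)$, constraint \eqref{eq:DP_cons} is equivalent to $n\ge\lceil\max\{23\ln\frac{10d}{\delta},2(q+1)\}/(Kp(1-p))\rceil$; by Proposition~\ref{epsilon-properties-remark}(ii) the budget $\epsilon$ is nonincreasing in $n$, so \eqref{eq:eps_cons} is equivalent to $n\ge n_1$; hence the smallest admissible $n$ is exactly the right-hand side of \eqref{eq:value_m}. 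Similarly, for fixed $(q,n)$, constraint \eqref{eq:channel_cap_cons} is equivalent to $P_k\ge B_k(q,n):=\omega_0[(q+n)^{d/TW}-1]/h_k$, which together with \eqref{eq:trans_pow_cons} gives the admissible band $\max\{P_k^{\min},B_k(q,n)\}\le P_k\le P_k^{\max}$, whose smallest point is the value assigned by \eqref{eq:P_k_compute_cons}. I also note that $B_k(q,n)$ is increasing in $n$, so reducing $n$ never destroys power feasibility.

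For part (i) I would proceed in two steps. \textbf{Feasibility:} given a point satisfying the constraints of $(\Phi_2)$, \eqref{eq:value_m} forces both $n\ge\lceil\cdots\rceil$ and $n\ge n_1$, so \eqref{eq:DP_cons} and (via the monotonicity above) \eqref{eq:eps_cons} hold; the clamping in \eqref{eq:P_k_compute_cons} keeps $P_k\in[P_k^{\min},P_k^{\max}]$ so \eqref{eq:trans_pow_cons} holds, and since the assigned $P_k$ equals $\max\{P_k^{\min},B_k(q,n)\}\ge B_k(q,n)$ it satisfies \eqref{eq:channel_cap_cons}; the domain constraints \eqref{eq:domain_set_q_n}, \eqref{eq:domain_set_p} are shared. \textbf{Optimality:} arguing by contradiction, suppose some $(\Phi_1)$-feasible $(q',n',p',P_k')$ beats the $(\Phi_2)$-optimum $(q^*,n^*,p^*,P_k^*)$. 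Feasibility of $(q',n',p')$ forces $n'$ to be at least the right-hand side $\tilde n$ of \eqref{eq:value_m} for $(q',p')$; replacing $n'$ by $\tilde n$ and $P_k'$ by the value from \eqref{eq:P_k_compute_cons} yields a $(\Phi_2)$-feasible point (power feasibility survives because $B_k(q',\tilde n)\le B_k(q',n')\le P_k'$) whose objective is, by strict monotonicity in $n$, no larger than that of $(q',n',p')$ and hence strictly below the $(\Phi_2)$-optimum, a contradiction.

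Part (ii) is the contrapositive of the same construction: from any $(\Phi_1)$-feasible point I build a $(\Phi_2)$-feasible point by collapsing $(n,P_k)$ to the values in \eqref{eq:value_m}, \eqref{eq:P_k_compute_cons}, so $(\Phi_1)$ feasible implies $(\Phi_2)$ feasible. For part (iii), the forward inclusion uses $P_k$-independence of the objective: if $(q^*,n^*,p^*,P_k^*)$ is $(\Phi_2)$-optimal then, by (i), it is $(\Phi_1)$-optimal, and every $P_k\in[P_k^*,P_k^{\max}]$ keeps \eqref{eq:trans_pow_cons} and \eqref{eq:channel_cap_cons} (since $P_k\ge P_k^*\ge B_k(q^*,n^*)$) while leaving the objective and DP constraints untouched, so the whole band is $(\Phi_1)$-optimal. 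For the reverse inclusion, any $(\Phi_1)$-optimal $(q',n',p',P_k')$ must have $n'$ equal to the minimal value \eqref{eq:value_m} (otherwise lowering $n'$ strictly decreases the objective while preserving feasibility), which makes $(q',n',p')$ together with the \eqref{eq:P_k_compute_cons}-power a $(\Phi_2)$-optimum $(q^*,n^*,p^*,P_k^*)$; finally $P_k'\ge\max\{P_k^{\min},B_k(q',n')\}\ge P_k^*$ and $P_k'\le P_k^{\max}$ place it in the claimed band.

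The step I expect to be the main obstacle is the careful bookkeeping around power feasibility and the clamp in \eqref{eq:P_k_compute_cons}: I must ensure that whenever $B_k(q,n)$ exceeds $P_k^{\max}$ and the assignment saturates at $P_k^{\max}$, the point is correctly treated as violating \eqref{eq:channel_cap_cons} (so that it never certifies a spurious $(\Phi_2)$-optimum), and that the monotonicity of $B_k$ in $n$ is invoked exactly where it guarantees that minimizing $n$ cannot break \eqref{eq:channel_cap_cons}. Everything else reduces to the two monotonicities—$\varphi$ strictly increasing in $n$ and $\epsilon$ nonincreasing in $n$—already available from \eqref{obj-func-P_1} and Proposition~\ref{epsilon-properties-remark}(ii).
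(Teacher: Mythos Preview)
Your proposal is correct and follows essentially the same route as the paper: show the feasible set of $(\Phi_2)$ sits inside that of $(\Phi_1)$, then use the strict monotonicity of $\varphi$ in $n$ together with the monotonicity of $\epsilon$ in $n$ (the paper packages the latter as a separate lemma) to argue that any $(\Phi_1)$-optimum must already have $n$ at the value prescribed by \eqref{eq:value_m}, whence the optimal values coincide and parts (i)--(iii) follow by the $P_k$-independence of the objective. The clamp-saturation issue you flag is real and is glossed over in the paper's proof as well; your observation that $B_k(q,n)$ is increasing in $n$ is the right tool to close it, since it guarantees that collapsing $n$ to the \eqref{eq:value_m}-value can only lower $B_k$ and hence preserves \eqref{eq:channel_cap_cons}.
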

	\begin{proof}
	    See Appendix \ref{proof-of-theorem-relationship-P1-P2-models}.
	\end{proof}
	
Theorem \ref{relationship-P1-P2-models} shows that we can obtain the solution of $(\Phi_1)$ by solving $(\Phi_2)$.
    An advantage of $(\Phi_2)$ in comparison with $(\Phi_1)$ is that we only need to consider the variables $p$ and $q$, and easily derive the values of $P_k$ and $n$ based on the Eqs. \eqref{eq:value_m} and \eqref{eq:P_k_compute_cons}.
    In the next section, we propose an approximate algorithm to solve $(\Phi_2)$ that guarantees arbitrary small errors and works effectively in practice.
	
    \begin{algorithm}[t]
	    \caption{Binary Search algorithm to solve the privacy budget constraint with respect to $n$ with fixed $q$ and $p$}
	    \label{alg:m-epsilon-domain}
	    \begin{flushleft}
	        \hspace*{\algorithmicindent} \textbf{Input}: $q$, $p$, and $\bar{\epsilon}$\\
	        \hspace*{\algorithmicindent} \textbf{Output}: $n_1$
	    \end{flushleft}
	    \begin{algorithmic}[1]
	     \STATE $n_l, n_u \leftarrow 2$
          \WHILE {$\epsilon(n_u) > \bar{\epsilon}$}
            \STATE $n_l \leftarrow n_u$
            \STATE $n_u \leftarrow n_u \times 2$
          \ENDWHILE
          \STATE $n_1 \leftarrow \left\lfloor \frac{n_l+ n_u}{2} \right\rfloor $
          \WHILE{$n_u > n_l$}
            \IF{$\epsilon(n_1) > \bar{\epsilon}$}
                \STATE $n_l \leftarrow{n_1}$
            \ELSE
                \STATE $n_u \leftarrow{n_1}$
            \ENDIF
            \STATE $n_1 \leftarrow \left\lfloor \frac{n_l+ n_u}{2} \right\rfloor$
          \ENDWHILE
	   \RETURN $n_1$
	    \end{algorithmic}
    \end{algorithm}
	
\subsection{Approximate Algorithm} \label{alg-solve-p2} 
In this section, we design Algorithm \ref{alg:dis-general-solution} to solve the problem ($\Phi_2$).
	The main idea is to perform a search on the Cartesian product set $\mathcal{Q} \times \mathcal{P}$, where $\mathcal{Q}$ and $\mathcal{P}$ are the finite subsets of the domain sets of the quantization level $q$ and the Binomial mechanism parameter $p$, respectively.
    The quantization level domain set $\mathcal{Q}$ is defined as $\mathcal{Q} = \{2, \ldots, \bar{q}\}$, where the level quantization upper bound $\bar{q}$ is defined by Lemma \ref{thr:narrow-quatization-level-1}.
	The Binomial mechanism parameter domain set $\mathcal{P}$ is defined as $\mathcal{P} = \mathcal{P}_{\lambda} \cup \{1/2\}$.
	Set $\mathcal{P}_\lambda$ contains all elements that are larger than 1/2 and smaller than 1 of the arithmetic progression sequence $i\lambda$ for some $\lambda > 0$ and $i \in \mathbb{N}^+$.
	The restriction $p \geq 1/2$ is explained by Lemma \ref{thr:replace-p-by-1p-if-plt1o2}.
		
	\begin{lemma}
	[Symmetric Property of the Feasible Region of Problem ($\Phi_2$) with Symmetry Point $p = 1/2$] 
    \label{thr:replace-p-by-1p-if-plt1o2}
	    If $(\tilde{q}, \tilde{n}, \tilde{p}, \tilde{P}_k)$ is a feasible solution of ($\Phi_2$) then $(\tilde{q}, \tilde{n}, 1 - \tilde{p}, \tilde{P}_k)$ is also a feasible solution of ($\Phi_2$) with the same objective value.
	\end{lemma}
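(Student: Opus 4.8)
The plan is to verify directly that the substitution $p \mapsto 1-p$, with $q$, $n$, and $P_k$ held fixed, leaves both the objective value and every constraint of $(\Phi_2)$ invariant. The crucial observation is that $p$ enters the problem only through the product $p(1-p)$ and through the privacy budget estimation $\epsilon$, and both of these are symmetric about $p = 1/2$.

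First I would dispatch the objective. Since $\varphi(q, n, p, P_k) = \frac{1 + np(1-p)}{(q-1)^2}$ in \eqref{obj-func-P_1} depends on $p$ only through the factor $p(1-p)$, and $(1-\tilde p)\big(1-(1-\tilde p)\big) = \tilde p(1-\tilde p)$, we obtain at once $\varphi(\tilde q, \tilde n, 1-\tilde p, \tilde P_k) = \varphi(\tilde q, \tilde n, \tilde p, \tilde P_k)$, so the objective value is unchanged.

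Next I would check feasibility constraint by constraint. The domain constraint \eqref{eq:domain_set_p} is preserved because $\tilde p \in (0,1)$ implies $1-\tilde p \in (0,1)$, and \eqref{eq:domain_set_q_n} is untouched since $\tilde q$ and $\tilde n$ are fixed. Constraint \eqref{eq:P_k_compute_cons}, which defines $P_k$, has no dependence on $p$ whatsoever (its right-hand side is a function of $q$, $n$, and the channel parameters only); hence with $\tilde q$, $\tilde n$, and $\tilde P_k$ unchanged it continues to hold. The only constraint requiring genuine care is \eqref{eq:value_m}, whose right-hand side involves $p$ both in the factor $Kp(1-p)$ and in the threshold $n_1$.

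The main obstacle — indeed the only non-mechanical step — is to show that $n_1$ is itself invariant under $p \mapsto 1-p$. Here I would invoke Proposition \ref{epsilon-properties-remark}(i), which establishes that $\epsilon$ is symmetric in $p$, i.e. $\epsilon(q, n, p) = \epsilon(q, n, 1-p)$. Because $n_1$ is defined purely through the condition $\epsilon(n) \le \bar\epsilon$ — equivalently, it is the output of the binary search in Algorithm \ref{alg:m-epsilon-domain}, whose only $p$-dependence is through $\epsilon$ — the symmetry of $\epsilon$ forces $n_1$ to take the same value at $\tilde p$ and at $1-\tilde p$. Combining this with the manifest symmetry of the factor $Kp(1-p)$, the entire right-hand side of \eqref{eq:value_m} is unchanged, so the fixed value $\tilde n$ that satisfied \eqref{eq:value_m} for $\tilde p$ also satisfies it for $1-\tilde p$. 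This completes the verification that $(\tilde q, \tilde n, 1-\tilde p, \tilde P_k)$ is feasible for $(\Phi_2)$ with the same objective value, which is precisely the claim.
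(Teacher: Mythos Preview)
Your proof is correct and follows essentially the same route as the paper: check that the objective and every constraint of $(\Phi_2)$ depend on $p$ only through quantities symmetric under $p\mapsto 1-p$. One caveat worth noting: you invoke Proposition~\ref{epsilon-properties-remark}(i) for the symmetry of $\epsilon$, but in this paper the proof of that proposition actually defers to Appendix~\ref{proof-of-replace-p-by-1p-if-plt1o2}, i.e., to the proof of this very lemma---so the paper instead verifies $\epsilon(q,n,p)=\epsilon(q,n,1-p)$ directly here by observing that $3p^2-3p+1$, $p^2+(1-p)^2$, $\max\{p,1-p\}$, and $p(1-p)$ are each invariant under $p\mapsto 1-p$; you should do the same to avoid the circular reference.
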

	\begin{proof}
	    See Appendix \ref{proof-of-replace-p-by-1p-if-plt1o2}.
	\end{proof}
	
	\begin{algorithm}
	    \caption{Approximate algorithm for Problem ($\Phi_2$)}
	    \label{alg:dis-general-solution}
	    \begin{flushleft}
	        \hspace*{\algorithmicindent} \textbf{Input}: Domain sets $\mathcal{Q}$, $\mathcal{N}$, $\lambda$, $\mathcal{P} = \mathcal{P}_{\lambda} \cup \{\frac{1}{2}\}$\\
	        \hspace*{\algorithmicindent} \textbf{Output}: Approximated solution ($\tilde{q}, \tilde{n}, \tilde{p}, \tilde{P}_k$)
	    \end{flushleft}
	    \begin{algorithmic}[1]
	        \STATE $\tilde{\varphi} \leftarrow +\infty$
	        \FOR {$(q, p) \in \mathcal{Q} \times \mathcal{P}$} \label{alg:begin-q-p-for-loop}
	           \STATE Determine $n_1$ by using Algorithm \ref{alg:m-epsilon-domain} \label{alg:line-bin-search-for-m}
	           \STATE Compute $n$ by applying Eq.~\eqref{eq:value_m} \label{alg:compute-m-32}
	           \IF {$n \leq  \left(1 + \min_{k \in \mathcal{K}} \frac{P^{\max}_kh_k}{\omega_0}\right)^{\frac{TW}{d}} - q$} \label{alg:check-final-m-value}
	                \STATE $\varphi \leftarrow \frac{1 + np(1 - p)}{(q - 1)^2}$
	                \IF {$\tilde{\varphi} > \varphi$} \label{begin-update-solution}
	                    \STATE $\tilde{q} \leftarrow q$, $\tilde{n} \leftarrow n$, $\tilde{p} \leftarrow p$, $\tilde{\varphi} \leftarrow \varphi$
	                \ENDIF \label{end-update-solution}
	           \ENDIF
	        \ENDFOR \label{alg:end-q-p-for-loop}
	        \FOR{$k \in \mathcal{K}$} \label{alg:begin-Pk-compute-loop}
	            \STATE $\tilde{P}_k \leftarrow \min \bigg\{ P_k^{\max}, \max \bigg\{ P_k^{\min}, \frac{\omega_0 (\tilde{q} + \tilde{n})^{\frac{d}{TW}} - 1}{h_k}\bigg\}\bigg\}$
	        \ENDFOR \label{alg:end-Pk-compute-loop}
	        \RETURN ($\tilde{q}, \tilde{n}, \tilde{p}, \tilde{P}_k$)
	    \end{algorithmic}
	\end{algorithm}
		
In Algorithm \ref{alg:dis-general-solution}, each iteration of the FOR loop (lines \ref{alg:begin-q-p-for-loop}-\ref{alg:end-q-p-for-loop}) corresponds to a particular pair $(q, p) \in \mathcal{Q} \times \mathcal{P}$.
	First, on the line \ref{alg:line-bin-search-for-m}, we compute $n_1$.
	Second, on line \ref{alg:compute-m-32} we compute the value $n$ by {Eq.}~\eqref{eq:value_m}.
	Third, on line \ref{alg:check-final-m-value} we check \eqref{eq:channel_cap_cons}.
	If it satisfies, we compute the objective value $\varphi$ of ($\Phi_2$) and update the solution (lines \ref{begin-update-solution}-\ref{end-update-solution}).
	Finally, the transmit power is computed (lines \ref{alg:begin-Pk-compute-loop}-\ref{alg:end-Pk-compute-loop}).
	
Lemma \ref{thr:replace-p-by-1p-if-plt1o2} shows that instead of considering $p \in (0, 1)$, we only need to study $p \in [1/2, 1)$.
    Thus, this lemma helps to speed up Algorithm \ref{alg:dis-general-solution}.
    Likewise, in Lemma \ref{thr:narrow-quatization-level-1}, we present an upper bound for the quantization level $q$ that also helps to reduce the running time of Algorithm \ref{alg:dis-general-solution}.
	
	\begin{lemma} [Upper Bound on the Quantization Level $q$ for $p \geq 1/2$] \label{thr:narrow-quatization-level-1}
	For each privacy budget upper bound $\bar{\epsilon}$, there exists an integer $\bar{q}$ such that to satisfy the privacy budget condition $\epsilon < \bar{\epsilon}$, the level quantization $q$ must not exceed $\bar{q}$ for every $p \geq 1/2$. The upper bound $\bar{q}$ can be computed by solving the equation $g(q) = \bar{\epsilon}$, where $g(p)$ defined by Eq.~\eqref{upper-bound-level-quantization-function} is a monotonically increasing function with respect to $q$.
	\end{lemma}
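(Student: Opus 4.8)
The plan is to turn the two monotonicity properties of Proposition~\ref{epsilon-properties-remark} -- namely that $\epsilon$ increases in $q$ and decreases in $n$ -- into an upper bound on $q$ by coupling $q$ and $n$ through the wireless constraint. The point to appreciate first is that the privacy budget on its own cannot bound $q$: for fixed $q$ and $p$ every term of \eqref{new-privacy-budget-estimation} vanishes as $n\to\infty$ (the sensitivities $\Delta_1,\Delta_2,\Delta_\infty$ being independent of $n$), so $\epsilon\to 0$ and arbitrarily large $q$ could be tolerated simply by taking $n$ large. The bound must therefore be forced by the channel-capacity constraint \eqref{eq:channel_cap_cons}, which over all $k\in\mathcal{K}$ gives $q+n\le C$ with $C:=(1+\min_{k\in\mathcal{K}}P_k^{\max}h_k/\omega_0)^{TW/d}$, the very quantity bounding the domain sets \eqref{domain-set-q}--\eqref{domain-set-n}.

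Fixing $q$, I would look for the smallest budget any feasible configuration with $p\ge 1/2$ can achieve. Because $\epsilon$ is decreasing in $n$ (Proposition~\ref{epsilon-properties-remark}(ii)) and the channel constraint caps the trial number at $n_{\max}(q):=\lfloor C\rfloor-q$, this smallest budget is obtained at $n=n_{\max}(q)$, so I define
\[
g(q):=\min_{p\in[1/2,1)}\epsilon\big(q,\,n_{\max}(q),\,p\big),
\]
which is the function recorded in \eqref{upper-bound-level-quantization-function}; the inner minimum is attained at $p=1/2$, where $p(1-p)$ is largest while $p^2+(1-p)^2$ and $\max\{p,1-p\}$ are smallest, so in practice the minimization collapses to an evaluation at $p=1/2$. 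By construction $\epsilon(q,n,p)\ge\epsilon(q,n_{\max}(q),p)\ge g(q)$ for every feasible $(n,p)$ with $p\ge1/2$, so $g(q)$ is a genuine lower bound on the attainable budget at level $q$.

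The crux is to show $g$ is monotonically increasing, and I expect this to be the main obstacle. Rather than differentiating the unwieldy expression \eqref{new-privacy-budget-estimation}, I would argue by comparison: for $q_1<q_2$, let $p_2$ attain the minimum defining $g(q_2)$; then
\[
g(q_1)\le\epsilon\big(q_1,n_{\max}(q_1),p_2\big)\le\epsilon\big(q_2,n_{\max}(q_2),p_2\big)=g(q_2).
\]
The middle inequality invokes both monotonicities at once: raising $q$ from $q_1$ to $q_2$ increases $\epsilon$ by Proposition~\ref{epsilon-properties-remark}(iii), while the accompanying decrease $n_{\max}(q_1)>n_{\max}(q_2)$ increases it further by Proposition~\ref{epsilon-properties-remark}(ii). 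Thus the direct growth in $q$ and the indirect shrinking of the available trial budget reinforce one another, and monotonicity follows with no calculus.

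Finally, since $g$ is increasing and (after relaxing $n_{\max}(q)=C-q$ to real $q$) continuous, the equation $g(q)=\bar\epsilon$ has a unique root, and I take $\bar q$ to be its floor. For any $q>\bar q$ we then have $\epsilon\ge g(q)>g(\bar q)=\bar\epsilon$ for every $p\ge1/2$ and every feasible $n$, so $\epsilon<\bar\epsilon$ can hold only when $q\le\bar q$, which is the claim. The two points needing care are that the evaluation points $n_{\max}(q)$ remain admissible -- they must lie in $\mathcal{N}$ and clear the DP floor of \eqref{eq:DP_cons}, and where they do not the level $q$ is simply infeasible, which only reinforces the bound -- and that the integer rounding in $n_{\max}(q)=\lfloor C\rfloor-q$ does not corrupt the comparison, which it does not since $\lfloor C\rfloor-q$ is nonincreasing in $q$.
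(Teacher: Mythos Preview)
Your argument is correct and rests on the same observation as the paper's: the channel constraint \eqref{eq:channel_cap_cons} caps $n$ by $C-q$, so the smallest attainable budget at level $q$ arises at $n=n_{\max}(q)$ and $p=1/2$, and this best case worsens as $q$ grows. Where you depart is in the execution. You define $g(q)$ abstractly as $\min_{p\ge 1/2}\epsilon(q,n_{\max}(q),p)$ and prove monotonicity by a two-line comparison invoking Proposition~\ref{epsilon-properties-remark}(ii)--(iii), with no calculus. The paper instead constructs $g(q)$ explicitly: it substitutes $np(1-p)\le r(q):=(C-q)/4$, $p^2+(1-p)^2\ge 1/2$, $3p^2-3p+1\ge 1/4$, etc., into each term of \eqref{new-privacy-budget-estimation}, obtains five closed-form lower bounds $g_1(q),\dots,g_5(q)$, and argues each is increasing because $r(q)$ decreases while $\Delta_1,\Delta_2,\Delta_\infty$ increase in $q$. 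Consequently your $g$ is not literally the function in \eqref{upper-bound-level-quantization-function}; the paper's is a looser but fully explicit lower bound on yours. Your route is cleaner as a proof; the paper's yields a formula one can evaluate directly when computing $\bar q$. One point to tighten: your claim that the inner minimum collapses to $p=1/2$ requires the monotonicity of $\epsilon$ in $p$ on $[1/2,1)$, which is not part of Proposition~\ref{epsilon-properties-remark}---the paper establishes it separately (in Appendix~\ref{proof-theorem-relative-error-statement}), and your parenthetical about $p(1-p)$, $p^2+(1-p)^2$, $\max\{p,1-p\}$ must also be checked against the $S_1,S_2$ factors, though it does go through.
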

	\begin{proof}
	    See Appendix \ref{proof-of-narrow-quatization-level-1}.
	\end{proof}
	
In the next section, we prove that we can control the relative error of Algorithm \ref{alg:dis-general-solution}.
    Furthermore, in Section \ref{ss:complexity-algs}, we analyze the complexity of this algorithm.
	
\subsection{Relative Error of Algorithm \ref{alg:dis-general-solution}} \label{ss:effectiveness-algs}
Theorem \ref{thr:relative-error-statement} below states that Algorithm \ref{alg:dis-general-solution} can generate a $\rho$-relative error solution ($\tilde{q}, \tilde{n}, \tilde{p}, \tilde{P}_k$), i.e., $\varphi(\tilde{q}, \tilde{n}, \tilde{p}, \tilde{P}_k) / \varphi^* < 1 + \rho$, where $\varphi^*$ is the optimal objective value of ($\Phi_2$).
	In addition, Theorem \ref{thr:relative-error-estimation} gives an approach to compute the value $\lambda$ to guarantee an arbitrary $\rho$ when $\eta = \min\{23 \ln(10d / \delta), 6\} / (K\bar{n}_\mathcal{N}) < 0.25$, where 6 is the smallest value of $2(q + 1)$, which appears in inequality \eqref{eq:DP_cons}, and $\bar{n}_\mathcal{N}$ is the maximum value of Binomial parameter set $\mathcal{N}$.
	Since $\eta \leq 6 / (K \bar{n}_\mathcal{N})$, the condition \mbox{$\eta < 0.25$} holds if the following condition holds:
    \begin{equation} \label{relative-error-condition}
        K \ge 12 \mbox{ or } \bar{n}_\mathcal{N} \ge 12.
    \end{equation}
    In other words, if the condition \eqref{relative-error-condition} holds, the relative error is bounded as shown in Theorem \ref{thr:relative-error-estimation}: $\rho < \mu \lambda$, where $\mu = 2/\big(1 - \sqrt{1 - 4\eta}\big)$.
	Since in practice, the FLoWNs often contain thousands of devices \cite{Agarwal2018}, the condition $K \ge 13$ is likely to occur.
    Even in the case $K < 13$, we can add 13 to $\mathcal{N}$ without losing the generality to guarantee that the condition \eqref{relative-error-condition} always holds.
 	
	\begin{theorem} 
	[Relative Error Property of Solution obtained by Algorithm \ref{alg:dis-general-solution}]
	\label{thr:relative-error-statement}
	    For arbitrary $\rho > 0$, there exists a positive $\bar{\lambda}$ such that for every $\lambda < \bar{\lambda}$, Algorithm \ref{alg:dis-general-solution} with $\mathcal{P}_\lambda = \{i\lambda | i \in \mathbb{N}^+, 1/2 < i\lambda < 1\}$ returns a feasible solution $(\tilde{q}, \tilde{n}, \tilde{p}, \tilde{P}_k)$ satisfying $\rho$-relative error for ($\Phi_2$), and this solution is also a $\rho$-relative error solution of ($\Phi_1$), i.e.:
     $$\frac{\varphi(\tilde{q}, \tilde{n}, \tilde{p}, \tilde{P}_k)}{\varphi^*} < 1 + \rho.$$
	\end{theorem}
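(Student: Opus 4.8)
The plan is to collapse the two–dimensional search of Algorithm~\ref{alg:dis-general-solution} to a one–dimensional one in $p$ and then exploit the piecewise structure of the objective. First I would fix the quantization level: $q$ is integer–valued and Lemma~\ref{thr:narrow-quatization-level-1} bounds it by $\bar q$, so the finite set $\mathcal{Q}=\{2,\dots,\bar q\}$ searched by the algorithm contains an optimal $q^\ast$, and Lemma~\ref{thr:replace-p-by-1p-if-plt1o2} lets me restrict to $p\in[1/2,1)$ without altering any objective value. It therefore suffices to show that, for the fixed value $q=q^\ast$, the inner loop over $\mathcal{P}=\mathcal{P}_\lambda\cup\{1/2\}$ produces a feasible $\tilde p$ with $\varphi(q^\ast,\tilde p)<(1+\rho)\varphi^\ast$; the outer loop then inherits the guarantee, and Theorem~\ref{relationship-P1-P2-models} transfers it from $(\Phi_2)$ to $(\Phi_1)$, since the two problems share the objective $\varphi$ and the same optimal value.

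Next I would analyze $\varphi$ as a function of $p$ alone. Writing $v=v(p)=p(1-p)$ and using \eqref{eq:value_m}, the induced $n(p)=\max\{\lceil \max\{23\ln(10d/\delta),2(q^\ast+1)\}/(Kv)\rceil,\,n_1(p)\}$ is a non-decreasing, left–continuous integer step function of $p$ on $[1/2,1)$ (both arguments of the $\max$ grow as $p$ leaves $1/2$, by the symmetry and monotonicity in Proposition~\ref{epsilon-properties-remark}). Hence $\varphi(q^\ast,p)=(1+n(p)v)/(q^\ast-1)^2$ is continuous and strictly decreasing on each maximal interval where $n$ is constant (there $v$ decreases and $n>0$) and jumps upward just to the right of each breakpoint at which $n$ increments. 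By left–continuity of $n$, the infimum of $\varphi(q^\ast,\cdot)$ over the feasible range is attained at such a breakpoint $p^\ast$ — the right endpoint of a constant–$n$ piece — with $\varphi^\ast=(1+n^\ast v(p^\ast))/(q^\ast-1)^2$ and $n^\ast=n(p^\ast)$.

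Then I would produce the good grid point. The channel constraint \eqref{eq:channel_cap_cons} caps $n$, hence caps $p$ by some $p_{\max}<1$, so the feasible interval $[1/2,p_{\max}]$ is compact and $v_{\min}:=v(p_{\max})>0$. Let $\tilde p\in\mathcal{P}$ be the largest grid point not exceeding $p^\ast$; for every $\lambda$ below the length $\ell^\ast$ of the constant–$n$ piece ending at $p^\ast$ one has $n(\tilde p)=n^\ast$, so $\tilde p$ passes the feasibility check on line~\ref{alg:check-final-m-value} (it carries exactly the feasible value $n^\ast$ realizing the infimum), and $0\le v(\tilde p)-v(p^\ast)\le\lambda$ because $|v'|=|1-2p|\le1$. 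A short computation then gives
$$\frac{\varphi(q^\ast,\tilde p)}{\varphi^\ast}=1+\frac{n^\ast\,(v(\tilde p)-v(p^\ast))}{1+n^\ast v(p^\ast)}\le 1+\frac{v(\tilde p)-v(p^\ast)}{v(p^\ast)}\le 1+\frac{\lambda}{v_{\min}}.$$
Taking $\bar\lambda=\min\{\rho\,v_{\min},\ell^\ast\}$ makes the right-hand side smaller than $1+\rho$ for all $\lambda<\bar\lambda$; since the algorithm returns the best grid point it finds, its output obeys the claimed bound for $(\Phi_2)$, and hence for $(\Phi_1)$.

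\textbf{The main obstacle} is the discontinuity of $n(p)$: $\varphi(q^\ast,\cdot)$ is not continuous, and its optimum sits at a jump, so a naive ``dense grid $+$ continuity'' argument is unavailable. The resolution is to place the approximating point on the correct side of the breakpoint — just below $p^\ast$, before $n$ increments — which simultaneously preserves feasibility (the retained $n^\ast$ is precisely the feasible value attaining $\varphi^\ast$) and makes $\varphi$ locally Lipschitz in $p$; the finiteness of the Lipschitz constant $1/v_{\min}$ rests entirely on the compactness $p\le p_{\max}<1$ forced by \eqref{eq:channel_cap_cons}. Making $v_{\min}$ explicit by solving the quadratic $p(1-p)=\eta$ is what sharpens the constant to the $\mu=2/(1-\sqrt{1-4\eta})$ appearing in the companion estimate $\rho<\mu\lambda$.
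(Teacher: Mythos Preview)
Your argument is correct and shares the paper's core idea: fix $q=q^\ast$, approximate $p^\ast$ from below by a grid point $\tilde p$, use that the algorithm's induced $n(\cdot)$ is non-decreasing on $[1/2,1)$ so that $n(\tilde p)\le n^\ast$ (hence $(q^\ast,n(\tilde p),\tilde p)$ passes the channel check), and control the resulting change in $\varphi$. The execution differs: the paper bounds the last step by an abstract continuity argument on $p\mapsto\varphi(q^\ast,n^\ast,p,P_k^\ast)$, producing a non-explicit neighborhood radius $\lambda_1$ (and a separate $\lambda_2$ for the $\epsilon$-constraint), whereas you read off the piecewise-constant/sawtooth structure of $p\mapsto\varphi(q^\ast,n(p),p)$ and obtain the explicit Lipschitz estimate $\varphi(\tilde p)/\varphi^\ast\le 1+\lambda/v_{\min}$. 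Your route is cleaner and has the pleasant side effect of essentially proving Theorem~\ref{thr:relative-error-estimation} in the same breath (your final paragraph already computes $\mu$). Two small remarks: (a) the monotonicity of $\epsilon$ in $p$ on $[1/2,1)$ that you need for $n_1(p)$ to be non-decreasing is not part of Proposition~\ref{epsilon-properties-remark} (which only gives symmetry in $p$); the paper proves it separately at the start of Appendix~\ref{proof-theorem-relative-error-statement}, so your claim is true but the citation is off; (b) you do not actually need $n(\tilde p)=n^\ast$ --- the inequality $n(\tilde p)\le n^\ast$, which follows from monotonicity alone for any $\tilde p\le p^\ast$, already gives $\varphi(q^\ast,\tilde p)\le (1+n^\ast v(\tilde p))/(q^\ast-1)^2$, so the $\ell^\ast$ term in your $\bar\lambda$ is unnecessary.
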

	\begin{proof}
	    See Appendix \ref{proof-theorem-relative-error-statement}.
	\end{proof}
	
	\begin{theorem} 
	[Relative Error's Upper Bound on Solution Returned by Algorithm \ref{alg:dis-general-solution}]
    \label{thr:relative-error-estimation}
	    For $\lambda < \bar{\lambda}$, where $\bar{\lambda}$ is defined in Theorem \ref{thr:relative-error-statement}, and $\eta < 0.25$, the relative error $\rho$ of the solution returned by Algorithm \ref{alg:dis-general-solution} satisfying: $\rho < \mu\lambda$, where $\mu = 2 / (1 - \sqrt{1 - 4\eta})$.
	\end{theorem}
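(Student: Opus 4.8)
The plan is to bound the relative error incurred at the single grid point that Algorithm~\ref{alg:dis-general-solution} is guaranteed to inspect, namely the grid value of $p$ lying just below the optimal Binomial probability. Let $(q^*,n^*,p^*,P_k^*)$ be an optimal solution of $(\Phi_2)$, and write $h(p)=p(1-p)$. By the symmetry established in Lemma~\ref{thr:replace-p-by-1p-if-plt1o2} I may assume $p^*\ge 1/2$, and by Lemma~\ref{thr:narrow-quatization-level-1} we have $q^*\in\mathcal{Q}$, so $q^*$ is among the quantization levels scanned in the loop. I would first select the grid point $\tilde p\in\mathcal{P}=\mathcal{P}_\lambda\cup\{1/2\}$ with $\tilde p\le p^*$ and $p^*-\tilde p\le\lambda$ (such a point exists because, with the endpoint $1/2$ adjoined, the spacing of $\mathcal{P}$ on $[1/2,1)$ is at most $\lambda$). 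Since $h$ is decreasing on $[1/2,1)$, this choice gives $h(\tilde p)\ge h(p^*)$.

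The core step is to show that moving from $p^*$ to $\tilde p$ does not increase $n$. Because $h(\tilde p)\ge h(p^*)$, the triple $(q^*,n^*,\tilde p)$ still satisfies the variance condition \eqref{eq:DP_cons} (its left-hand side $Kn^*h(\tilde p)$ only grows), and, using that the privacy budget $\epsilon$ decreases as $h$ increases for a fixed $n$, it still satisfies $\epsilon\le\bar{\epsilon}$. Hence $n^*$ is a \emph{feasible} trial number at $\tilde p$, and since the value $\tilde n$ produced by \eqref{eq:value_m} is by construction the smallest feasible trial number, $\tilde n\le n^*$. The channel constraint is inherited because $q^*+\tilde n\le q^*+n^*$, so $(q^*,\tilde n,\tilde p)$ passes the feasibility test in Algorithm~\ref{alg:dis-general-solution}. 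As the algorithm returns the best inspected point, its objective is at most $\varphi(q^*,\tilde n,\tilde p)$. Dividing by $\varphi^*=\varphi(q^*,n^*,p^*)$ and cancelling the common factor $(q^*-1)^{-2}$ in \eqref{obj-func-P_1}, I obtain
\[\frac{\varphi(q^*,\tilde n,\tilde p)}{\varphi^*}\le\frac{1+n^*h(\tilde p)}{1+n^*h(p^*)}=1+\frac{n^*\big(h(\tilde p)-h(p^*)\big)}{1+n^*h(p^*)},\]
so it remains to bound the last fraction by $\mu\lambda$.

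For the final estimate I would use two facts. First, the grid perturbation is tied to $h(p^*)$ itself: a short computation gives $h(\tilde p)-h(p^*)=s(2p^*-1-s)$ with $s=p^*-\tilde p\le\lambda$, and since $2p^*-1=\sqrt{1-4h(p^*)}$ for $p^*\ge 1/2$, this yields $h(\tilde p)-h(p^*)<\lambda\sqrt{1-4h(p^*)}$. Second, feasibility forces $h(p^*)\ge\eta$: condition \eqref{eq:DP_cons} gives $n^*h(p^*)\ge c/K$ with $c=\min\{23\ln(10d/\delta),6\}$ (which lower-bounds the right-hand side of \eqref{eq:DP_cons}), while $n^*\le\bar{n}_{\mathcal{N}}$, so $h(p^*)\ge c/(K\bar{n}_{\mathcal{N}})=\eta$. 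Substituting, the relative error is at most $\lambda$ times $n^*\sqrt{1-4h(p^*)}/(1+n^*h(p^*))$, and I would maximize this over $\{n^*\le\bar{n}_{\mathcal{N}},\,h(p^*)\ge\eta,\,n^*h(p^*)\ge c/K\}$. The expression is increasing in $n^*$ and decreasing in $h(p^*)$, so the maximum sits at the corner $(n^*,h(p^*))=(\bar{n}_{\mathcal{N}},\eta)$, where $\bar{n}_{\mathcal{N}}\eta=c/K$ and the value reduces to $\bar{n}_{\mathcal{N}}\sqrt{1-4\eta}/(1+c/K)$; a direct comparison then shows this never exceeds $\mu=2/(1-\sqrt{1-4\eta})$ whenever $\eta<1/4$, giving $\rho<\mu\lambda$.

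The main obstacle is this final algebraic step. The decisive trick, recognizing $2p^*-1=\sqrt{1-4h(p^*)}$, is what couples the grid spacing to the quadratic structure of $h$ and produces the square root appearing in $\mu$; without it one only recovers the cruder bound $\rho<\lambda/\eta$. Verifying that the constrained maximum lands exactly at the corner $(\bar{n}_{\mathcal{N}},\eta)$ and that the resulting value stays below $\mu$ for all admissible $K$ and $c$ (in particular the delicate regime $c>K$) requires the monotonicity computations and a careful sign analysis. A secondary point to pin down is the monotonicity of $\epsilon$ in $h$ for fixed $n$, which underlies the feasibility-preservation claim $\tilde n\le n^*$; this follows from the symmetry and monotonicity properties of the privacy-budget estimation in Proposition~\ref{epsilon-properties-remark}, but it should be invoked explicitly.
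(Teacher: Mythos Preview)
Your argument is correct and follows the same overall strategy as the paper: pick a grid value $\tilde p\in\mathcal{P}$ with $\tilde p\le p^*$, show that the algorithm's trial number at $(q^*,\tilde p)$ satisfies $\tilde n\le n^*$, and then bound the objective ratio $(1+n^*h(\tilde p))/(1+n^*h(p^*))$ using $h(p^*)\ge\eta$. The algebraic execution differs, however. The paper splits into two cases ($p\gtrless p^*$) and verifies the quadratic inequality $\mu(p^*)^2-(\mu\mp 2)p^*\mp 1<0$ by locating its roots relative to $(1\pm\sqrt{1-4\eta})/2$. Your route is tidier: you restrict to $\tilde p\le p^*$ (which is all the algorithm needs), use the identity $2p^*-1=\sqrt{1-4h(p^*)}$ to write $h(\tilde p)-h(p^*)<\lambda\sqrt{1-4h(p^*)}$, and then check the scalar inequality directly. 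This avoids the case split entirely.

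Two small points. First, your maximization over $(n^*,h(p^*))$ is unnecessary: since $n^*/(1+n^*h(p^*))<1/h(p^*)$, it suffices to bound $\sqrt{1-4h(p^*)}/h(p^*)\le\sqrt{1-4\eta}/\eta$ and then observe $\sqrt{1-4\eta}/\eta\le(1+\sqrt{1-4\eta})/(2\eta)=\mu$; this is in effect what the paper does and removes the detour through $\bar n_{\mathcal N}$ and $c/K$. Second, the monotonicity of $\epsilon$ in $h$ (equivalently, in $p$ on $[1/2,1)$) that you need for $\tilde n\le n^*$ is \emph{not} among the items of Proposition~\ref{epsilon-properties-remark}; it is established in the proof of Theorem~\ref{thr:relative-error-statement} by inspecting the $p$-dependence of each term of \eqref{new-privacy-budget-estimation}, so cite that instead.
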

	\begin{proof}
	See Appendix \ref{proof-thr:relative-error-estimation}.
	\end{proof}
	
Applying Theorem \ref{thr:relative-error-estimation}, we can estimate $\lambda$ to guarantee that Algorithm \ref{alg:dis-general-solution} returns a feasible solution satisfying a given relative error $\rho > 0$.
    In general, we select value $\lambda < \rho / \mu$.
    Specifically, for the case $\eta << 1$, since $\mu = 2 / (1 - \sqrt{1 - 4\eta}) = 2 (1 + \sqrt{1 - 4\eta}) / (4\eta) \approx 1 / \eta$, we can select $\lambda$ such that $\lambda < \rho \eta$.
	
\subsection{Complexity of Algorithm \ref{alg:dis-general-solution}} \label{ss:complexity-algs}
In Algorithm \ref{alg:m-epsilon-domain}, we implement a binary search with respect to the Binomial mechanism parameter $n$.
    Therefore, the complexity of Algorithm \ref{alg:m-epsilon-domain} is $\mathcal{O}(\log_2|\mathcal{N}|)$, where $\mathcal{N}$ is the domain set of $n$.
    In practice, we assign $\mathcal{N}$ to the set of all integers between 2 and $2^b$, where $b$ is the maximum number of bits of each gradient element, e.g., 8 or 16 \cite{Agarwal2018}.
    In such context, the complexity of Algorithm \ref{alg:m-epsilon-domain} is $\mathcal{O}(b)$.
	
	Considering Algorithm \ref{alg:dis-general-solution}, the FOR loop (lines \ref{alg:begin-q-p-for-loop}-\ref{alg:end-q-p-for-loop}) repeats for $|\mathcal{Q}| |\mathcal{P}|$ times.
	Inside this loop, the most significant computation is the binary search (line \ref{alg:line-bin-search-for-m}) whose complexity is $\mathcal{O}(\log_2|\mathcal{N}|)$.
	Therefore, the complexity of Algorithm \ref{alg:dis-general-solution} is $\mathcal{O}(|\mathcal{Q}| |\mathcal{P}| \log_2|\mathcal{N}|)$.
	Since $|\mathcal{P}| = \mathcal{P}_\lambda \cup \{1/2\} \leq 1/(2\lambda) + 1$, the complexity of Algorithm \ref{alg:dis-general-solution} is $\mathcal{O}(|\mathcal{Q}| \log_2|\mathcal{N}| / \lambda)$, which is pseudo-polynomial.
    Recall that to satisfy the $\rho$-relative error the following condition holds: $\lambda < \rho / \mu$.
    Since $\mu$ is a constant, to achieve $\rho$-relative error, the complexity of Algorithm \ref{alg:dis-general-solution} is $\mathcal{O}(|\mathcal{Q}|\log_2|\mathcal{N}| / \rho)$.

\section{Experiment results}\label{s:numerical-results}
\subsection{Parameters Settings}
To perform the experiments, we consider a network with 1 million mobile devices, i.e., $M = 10^6$, and in general, set the number of selected mobile devices to update the global gradient $K = 1000$.
    In a special case that studies the impact of the number of selected devices $K$ on the convergence rate, we vary the value of $K$ between $10$, $10^2$, $10^3$, and $10^4$.
	We set the square of channel gains $h_k^2$ following the exponential distribution with the mean $g_0(D_0/D_k)^4$, where $g_0 = -40$ dB, the reference distance $D_0 = 1$ m, and distance $D_k$ between the mobile edge server and device $k$ is randomly sampled from $[D_{\min}, D_{\max}]$ with $D_{\min} = 2$ m and $D_{\max} = 200$ \cite{Tran2019}.
	For each device, the bandwidth is set to 900 MHz \cite{etsi138_104}.
	The transmit power is restricted as $P_k \in \{1, \ldots, 20\}$ dBm, for $k \in \{1,\ldots,M\}$, similar to \cite{Tran2019}.
	We implement a three-layer neural network with 785 nodes in the input layer, 60 hidden nodes, and 10 nodes in the output layer using the ADAM training algorithm. We also use the ReLU activate function and use the infinite MNIST dataset as input, similar to \cite{Agarwal2018}.
    The simulated framework is built with Python and NumPy.
    For the differential privacy security, we set $\delta = 10^{-10}$ \cite{Agarwal2018}.
    In addition, we restrict the number of allowed transmit bits per parameter to 16 bits, similar to \cite{Agarwal2018}.

\subsection{Privacy evaluation} 
We first aim to study the efficiency of the proposed privacy budget estimation \eqref{new-privacy-budget-estimation} in comparison with the privacy budget estimation \eqref{eq:expression-epsilon} presented in \cite{Agarwal2018}.
    We compute the solutions $(q, n, p, P_k)$ with respect to the privacy budget upper bound $\bar{\epsilon}$ which is varied from 1 to 10 using the proposed privacy budget estimation in Eq. \eqref{new-privacy-budget-estimation}.
    We then compute the privacy budget estimation over these solutions applying Eq. \eqref{eq:expression-epsilon}.
    The results in Fig. \ref{fig:efficiency_epsilon_estimation} show that our proposed privacy budget estimation gives a tighter estimation than the proposed one in \cite{Agarwal2018}, i.e., the right-hand side of Eq. \eqref{new-privacy-budget-estimation} is smaller than the right-hand side of Eq. \eqref{eq:expression-epsilon} with respect to the same system and quantization/noise parameters.
    \begin{figure}[t]		\centerline{\includegraphics[width=0.5\textwidth]{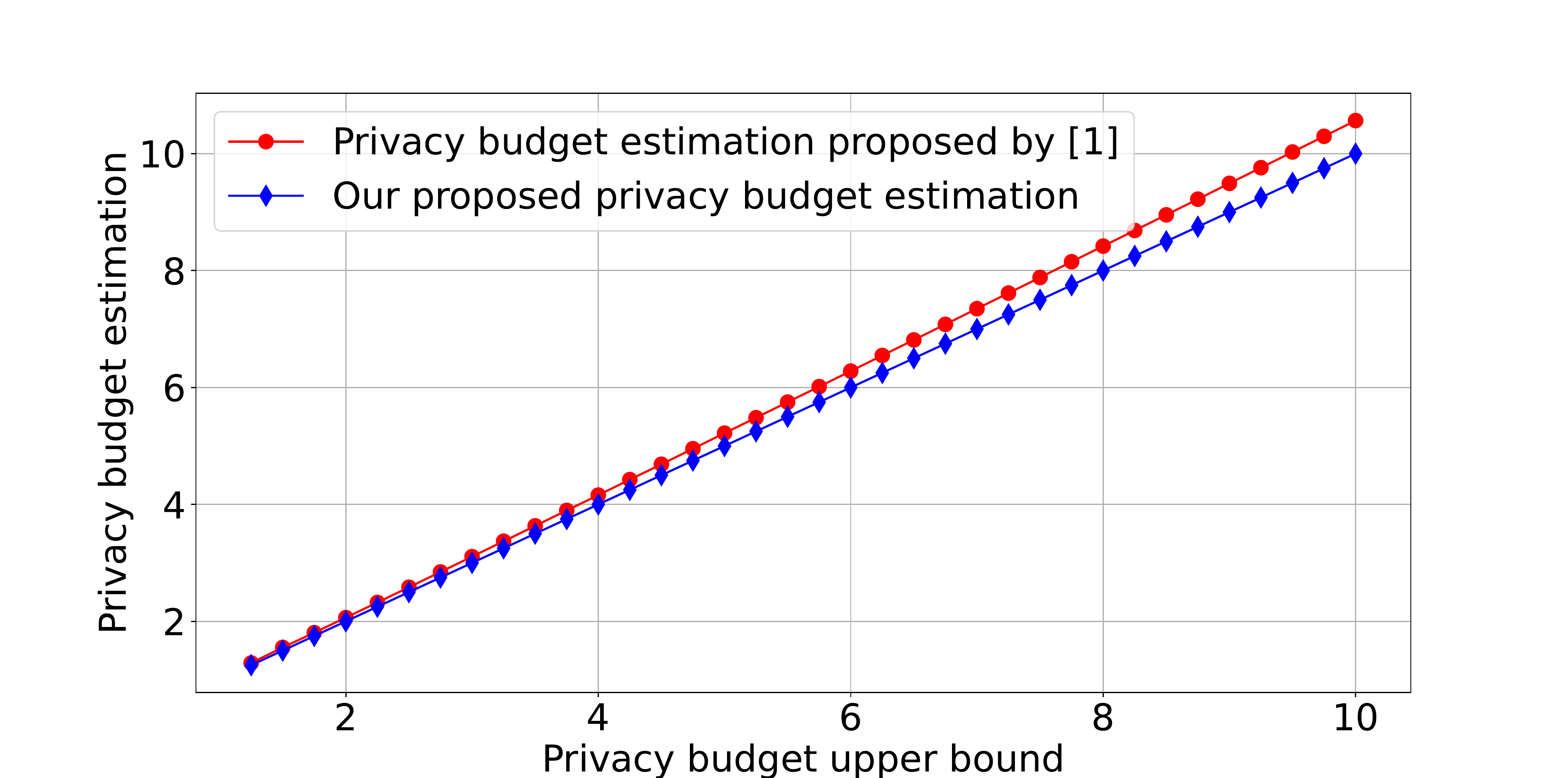}}
		\caption{The estimated privacy budget (blue line) \eqref{eq:expression-epsilon} computed on the solutions of ($\Phi_2$) solved applying the novel estimated privacy budget \eqref{new-privacy-budget-estimation} (red line).}
  \label{fig:efficiency_epsilon_estimation}
	\end{figure}

 \begin{figure}[t]
    \centerline{\includegraphics[width=0.5\textwidth]{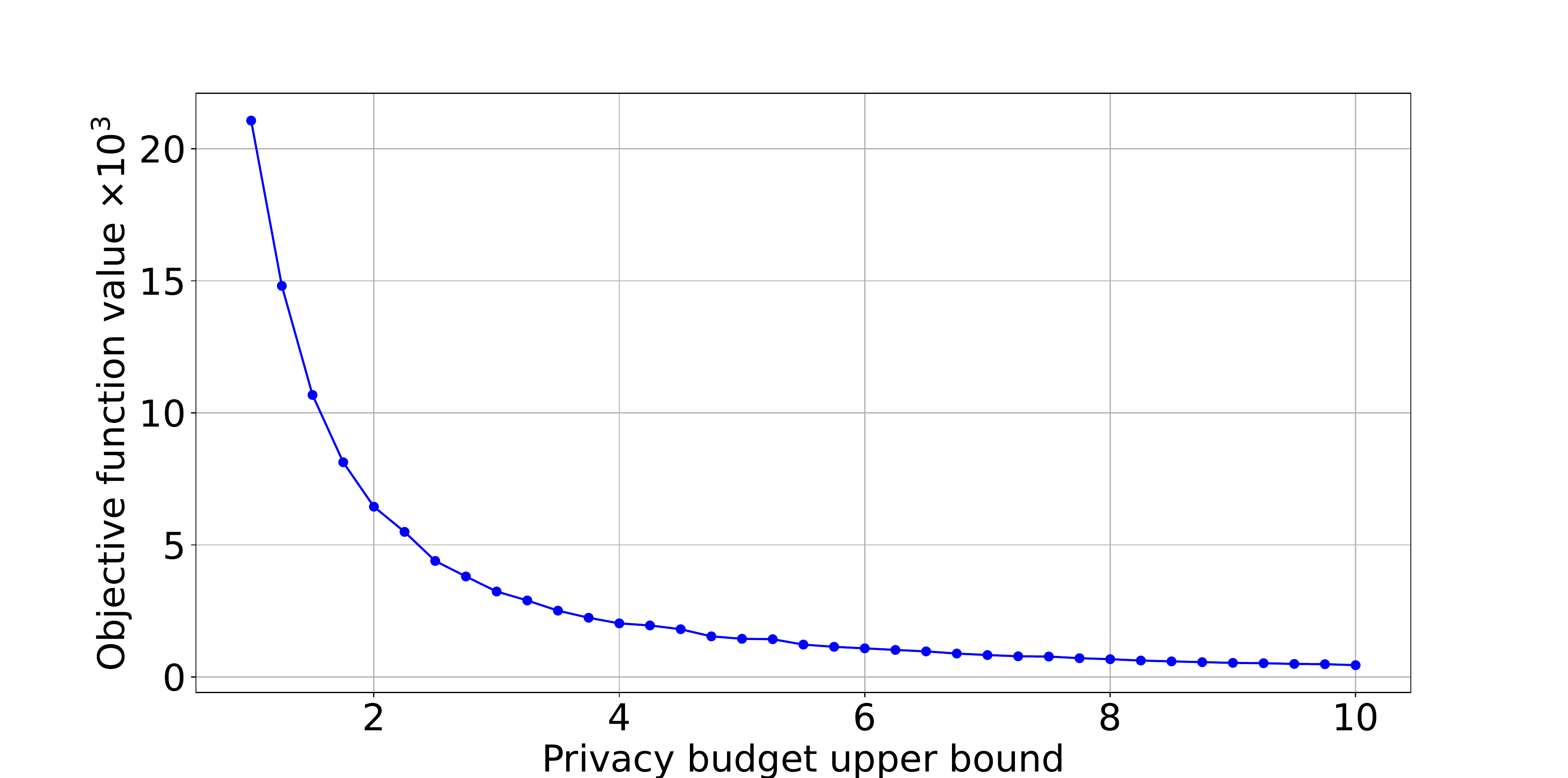}}
		\caption{The objective function value of ($\Phi_2$) returned by Algorithm \ref{alg:dis-general-solution} when varying privacy budget upper bound $\bar{\epsilon}$.}
		\label{fig:obj-func-over-privacy-budget}
	\end{figure}
    
We then investigate the impact of the maximum privacy budget on the learning time of SLQBM-FL.
    Recall that the objective of ($\Phi_2$) is to maximize the SLQBM-FL learning rate.
    Thus, the lower value of ($\Phi_2$)'s objective function is, the lower the learning time of SLQBM-FL is.
    We study the scenarios corresponding to the privacy budget $\bar{\epsilon}$ varying from 1 to 10 \cite{Agarwal2018}.
    Fig. \ref{fig:obj-func-over-privacy-budget} shows the objective function value $\varphi(q, n, p, P_k)$ of ($\Phi_2$) of the solution returned by Algorithm \ref{alg:dis-general-solution} when the upper bound on differential privacy budget $\bar{\epsilon}$ varies from 1 to 10.
	It is clear that as $\bar{\epsilon}$ increases, i.e., the privacy requirement gets less restricted, the objective function value gets decreased, meaning that the convergence rate increases.
	This is stemmed from the fact that the higher the value of $\bar{\epsilon}$ is, the lower the amount of noise added to gradients by the Binomial mechanism is.
	Consequently, the learning time (indicated via our objective function) reduces due to less noise as $\bar{\epsilon}$ increases from 1 to 10.	

\subsection{Convergence rate analysis}
Now, we investigate the dependence of the convergence rate on the device number $K$.
    Theoretically, the upper bound $B$ of the quadratic bias introduced by mechanism $\mathcal{M}$, which defines the objective value of both ($\Phi_1$) and ($\Phi_2$), is inversely proportional to $K$.
    Therefore, the larger the value of $K$, the higher the convergence rate is.
    Fig. \ref{fig:acc-opt-sols-varying-K} shows the accuracy curves of the solutions returned by Algorithm \ref{alg:dis-general-solution} corresponding to different values of the device number $K =$ $10$, $10^2$, $10^3$, and $10^4$.
    It is clear that as $K$ increases the convergence rate increases since the aggregated noise reduces as the number of aggregated devices increases.
    
    \begin{figure}[t]		\centerline{\includegraphics[width=0.5\textwidth]{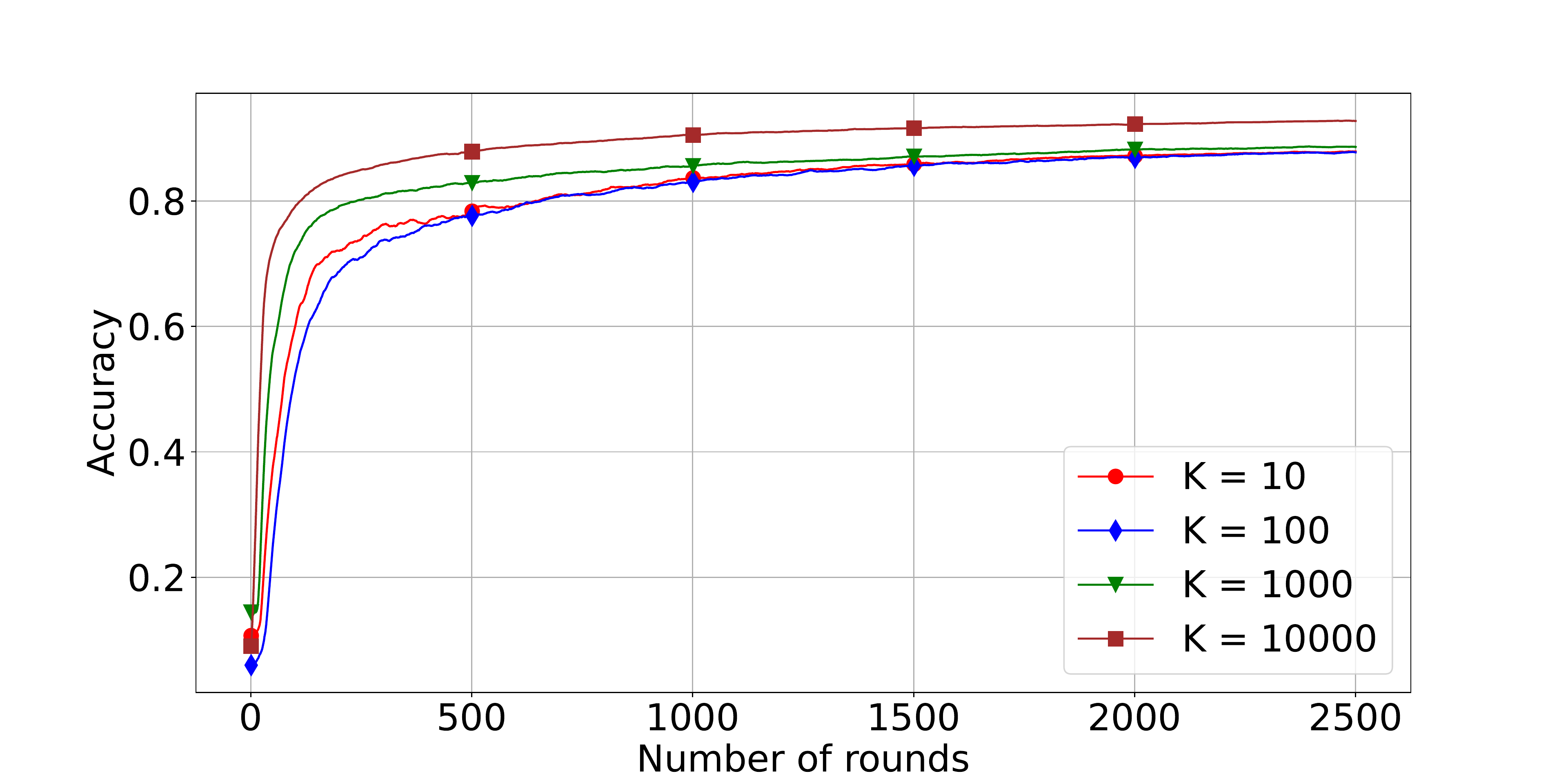}}
		\caption{The accuracy curves of the solutions returned by Algorithm \ref{alg:dis-general-solution} when varying the number of selected devices $K$.}
		\label{fig:acc-opt-sols-varying-K}
	\end{figure}
 	
	\begin{figure}[!]		\centerline{\includegraphics[width=0.5\textwidth]{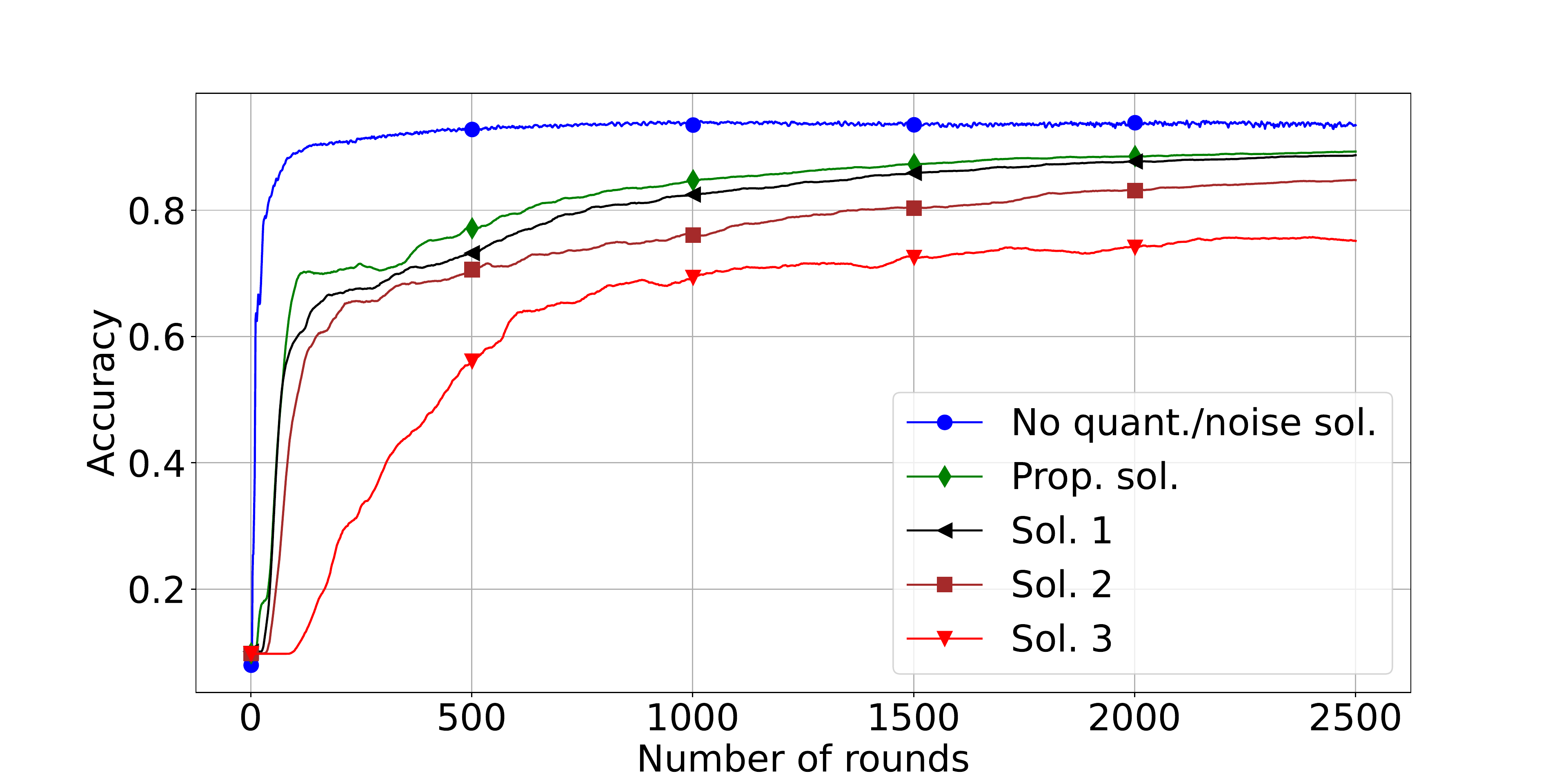}}
		\caption{The accuracy curves of the proposed solution and feasible solutions with the baseline of no quantization/noise}.
		\label{fig:acc-feas-sols-vs-opt-sol}
	\end{figure}

We then investigate the convergence of SLQBM-FL with the parameters of quantization and Binomial mechanisms obtained by our proposed Algorithm \ref{alg:dis-general-solution}, namely \textit{Prop. sol.}.
    In this experiment, we select four baseline approaches, i.e., the conventional FL that operates without quantization and differential privacy mechanisms, and three feasible solutions to the problem ($\Phi_2$).
    In particular, the feasible solutions returned by Algorithm \ref{alg:dis-general-solution} are clustered into 3 groups based on their objective function value.
	We then select the solution with the smallest objective value of each group.
    As shown in Fig. \ref{fig:acc-feas-sols-vs-opt-sol}, even though the quantization and Binomial mechanism introduce noise to data, our proposed solution still achieves an accuracy that is close to that of the conventional FL after $2500$ global update rounds.
    We also observe that the accuracy curve of Sol. 1 gets close to the accuracy curve of our proposed solution.
    The reason is that the objective value of Sol. 1 is the second smallest and is close to the objective value of our proposed solution.
    Thus, it demonstrates the effectiveness of our proposed algorithm in optimizing the system parameters, i.e., transmit power and the quantization/noise parameters for the SLQBM-FL.
    
	

\subsection{System efficiency}
Next, we investigate the dependence of the objective function of ($\Phi_2$) on the system parameters, including the maximum transmit power $P_k^{\max}$, the bandwidth $W$, and the transmission time $T$, as shown in Figs. \ref{fig:obj-value-vs-transmit-power}, \ref{fig:obj-value-vs-bandwidth}, and \ref{fig:obj-value-vs-transmit-time}, respectively.
    Generally, as these system parameters increase, the domain sets of these parameters get expanded. 
    Thus, the objective function value of ($\Phi_2$) decreases or at least does not increase.
    In other words, the convergence rate gets improved as increasing these communication resources.

\begin{figure}[t]		\centerline{\includegraphics[width=0.5\textwidth]{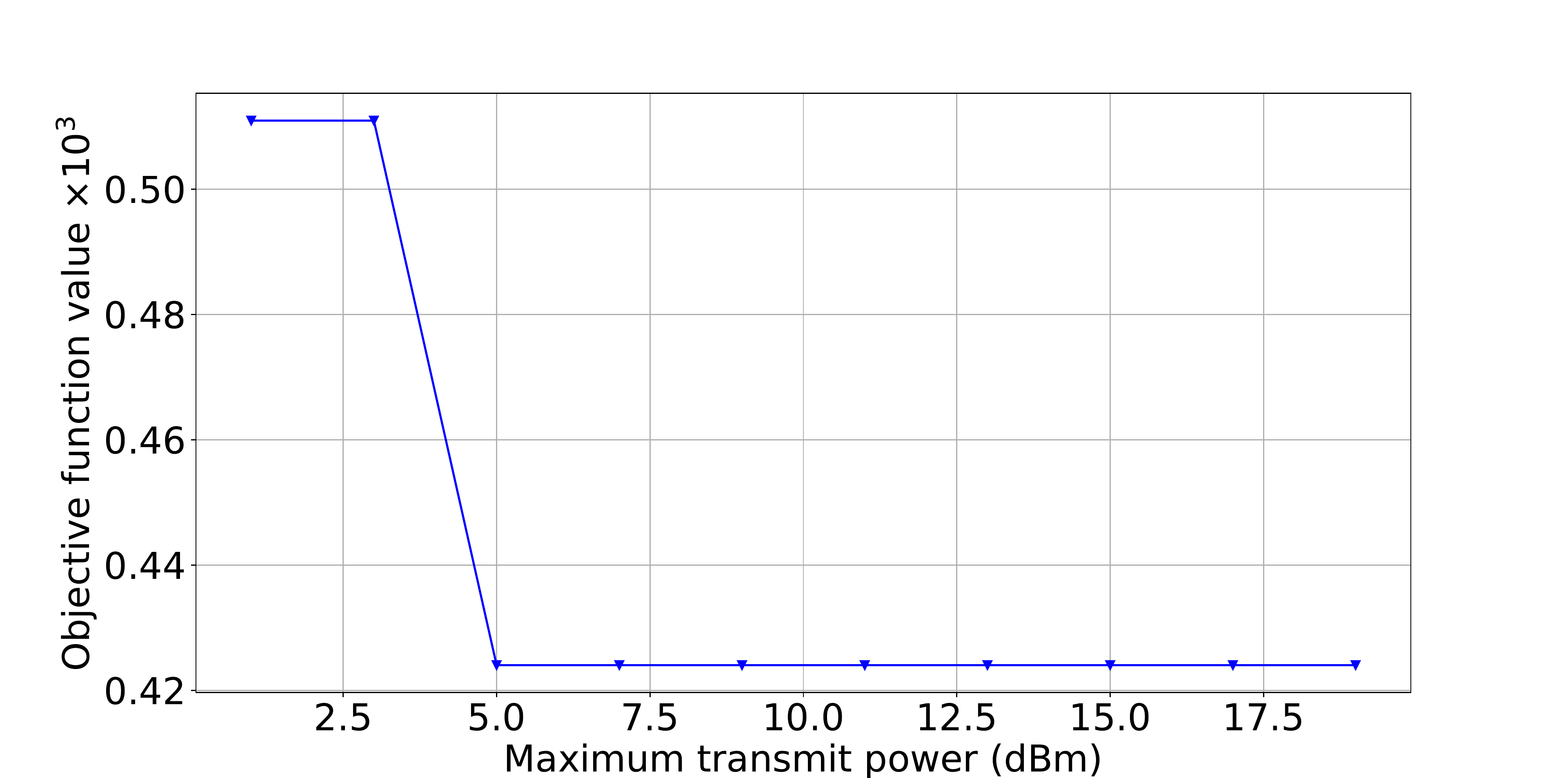}}
		\caption{The objective function of ($\Phi_2$) returned by Algorithm \ref{alg:dis-general-solution} when varying the maximum transmit power.}
		\label{fig:obj-value-vs-transmit-power}
	\end{figure}

\begin{figure}[!]		\centerline{\includegraphics[width=0.5\textwidth]{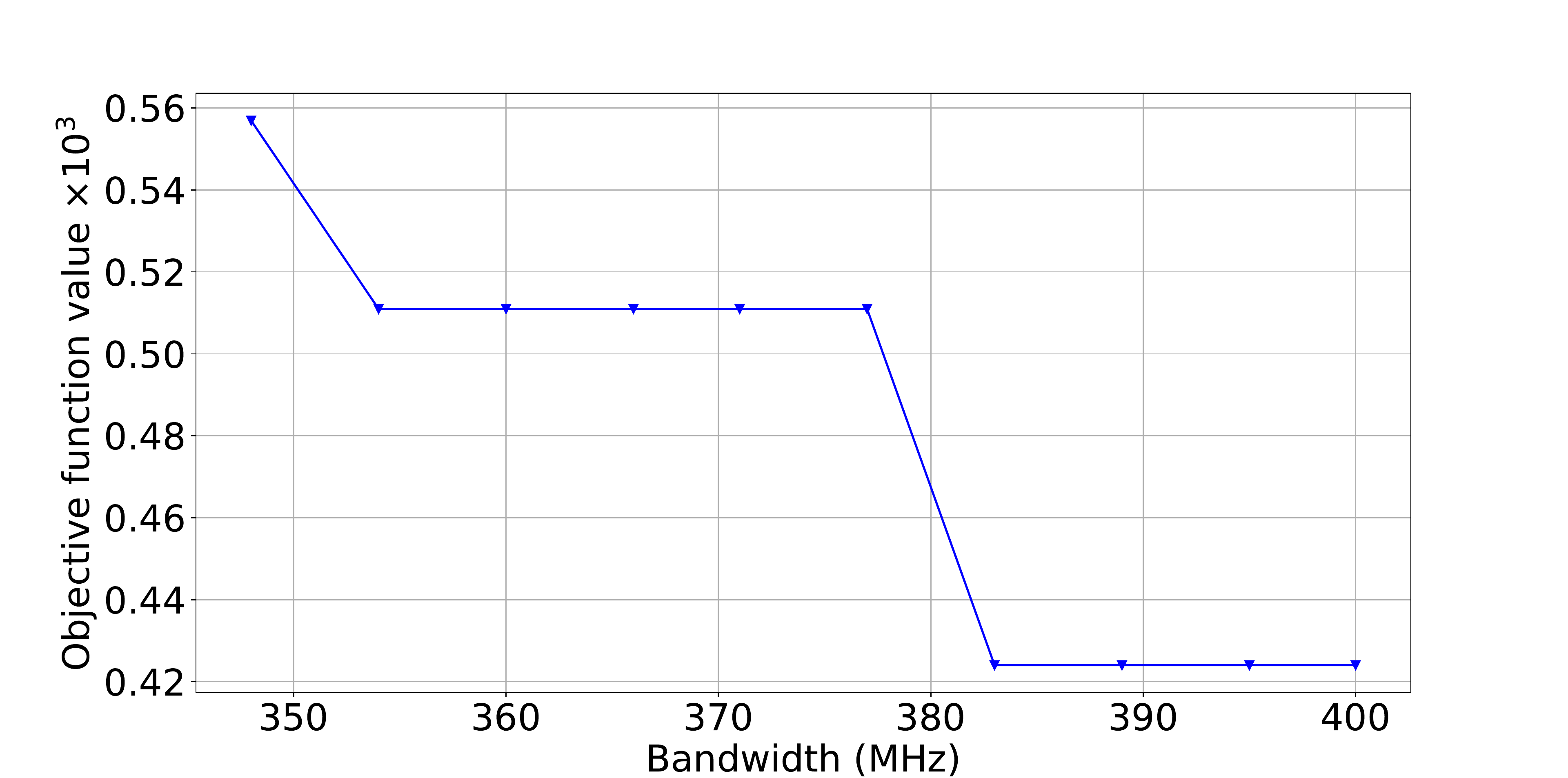}}
		\caption{The objective function of ($\Phi_2$) returned by Algorithm \ref{alg:dis-general-solution} when varying the bandwidth.}
		\label{fig:obj-value-vs-bandwidth}
	\end{figure}

\begin{figure}[!]		\centerline{\includegraphics[width=0.5\textwidth]{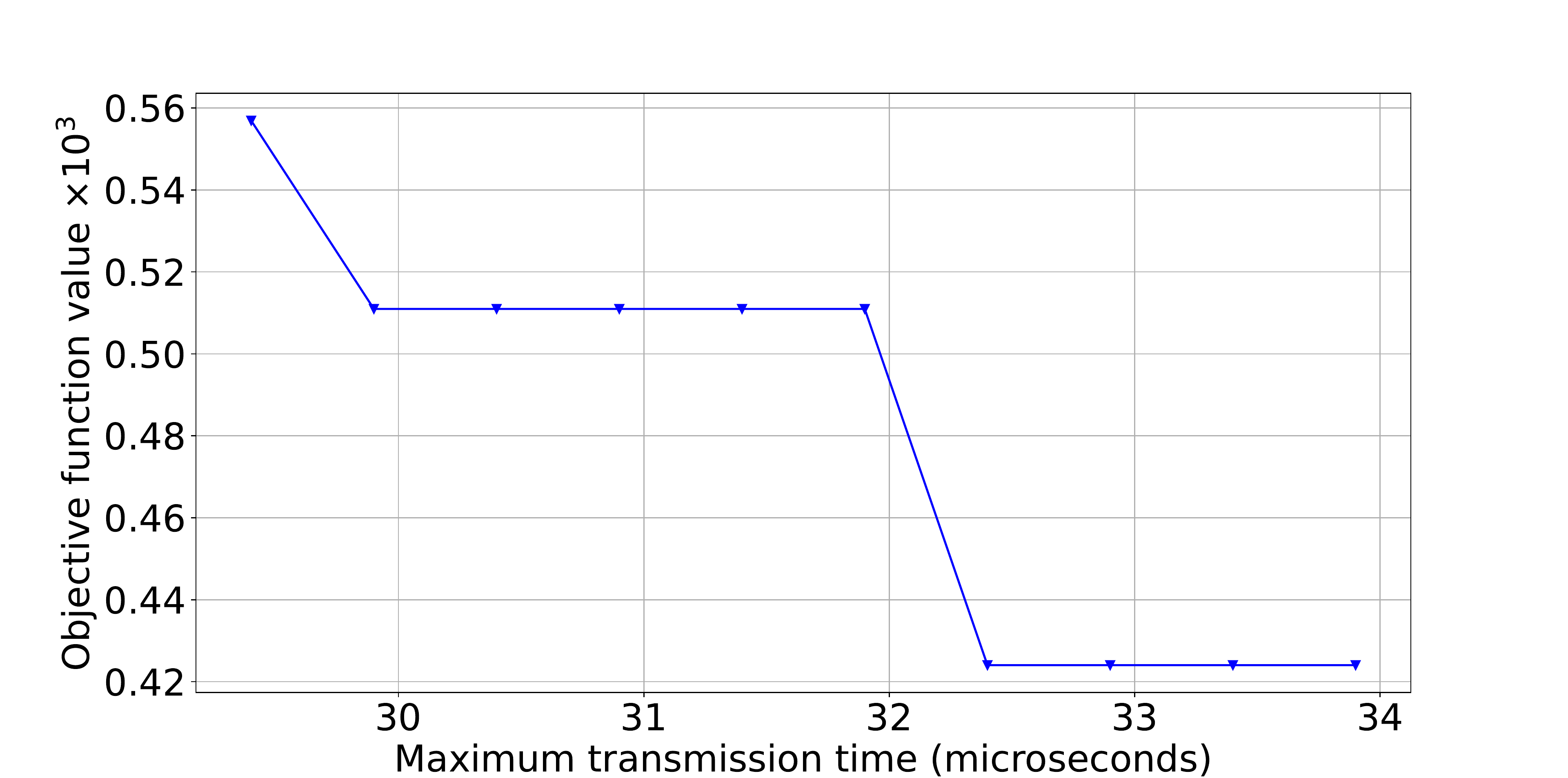}}
		\caption{The objective function of ($\Phi_2$) returned by Algorithm \ref{alg:dis-general-solution} when varying the maximum transmission time.}
		\label{fig:obj-value-vs-transmit-time}
	\end{figure}

To study the communication cost-effectiveness of the proposed algorithm, we compute the communication costs in Giga bits (Gbs) as the product of 4 integers including the number of training rounds, the number of selected mobile devices, i.e., $K = 1000$, the number of dimensions $d$, and the number of bits per quantized gradient element.
    In particular, the number of training rounds counts the number of global update rounds until the accuracy achieves 88\%.
    The reason behind the value 88\% is that this value is the accuracy threshold of the most feasible solutions in empirical.
    The communication costs of the proposed Algorithm \ref{alg:dis-general-solution} and four baseline approaches are presented in a bar chart in Fig. \ref{fig:com_costs}.
    The result of \textit{Sol. 3} is not presented here as its accuracy cannot achieve the threshold of 88\%. 
    It is clear that the proposed algorithm solution's communication cost is close to conventional FL and significantly lower than those of the other feasible solutions.
    In addition, the communication cost of the proposed algorithm is also lower than that of the FL with no quantization but with added noise, showing the effectiveness of the quantization mechanism.

\begin{figure}[t]		\centerline{\includegraphics[width=0.5\textwidth]{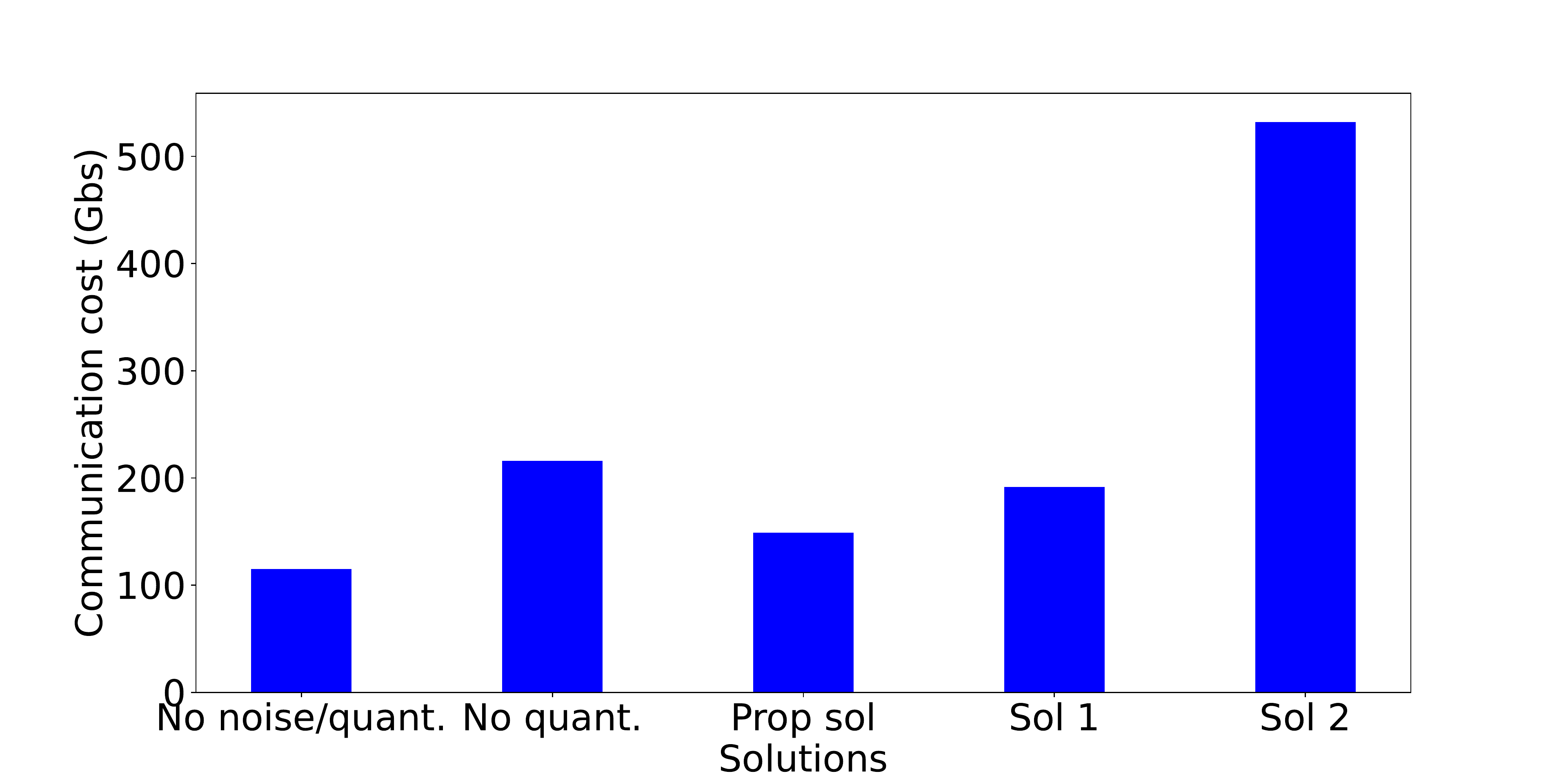}}
		\caption{The communication costs of the proposed solution and feasible solutions with the baseline of no quantization/noise and no quantization but with added noise.}
		\label{fig:com_costs}
	\end{figure}

Finally, we investigate the level quantization upper bound $\bar{q}$ derived from Lemma \ref{thr:narrow-quatization-level-1}.
    Figure \ref{fig:level-quantization-vary-Pk} shows this upper bound when varying the maximum transmit power.
    First, it shows that as the transmit power increases, the level quantization upper bound also increases.
    Second, it shows the efficiency of Lemma \ref{thr:narrow-quatization-level-1} to reduce the search range of level quantization $q$.
    In particular, when we set the number of bits per gradient element not to exceed 16 \cite{Agarwal2018}, instead of searching 
    the range from $2$ to $2^{16}$, we only need to consider the range from $2$ to $\bar{q}$, which is less than $2^{16}$ by a factor of 100, as shown in Fig. \ref{fig:level-quantization-vary-Pk}.

\begin{figure}[!]		\centerline{\includegraphics[width=0.5\textwidth]{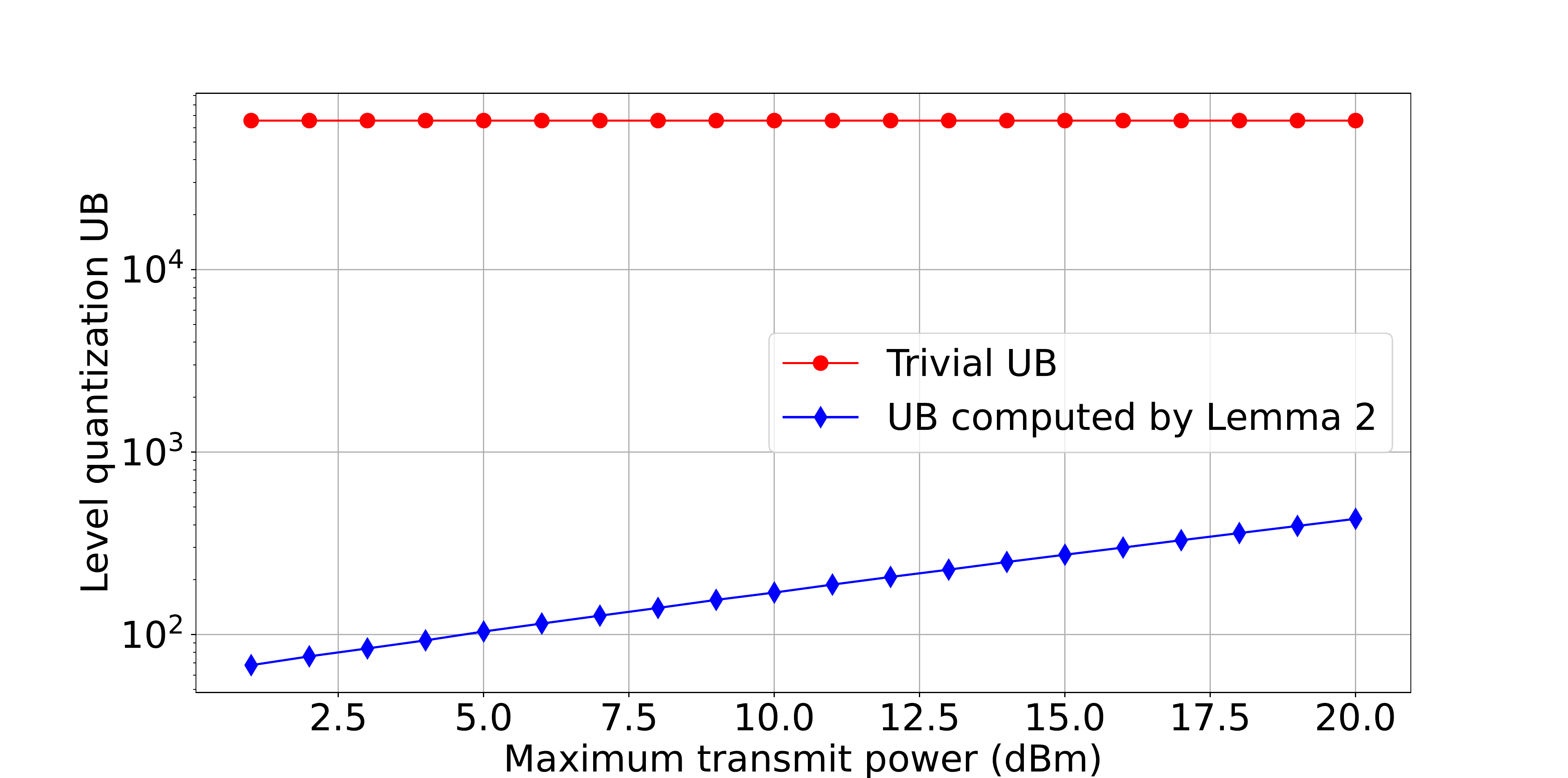}}
		\caption{The log plot of the level quantization upper bound v.s. the maximum transmit power.}
		\label{fig:level-quantization-vary-Pk}
	\end{figure}

\section{Conclusion} \label{s:conclusions}
In this paper, we first derived a  tighter than the state-of-the-art differential privacy budget estimation for Federated learning over mobile edge networks with quantized gradients and added noise (to protect mobile users' privacy).
    By analyzing the relationship between the convergence rate and the system parameters, i.e., the transmit power, the bandwidth, the transmission time, and the quantization/noise parameters, we provided a theoretical bound on the convergence rate.
    This bound was decomposed into two components, including the variance of the global gradient and the quadratic bias introduced by the quantization/noise mechanism.
    We then analyzed the theoretical and practical reasons to improve the convergence rate by optimizing the upper bound of the quadratic bias.
    We then jointly optimized the level of quantization, the Binomial mechanism's parameters, and the transmit power.
    The resulting problem is an MINLP.
    To tackle it, we transformed the problem into the new one whose optimal solutions are proved to be optimal solutions of the original one. We then designed an effective algorithm that can approximately solve the new problem with an arbitrary relative error.
    Extensive simulations showed with mostly the same wireless resources the proposed approach achieves an accuracy that is close to that of the conventional FL without quantization and without added noise while achieving the required DP protection. This suggested the faster convergence rate of the proposed wireless FL framework while optimally preserving users' privacy.

\appendices

\section{Proof of Theorem \ref{therm-new-estimation-privacy-budget}}
\label{new-estimation-privacy-budget-thrm-proof}
To prove Theorem \ref{therm-new-estimation-privacy-budget}, we need the following lemmas.
\begin{lemma}\label{inequality-lemma}
    For any $z \geq -1/3$ the following inequality holds
    \[|\ln{(1+z)} - z| \leq \alpha z^2,\]
    where $\alpha = -3 - 9\ln{\left(2/3\right)}$.
\end{lemma}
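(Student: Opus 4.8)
The plan is to reduce the two-sided bound to a one-sided one and then analyze a single auxiliary function by elementary calculus. First I would invoke the standard concavity inequality $\ln(1+z) \le z$, valid for all $z > -1$, which shows $\ln(1+z) - z \le 0$ on the whole domain. This lets me drop the absolute value: $|\ln(1+z) - z| = z - \ln(1+z)$, so it suffices to prove $z - \ln(1+z) \le \alpha z^2$ for every $z \ge -1/3$.

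Next I would introduce the auxiliary function $g(z) = \alpha z^2 - z + \ln(1+z)$ and establish $g(z) \ge 0$ on $[-1/3, \infty)$. Differentiating gives $g'(z) = 2\alpha z - 1 + \frac{1}{1+z} = z\big(2\alpha - \frac{1}{1+z}\big)$, which vanishes at $z = 0$ and at $z_0 = \frac{1}{2\alpha} - 1$. Since $2\alpha = -6 - 18\ln(2/3) \approx 1.30$, a short numerical check places $z_0 \approx -0.23$ inside $(-1/3, 0)$, and the factor $2\alpha - \frac{1}{1+z}$ changes sign from negative to positive as $z$ crosses $z_0$. Tracking the product of the two factors then fixes the sign pattern of $g'$: the function $g$ is increasing on $[-1/3, z_0]$, decreasing on $[z_0, 0]$, and increasing again on $[0, \infty)$. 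In particular $g$ has a local minimum at $z = 0$ with $g(0) = 0$, and the only other candidate for the global minimum on the interval is the left endpoint $z = -1/3$.

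The crux is the endpoint evaluation, which is exactly where the value of $\alpha$ enters. Substituting $\alpha = -3 - 9\ln(2/3)$ gives $\frac{\alpha}{9} = -\frac13 - \ln(2/3)$, whence $g(-1/3) = \frac{\alpha}{9} + \frac13 + \ln\frac23 = 0$ identically; this is precisely the calibration that defines $\alpha$ as the sharpest constant for which the inequality can hold on $[-1/3,\infty)$. Combining the three monotonicity facts with $g(-1/3) = g(0) = 0$ yields: $g \ge g(-1/3) = 0$ on $[-1/3, z_0]$ (increasing), $g \ge g(0) = 0$ on $[z_0, 0]$ (decreasing to $0$), and $g \ge g(0) = 0$ on $[0, \infty)$ (increasing from $0$). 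Hence $g(z) \ge 0$ throughout $[-1/3, \infty)$, which is the claimed bound.

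I do not expect a genuine obstacle here, as the whole argument is routine single-variable calculus. The only points requiring care are organizing the sign analysis of $g'$ cleanly and confirming that $z_0 \in (-1/3, 0)$, so that the domain splits into exactly the three monotone pieces above; once that is verified, the endpoint computation $g(-1/3) = 0$ closes the proof.
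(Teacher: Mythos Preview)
Your argument is correct. The concavity bound $\ln(1+z)\le z$ reduces the absolute value to $z-\ln(1+z)$, your factorization $g'(z)=z\bigl(2\alpha-\tfrac{1}{1+z}\bigr)$ is right, the numerical placement of $z_0=\tfrac{1}{2\alpha}-1\in(-1/3,0)$ checks out since $2\alpha\approx 1.30$, and the endpoint identity $g(-1/3)=0$ is exactly the equation that pins down $\alpha$. The three-piece monotonicity then gives $g\ge 0$ on the whole ray, with equality precisely at $z=-1/3$ and $z=0$, confirming that $\alpha$ is sharp.

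As for comparison with the paper: the paper states this lemma as an auxiliary tool in the appendix but does not supply a proof for it; it only remarks that the constant $\alpha=-3-9\ln(2/3)$ improves on the constant $1.95/3$ used in the reference \cite{Agarwal2018}. Your calculus argument therefore fills a gap that the paper leaves implicit, and in fact explains \emph{why} this particular value of $\alpha$ arises---it is the unique constant for which the auxiliary function vanishes at the left endpoint $z=-1/3$, making the inequality tight there as well as at $z=0$.
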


\begin{lemma}\label{basic-lemma}
    Considering the event $A$  that $||v_i -np||_{\infty} \leq \beta$ for $d$ Binomial variables $v_i \sim \mathcal{B}(n, p)$, $i \in \{1, \ldots, d\}$, and some $\beta \leq n\min\{p, 1-p\}/3$. For any $\delta > 0$ and $d$ arbitrary real numbers $t_1, t_2, \ldots, t_d$, $X$ denotes the following event:
    \begin{align}
        &\sum_{i = 1}^{d} t_i \left(\ln{\frac{(v_i+1)(1-p)}{(n-v_i+1)p}} - \frac{v_i+1}{np} + \frac{n-v_i+1}{n(1-p)}\right) \nonumber\\
        &\leq \frac{||\mathbf{t}||_1\alpha(np(1-p)+1)(p^2+(1-p)^2)}{\Pr(A)n^2p^2(1-p)^2} + \nonumber\\
        &\frac{||\mathbf{t}||_2\sqrt{S_1}}{\sqrt{\Pr(A)}}\sqrt{2\ln{\left(\frac{1}{\delta}\right)}} + \nonumber\\ &\frac{2||\mathbf{t}||_{\infty}\alpha(\beta+1)^2(p^2+(1-p)^2)}{3n^2p^2(1-p)^2}
        \ln{\left(\frac{1}{\delta}\right)}\label{bernstein-cons-lemma}
    \end{align}
where $S_1$ is defined by Eq. \eqref{s-1-formula} and $\mathbf{t}=(t_1, t_2, \ldots, t_d)$.
We have $\Pr(X|A) \geq 1 - \delta$.
\end{lemma}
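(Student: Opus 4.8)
The plan is to read inequality \eqref{bernstein-cons-lemma} as a Bernstein-type concentration bound for the sum $W = \sum_{i=1}^{d} t_i f(v_i)$, where $f(v) = \ln\frac{(v+1)(1-p)}{(n-v+1)p} - \frac{v+1}{np} + \frac{n-v+1}{n(1-p)}$, conditioned on the event $A$. The crucial structural observation is that $A = \bigcap_{i=1}^{d}\{|v_i - np| \le \beta\}$ is a product event over independent coordinates, so conditioning on $A$ preserves the independence of the $f(v_i)$; hence $W \mid A$ is a sum of independent bounded random variables and Bernstein's inequality applies. The three terms on the right-hand side of \eqref{bernstein-cons-lemma} then correspond respectively to a bound on the conditional mean $\mathbb{E}[W \mid A]$, the sub-Gaussian variance term $\sqrt{2\,\mathrm{Var}(W\mid A)\ln(1/\delta)}$, and the sub-exponential boundedness term.

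First I would linearize the logarithm. Writing $1 + z_{u} = (v+1)/(np)$ and $1 + z_{w} = (n-v+1)/(n(1-p))$, a short calculation gives the exact identity $f(v) = \bigl[\ln(1+z_{u}) - z_{u}\bigr] - \bigl[\ln(1+z_{w}) - z_{w}\bigr]$, with $z_{u} = (\xi+1)/(np)$ and $z_{w} = (1-\xi)/(n(1-p))$, where $\xi = v - np$. Using the hypothesis $\beta \le n\min\{p,1-p\}/3$ I would check that on $A$ both $z_u, z_w \ge -1/3$, so Lemma \ref{inequality-lemma} yields the pointwise bound $|f(v)| \le \alpha(z_u^2 + z_w^2)$ and, at the extreme value $|\xi| = \beta$, the almost-sure bound $|f(v)| \le M := \alpha(\beta+1)^2(p^2+(1-p)^2)/(n^2p^2(1-p)^2)$ on $A$. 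This $M$ directly furnishes the boundedness constant $b = \|\mathbf{t}\|_\infty M$ producing the third term via the factor $\tfrac{2}{3}\ln(1/\delta)$. For the mean, combining the pointwise bound with the conditioning inequality $\mathbb{E}[g \mid A] \le \mathbb{E}[g]/\Pr(A)$ (valid for $g \ge 0$, using $\Pr(A) \le \Pr(A_i)$) and the moments $\mathbb{E}[(\xi \pm 1)^2] = np(1-p) + 1$ gives $|\mathbb{E}[f(v_i)\mid A]| \le \alpha(np(1-p)+1)(p^2+(1-p)^2)/(\Pr(A)\,n^2p^2(1-p)^2)$; summing against $|t_i|$ reproduces the first term.

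The main obstacle is the variance term: I must show $\mathrm{Var}(f(v_i)\mid A) \le S_1/\Pr(A)$, and the intricate form of $S_1$ in \eqref{s-1-formula} makes clear that the crude pointwise bound $|f| \le \alpha(z_u^2+z_w^2)$ is too lossy here, since $S_1$ carries no factor $\alpha$. Instead I would pass to $\mathrm{Var}(f(v_i)\mid A) \le \mathbb{E}[f(v_i)^2\mid A] \le \mathbb{E}[f(v_i)^2]/\Pr(A)$ and then bound the unconditional second moment directly. The factorial denominator $n(n+1)(n+2)$ in $S_1$ signals that this step relies on exact inverse binomial moment identities (e.g.\ $\mathbb{E}[1/((v+1)(v+2))]$ and its counterpart for $n-v+1$), which I would most naturally reach through the discrete derivative $f(v+1)-f(v)$ together with a Poincaré-type variance inequality for the binomial law; carrying out these evaluations and collecting the polynomial-in-$p$ coefficients is what yields the factor $3p^2-3p+1$ and the bracket $[3n+2 + 2/(p(1-p))]$. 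This computation is the technical heart of the proof and where the bulk of the effort lies.

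Finally I would assemble the pieces: applying the standard Bernstein inequality to the centered independent summands $t_i\bigl(f(v_i) - \mathbb{E}[f(v_i)\mid A]\bigr)$ conditioned on $A$, with variance proxy $V \le \|\mathbf{t}\|_2^2 S_1/\Pr(A)$ and boundedness $b = \|\mathbf{t}\|_\infty M$, shows that with conditional probability at least $1-\delta$ the centered sum is at most $\sqrt{2V\ln(1/\delta)} + \tfrac{2b}{3}\ln(1/\delta)$; adding the mean bound then yields exactly \eqref{bernstein-cons-lemma}, i.e.\ $\Pr(X \mid A) \ge 1 - \delta$.
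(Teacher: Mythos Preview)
Your overall architecture---Bernstein conditioned on $A$, with the three terms identified as the conditional mean, the variance, and the almost-sure bound, and the linearization $f(v)=[\ln(1+z_u)-z_u]-[\ln(1+z_w)-z_w]$ via Lemma~\ref{inequality-lemma}---is exactly the paper's approach, and your bounds on $M$ and on $\mathbb{E}[f(v_i)\mid A]$ match theirs.

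The one place where your plan is vaguer than the paper is the variance term. You first propose to bound $\mathbb{E}[f(v_i)^2]$ directly and then mention a ``Poincar\'e-type variance inequality'' as the route to the inverse moments; the paper makes this precise by invoking the \emph{Efron--Stein inequality}, writing $v_i=\sum_{j=1}^n Z_j$ as a sum of i.i.d.\ Bernoulli$(p)$ variables and bounding $\mathrm{Var}(f(v_i))\le \tfrac{n}{2}\,\mathbb{E}\big[(f(\omega+Z_1)-f(\omega+Z_1'))^2\big]$ with $\omega\sim\mathrm{Bin}(n-1,p)$. After expanding this squared increment (which is nonzero only when $Z_1\ne Z_1'$, contributing the factor $2p(1-p)$), the computation reduces to bounding inverse moments of $\omega+1$ and $n-\omega$, and it is the sharper identities $\mathbb{E}[\omega!/(\omega+i)!]\le 1/(n(n+1)\cdots(n+i-1)p^i)$ and $\mathbb{E}[1/(\omega(n-\omega))]=\tfrac{1}{n}(\mathbb{E}[1/\omega]+\mathbb{E}[1/(n-\omega)])$ that produce the factorial denominator $n(n+1)(n+2)$ and the polynomial $3p^2-3p+1$ in $S_1$. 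So your ``discrete derivative plus Poincar\'e'' intuition is correct, but the concrete tool is Efron--Stein on the Bernoulli decomposition; note also that Efron--Stein bounds the variance, not the raw second moment, so your chain should read $\mathrm{Var}(f(v_i)\mid A)\le \mathbb{E}[f(v_i)^2\mid A]\le \mathbb{E}[f(v_i)^2]/\Pr(A)$ with $\mathbb{E}[f(v_i)^2]$ then controlled by $\mathrm{Var}(f(v_i))+\mathbb{E}[f(v_i)]^2$ (the squared-mean contribution being of lower order), or more directly use the Efron--Stein bound on the unconditional variance and absorb the conditioning as the paper does.
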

    We prove Lemma \ref{basic-lemma} in a similar way to the proof of Lemma 6 in \cite{Agarwal2018}, except for the following points.
    \begin{itemize}
        \item Instead of the inequality $|\ln{(1+z)} - z| \leq 1.95z^2/3$ as in \cite{Agarwal2018}, we use the stronger inequality $\ln{(1+z)} - z| \leq \alpha z^2$, where $\alpha = -3 - 9\ln{\left(2/3\right)} < 1.95/3$.
        \item We replace the term $2/(\omega(n-\omega))$ by $2/((\omega+1)(n-\omega))$ in the following inequality \cite{Agarwal2018}
        \begin{align*}            \bigg(\ln{\bigg(1+\frac{1}{\omega+1}\bigg)} + \ln{\bigg(1+\frac{1}{n-\omega}\bigg)} \\
        - \frac{1}{np(1-p)}\bigg)^2
        \leq \frac{1}{(\omega + 1)^2} + \frac{1}{(n-\omega)^2} + \frac{2}{\omega(n-\omega)} \\
        + \frac{1}{n^2\omega^2(1-p)^2}
        -\frac{2}{np(1-p)} \bigg(\frac{1}{\omega+1} - \frac{1}{2(\omega+1)^2} \\
        + \frac{1}{n-\omega} - \frac{1}{2(n-\omega)^2}\bigg),
        \end{align*}
        where $\omega$ is the sum of $n-1$ arbitrary independent Bernoulli variables.
        \item Instead of the inequality \cite{Agarwal2018}
        \[\mathbb{E}\left[\frac{1}{\omega(n-\omega)}\right] \leq \mathbb{E}\left[\frac{1}{\omega}\right] \mathbb{E}\left[\frac{1}{n-\omega}\right],\]
        we apply the following equality
        \[\mathbb{E}\left[\frac{1}{\omega(n-\omega)}\right] = \frac{1}{n} \left( \mathbb{E}\left[\frac{1}{\omega}\right] +  \mathbb{E}\left[\frac{1}{n-\omega}\right] \right).\]
        \item We replace the inequality \cite{Agarwal2018}
        \[\mathbb{E}\left[\frac{\omega!}{(\omega+i)!}\right] \leq \frac{1}{(np)^i}\]
        by a stronger inequality as follows
        \begin{equation}\mathbb{E}\left[\frac{\omega!}{(\omega+i)!}\right] \leq \frac{1}{n(n+1)\ldots(n+i-1)p^i}.\label{ieqlt:exp-bin-1}\end{equation}
        \item Applying the following equality
        \[\frac{1}{(\omega+1)^2} = \sum_{j=2}^{\infty} \frac{(j-2)!}{(\omega+1)(\omega+2)\ldots(\omega+j)}\]
        we obtain the following inequalities,
        \begin{align}
        &\mathbb{E}\left[\frac{1}{(\omega+1)^2}\right] \leq \frac{1}{n(n+1)p^2} + \nonumber \\
        &\quad \quad \quad \quad \quad \quad \quad \frac{2}{n(n+1)(n+2)p^3},\label{ieqlt:exp-bin-2}\\
        &\mathbb{E}\left[\frac{1}{(n-\omega)^2}\right] \leq \frac{1}{n(n+1)(1-p)^2} + \nonumber\\ 
        &\quad \quad \quad \quad \quad \quad \quad \frac{2}{n(n+1)(n+2)(1-p)^3},\label{ieqlt:exp-bin-3}\\
        &\mathbb{E}\left[\frac{1}{(\omega+1)^2(\omega+2)}\right] \leq \frac{2}{n(n+1)(n+2)p^3},\label{ieqlt:exp-bin-4}\\
        &\mathbb{E}\left[\frac{1}{(n-\omega)^2(n-\omega+1)}\right] \leq \frac{2}{n(n+1)(n+2)} \nonumber\\
        &\quad \quad \quad \quad \quad \quad \quad \quad \quad \quad 
        \quad \quad \quad \frac{1}{(1-p)^3}.\label{ieqlt:exp-bin-5}
        \end{align}
    \end{itemize}
    
We now sketch the proof for Lemma \ref{basic-lemma}. Similar to the proof of Lemma 6 in \cite{Agarwal2018}, we have Bernstein's inequality which states the following result:
\begin{align*}
    \Pr\left(\sum X_i \geq \sum \mathbb{E}[X_i] + \sqrt{2\sum\sigma_i^2 \ln{
    \frac{1}{\delta}}} + \frac{2}{3}M\ln{\frac{1}{\delta}}\right) \\
    \leq \delta,
\end{align*}
for any $\delta > 0$, where $X_1, \ldots, X_d$ are independent random variables satisfying  $|X_i| < M$ and $\sigma_i^2 = \mathbb{E}[X_i^2]$.
    The main idea is to apply Bernstein's inequality with
    \[X_i = t_i \left(\ln{\frac{(v_i+1)(1-p)}{(n-v_i+1)p}} - \frac{v_i+1}{np} + \frac{n-v_i+1}{n(1-p)}\right).\]

First, by applying Lemma \ref{inequality-lemma}, we bound the value of $X_i$ as follows
\begin{align*}
    |X_i| \leq \alpha t_i \frac{(\beta + 1)^2 \left( p^2 + (1-p)^2\right)}{n^2p^2(1-p)^2},
\end{align*}
then we set $M$ as following,
\begin{align*}
    M = \alpha ||\mathbf{t}||_{\infty} \frac{(\beta + 1)^2 \left( p^2 + (1-p)^2\right)}{n^2p^2(1-p)^2}.
\end{align*}
Second, the expectation of $X_i$ conditioned on $A$ is as follows
\begin{align}
    \mathbb{E}[X_i|A] \leq \frac{\alpha |t_i|}{\Pr(A)} \frac{\left(np(1-p) + 1\right)\left(p^2 + (1-p)^2\right)}{n^2p^2(1-p)^2}. \nonumber
\end{align}

Next, to bound the expectation value of $X_i^2$, which denoted as $\sigma_i^2$, we apply the Efron-Stein inequality:
\begin{align*}
    \mbox{Var}(f) \leq \frac{n}{2} \mathbb{E}[\left(f(Z_1, Z_2, \ldots, Z_n) - f(Z'_1, Z_2, \ldots, Z_n)\right)^2]
\end{align*}
where $f$ is a symmetric function of $n$ independent probability variables $Z_1, Z_2, \ldots, Z_n$, and $Z'_1$ is an i.i.d. copy of $Z_1$.
    Since the random variable $v_i$ could be considered as the sum of $n$ independent Bernoulli random variables of probability $p$, $X_i$ is a symmetric function of $n$ independent Bernoulli random variables of probability $p$.
    In particular, we have $v_i = \sum_{j=1}^n Z_i$.
    We denote $v'_i = \sum_{j \neq j'} Z_j + Z'_{j'}$ and $\omega = \sum_{j \neq j'} Z_j$.
    We prove the following inequality.
    \begin{align}
        \mathbb{E}[(X_i - X'_i)^2] = \nonumber\\
        \mathbb{E}\left[ \left( t_i \left(\ln{\frac{(v_i+1)(1-p)}{(n-v_i+1)p}} - \frac{v_i+1}{np} + \frac{n-v_i+1}{n(1-p)}\right) \right. \right.\nonumber\\
        \left. \left. - t_i \left(\ln{\frac{(v_i+1)(1-p)}{(n-v'_i+1)p}} - \frac{v'_i+1}{np} + \frac{n-v'_i+1}{n(1-p)}\right) \right)^2 \right] \nonumber\\
        \leq 2t_i^2p(1-p)\mathbb{E} \left[ \frac{1}{(\omega + 1)^2} + \frac{1}{(n-\omega)^2} + \frac{2}{(\omega + 1) (n-\omega)} \right. \nonumber\\
        \left. + \frac{1}{n^2p^2(1-p)^2} - \frac{2}{np(1-p)} \left(\frac{1}{\omega+1}\right. \right. \nonumber\\
        \left. \left. -\frac{1}{2(\omega+1)^2(\omega+1)} + \frac{1}{n-\omega} - \frac{1}{2(n-\omega)^2} \right) \right]. \nonumber 
    \end{align}

Applying inequalities \eqref{ieqlt:exp-bin-1}, \eqref{ieqlt:exp-bin-2}, \eqref{ieqlt:exp-bin-3}, \eqref{ieqlt:exp-bin-4}, and \eqref{ieqlt:exp-bin-5}, we get the following inequality
\begin{align}
    \mathbb{E}[(X_i - X'_i)^2] \leq 2t_i^2 \frac{3p^2-3p+1}{n^2(n+1)(n+2)p^3(1-p)^3}\bigg[ 3n+2 \nonumber\\
    + \frac{2}{(1-p)p} \bigg]. \nonumber
\end{align}
Hence, we obtain an upper bound value of $\sigma_i^2$ as follows
\begin{align}
    \sigma_i^2 \leq \frac{t_i^2}{\Pr(A)} \frac{3p^2-3p+1}{n(n+1)(n+2)p^2(1-p)^2}\bigg[ 3n+2 \nonumber\\
    + \frac{2}{(1-p)p} \bigg]. \nonumber
\end{align}
Inequality \eqref{bernstein-cons-lemma} is then proved by applying Bernstein's inequality.

Applying Lemma \ref{basic-lemma} and following the proof of Theorem 1 in \cite{Agarwal2018}, we obtain Theorem \ref{therm-new-estimation-privacy-budget}. \qed

\section{Proof of Proposition \ref{epsilon-properties-remark}}
\label{remark-epsilon-est-pro}
The properties of the privacy budget estimation function in Eq. \eqref{new-privacy-budget-estimation} are proved in appendixes \ref{proof-of-theorem-relationship-P1-P2-models} and \ref{proof-of-replace-p-by-1p-if-plt1o2}.

\section{Proof of Theorem \ref{theorem:1}}
\label{app:c}
Similar to \cite{Agarwal2018}, when the Binomial mechanism and level quantization $\mathcal{M}$ are employed and the learning rate $\gamma$ satisfying $\gamma = \min\{1/L, \sqrt{2G_f} / (\sigma\sqrt{LT})\}$, after SLQBM-FL runs $T$ iterations, we have the following inequality:
	\begin{equation*}
	    \mathbb{E}_{t\sim(\mbox{Unif}[T])} [\Vert \nabla F(\mathbf{w}^t) \Vert^2] \le \frac{2G_fL}{T} + \frac{2\sqrt{2LG_f}}{\sqrt{T}}\sigma + \sqrt{d}GC,
	\end{equation*}
	where ${E}_{t\sim(\mbox{Unif}[T])}[\cdot]$ is the expectation of 2-norm gradient when $t$ is uniformly sampled from $T$ iterations and:
	\begin{align*}
	    \sigma^2 =& \max_{1 \le t \le T} \mathbb{E}\left[\Vert \mathbf{g}(\mathbf{w}^t) - \nabla F(\mathbf{w}^t)\Vert^2 \right] \\
	    &+ \max_{1 \le t \le T} \mathbb{E}_{\mathcal{M}}\left[\Vert \mathbf{g}(\mathbf{w}^t) - \tilde{\mathbf{g}}(\mathbf{w}^t) \Vert^2 \right], \\
	    C =& \max_{1 \le t \le T} \Vert \mathbb{E}_{\mathcal{M}} \left[ \mathbf{g}(\mathbf{w}^t) - \tilde{\mathbf{g}}(\mathbf{w}^t) \right] \Vert.
	\end{align*}
	
Applying the Cauchy–Schwarz inequality, we have: $ \Vert \mathbb{E}_{\mathcal{M}} \left[ \mathbf{g}(\mathbf{w}^t) - \tilde{\mathbf{g}}(\mathbf{w}^t) \right] \Vert^2 \leq \mathbb{E}_{\mathcal{M}}\left[\Vert \mathbf{g}(\mathbf{w}^t) - \tilde{\mathbf{g}}(\mathbf{w}^t) \Vert^2 \right]$.
    Therefore, we get
	\begin{align}
		\mathbb{E}_{t\sim(\mbox{Unif}[T])} [\Vert \nabla F(\mathbf{w}^t) \Vert^2] \le \frac{2G_fL}{T} + \frac{2\sqrt{2LG_f}}{\sqrt{T}}\sigma \nonumber\\
		+ \sqrt{d}G\sqrt{B},
		\label{eq:7}
	\end{align}
	where: 
	\begin{alignat}{3}
		B &= \max_{1 \le t \le T} \mathbb{E}_{\mathcal{M}}\left[\Vert \mathbf{g}(\mathbf{w}^t) - \tilde{\mathbf{g}}(\mathbf{w}^t) \Vert^2 \right], 	\label{eq:8}\\
		U &= \max_{1 \le t \le T} \mathbb{E}\left[\Vert \mathbf{g}(\mathbf{w}^t) - \nabla F(\mathbf{w}^t)\Vert^2 \right],	\label{eq:9}\\
		\sigma^2 &= B + U, \label{eq:10}
	\end{alignat}	 			
	here, $U$ is the variance of the global gradient, and $B$ represents the quadratic bias introduced by $\mathcal{M}$. 	
	If $B = 0$, $\tilde{\mathbf{g}}(\mathbf{w}^t)$ is an unbiased estimation of $\nabla F(\mathbf{w}^t)$ and the SLQBM-FL becomes unbiased with $\Vert \mathbf{g}(\mathbf{w}^t) - \nabla F(\mathbf{w}^t)\Vert^2$ is bounded by $\sigma^2$. Equation \eqref{eq:7} indicates that the algorithm is expected to converge when $T \rightarrow +\infty$.
	 
For the sake of convenience, we denote the following:
	\begin{equation}
		\mathcal{J} = \frac{2G_fL}{T} + \frac{2\sqrt{2LG_f}}{\sqrt{T}}\sigma + \sqrt{d}G\sqrt{B},
	\end{equation}
then from \eqref{eq:7} we get
	\begin{equation}
		\mathbb{E}_{t\sim(\mbox{Unif}[T])} [\Vert \nabla F(\mathbf{w}^t) \Vert^2] \le \mathcal{J}.
		\label{eq:28}
	\end{equation}
On the other hand,  for some $\tau > 0$, Markov's inequality states that
	\begin{equation}
		\Pr\{X \leq \tau\mathbb{E}[X]\} \geq 1- \frac{1}{\tau}.
		\label{eq:29}
	\end{equation}
	 From \eqref{eq:28} and \eqref{eq:29}, we obtain
	\begin{equation}
		\Pr\{\Vert \nabla F(\mathbf{w}^t) \Vert^2 \leq \tau\mathcal{J}\} \geq 1- \frac{1}{\tau},
	\end{equation}
	and citing the proof of B-SGD in \cite{Ghadimi2013}. Theorem 1 is proved.
     \footnote{Following the instructions in \cite{Ghadimi2013}, we can derive the exact number of iterations performed by SLQBM-FL to achieve an $(\theta, \Lambda)$-solution as follows:
            \[\bigg( \frac{LG_f \sigma + \sqrt{L^2G_f^2\sigma^2 + \theta\Lambda L^2G_f^2}}{\theta\Lambda}\bigg)^2,\]
    which is simplified as:
    \[\mathcal{O}\bigg(\frac{1}{\theta\Lambda} + \frac{\sigma^2}{\theta^2\Lambda^2}\bigg).\]
            }\qed
	

\section{Proof of Theorem \ref{upperbound-convergence-rate}}\label{U-B-upperbound-proof}
We denote the elements of vectors $\mathbf{g}_k(\cdot)$ and $\nabla f_k(\cdot)$ as $g^j_k(\cdot)$ and $\nabla f^j_k(\cdot)$, for $1 \leq j \leq d$, respectively.
    We have: $\mathbf{g}_k(\mathbf{w}^t$) = $\nabla f_k(\mathbf{w}^t)$ for $k \in \mathcal{K}$.

We have:
\begin{align*}
    U &=& \max_{1 \leq t \leq T} \mathbb{E}\big[\big|\big| \mathbf{g}(\mathbf{w}^t)-\nabla F(\mathbf{w}^t)\big|\big|^2\big] \\
    &=& \max_{1 \leq t \leq T} \mathbb{E}\bigg[\bigg|\bigg|\frac{1}{K} \sum_{k\in\mathcal{K}} \mathbf{g}_k(\mathbf{w}^t) - \frac{1}{M}\sum_{k=1}^{M} \nabla f_k(\mathbf{w}^t)\bigg|\bigg|^2\bigg]
\end{align*}
\begin{align*}
    &=& \frac{1}{K^2M^2}\max_{1 \leq t \leq T} \mathbb{E}\bigg[\bigg|\bigg|M \sum_{k\in\mathcal{K}} \mathbf{g}_k(\mathbf{w}^t) \\
    &&- K\sum_{k=1}^{M} \nabla f_k(\mathbf{w}^t)\bigg|\bigg|^2\bigg]
\end{align*}
\begin{align*}
    &=& \frac{1}{K^2M^2}\max_{1 \leq t \leq T} \mathbb{E}\bigg[\bigg|\bigg|(M-K) \sum_{k\in\mathcal{K}} \mathbf{g}_k(\mathbf{w}^t) \\
    &&- K\sum_{k=1, \mbox{ }  
k \notin \mathcal{K}}^{M} \nabla f_k(\mathbf{w}^t)\bigg|\bigg|^2\bigg] \\
\end{align*}
\begin{align*}
    &=& \frac{1}{K^2M^2} \sum_{j=1}^{d}\max_{1 \leq t \leq T} \mathbb{E}\bigg[\bigg((M-K) \sum_{k\in\mathcal{K}} g^j_k(\mathbf{w}^t) \\
    &&- K\sum_{k=1, \mbox{ }  
k \notin \mathcal{K}}^{M} \nabla f^j_k(\mathbf{w}^t)\bigg)^2\bigg] \\
   &\leq& \frac{1}{K^2M^2} \sum_{j=1}^{d}\max_{1 \leq t \leq T} \mathbb{E}\bigg[\bigg((M-K) \sum_{k\in\mathcal{K}} \big| g^j_k(\mathbf{w}^t)\big| \\
    &&+ K\sum_{k=1, \mbox{ } 
k \notin \mathcal{K}}^{M} \big| \nabla f^j_k(\mathbf{w}^t)\big|\bigg)^2\bigg]
\end{align*}
\begin{align*}
&\leq&\frac{d}{K^2M^2}\bigg[ (M-K)KG + (M-K)KG \bigg]^2\\
&=& \frac{4d(M-K)^2G^2}{M^2}.
\end{align*}

Inequality \eqref{U-upperbound} is proved.
    Let's consider the case that the means of each gradient element of all devices are identical, i.e., $\mathbb{E}[\nabla f^j_k({w}^t)] = \mathbb{E}[\nabla f^{j}_{k'}({w}^t)] = F_j$, for $1 \leq k \neq k' \leq M$ and $\nabla f^j_k(\cdot)$ are independent of each other over all devices.
    In practice, the gradient values of selected devices are aggregated. Therefore, they are not independent, i.e., the correlation between the gradient elements $\mbox{Corr}(\nabla f_k^j(\mathbf{w}^t), f_{k'}^j(\mathbf{w}^t)) \neq 0$.
    However, we quantize and add random noise to each gradient element.
    Moreover, in the Federated learning framework where $K$ devices are chosen randomly and the total number of devices is very large compared to the chosen number of devices, i.e., $M >> K$, the correlation between $\nabla f_k^j(\mathbf{w}^t$) and $\nabla f_{k'}^j(\mathbf{w}^t$) is small in comparison to $G$, for $k \neq k'$.
    Thus, the case where the gradient elements are i.i.d. is worth considering.
\begin{align*}
    U &=& \frac{1}{K^2M^2} \sum_{j=1}^{d}\max_{1 \leq t \leq T} \mathbb{E}\bigg[\bigg((M-K) \sum_{k\in\mathcal{K}} \big(g^j_k(\mathbf{w}^t) \\
    &&- F_j\big) - K\sum_{k=1, \mbox{ } 
k \notin \mathcal{K}}^{M} \big(\nabla f^j_k(\mathbf{w}^t) - F_j \big)\bigg)^2\bigg] \\
    &\overset{(a)}{=}& \frac{1}{K^2M^2} \sum_{j=1}^{d}\max_{1 \leq t \leq T} \bigg[(M-K) \sum_{k\in\mathcal{K}} \mbox{Var}\big(g^j_k(\mathbf{w}^t)\big) \\
    &&+ K\sum_{k=1,\mbox{ }  
k \notin \mathcal{K}}^{M} \mbox{Var}\big(\nabla f^j_k(\mathbf{w}^t)\big)\bigg] \\
    &\overset{(b)}{\leq}& \frac{d}{K^2M^2} 2K(M-K)4G^2 = \frac{8(M-K)}{M^2} \frac{dG^2}{K},
\end{align*}
where $(a)$ uses the facts that $g^j_k(\mathbf{w}^t)$ and $g^j_{k'}(\mathbf{w}^t)$ are independent for $k \neq k'$, and $\mathbb{E}[g^j_k(\mathbf{w}^t) - F_j] = 0$. $(b)$ uses the fact that $\mbox{Var}(\nabla f_k^j(\mathbf{w}^t)) \leq 4G^2$ since $|\nabla f_k^j(\mathbf{w}^t)| < G$. We derive Inequality \eqref{assumped-U-upperbound}.

Inequality \eqref{B-upperbound} is proved similarly to the proof of Theorem 3 in \cite{Agarwal2018}.
Recall that, $\tilde{g}^j_k(\mathbf{w}^t) = s \big(\big\lfloor g^j_k(\mathbf{w}^t) / s \big\rfloor + \delta^j_k + z^j_k - np\big)$, where:
\begin{align*}
    s = \frac{2G}{l-1},
\end{align*}
\begin{equation*}
    \delta_k^j =
	\begin{cases}
	1 & \mbox{with probability } 
 \frac{g_k^j(\mathbf{w}^t)}{s} 
 -  \big\lfloor\frac{g_k^j(\mathbf{w}^t)}{s}\big\rfloor, \\
	0 & \mbox{otherwise},
	\end{cases}	
\end{equation*}
\begin{equation*}
    z^j_k \sim \mathcal{B}(n, p),
\end{equation*}
we have:
\begin{align*}
    B &=& \max_{1 \leq t \leq T} \mathbb{E} \big[ \big|\big| \tilde{\mathbf{g}}(\mathbf{w}^t) - \mathbf{g}(\mathbf{w}^t)\big|\big|^2\big]\\
    &=& \qquad\space\space \frac{1}{K^2} \max_{1 \leq t \leq T} \mathbb{E} \bigg[ \bigg|\bigg| \sum_{k\in\mathcal{K}} \big(\tilde{\mathbf{g}}_k(\mathbf{w}^t) - \mathbf{g}_k(\mathbf{w}^t)\big) \bigg|\bigg|^2\bigg]
\end{align*}
\begin{align*}
    &=& \frac{1}{K^2} \sum_{j=1}^d \max_{1 \leq t \leq T} \mathbb{E} \bigg[ \bigg( \sum_{k\in\mathcal{K}} \big(\tilde{g}^j_k(\mathbf{w}^t) - g^j_k(\mathbf{w}^t)\big) \bigg)^2\bigg]
\end{align*}
\begin{align*}
    &=& \frac{1}{K^2} \sum_{j=1}^d \max_{1\leq t \leq T} \mathbb{E} \bigg[ s^2 \bigg( \sum_{k\in\mathcal{K}} \bigg( \delta^j_k + \bigg\lfloor \frac{g^j_k(\mathbf{w}^t)}{s} \bigg\rfloor - \frac{g^j_k(\mathbf{w}^t)}{s}
\end{align*}
\begin{align*}
    && + z^j_k - np \bigg) \bigg)^2 \bigg]
\end{align*}
\begin{align*}
    &=& \frac{s^2}{K^2} \sum_{j=1}^d \max_{1 \leq t \leq T} \mathbb{E} \bigg[ \bigg( \sum_{k\in\mathcal{K}} \bigg( \delta^j_k + \bigg\lfloor \frac{g^j_k(\mathbf{w}^t)}{s} \bigg\rfloor - \frac{g^j_k(\mathbf{w}^t)}{s} \bigg)
\end{align*}
\begin{align*}
    && + \sum_{k \in \mathcal{K}} \big(z^j_k - np \big) \bigg)^2 \bigg] \\
    &\overset{(c)}{=}& \frac{s^2}{K^2} \sum_{j=1}^d \max_{1 \leq t \leq T} \bigg\{ \mathbb{E} \bigg[ \bigg( \sum_{k\in\mathcal{K}}  \bigg( \delta^j_k + \bigg\lfloor \frac{g^j_k(\mathbf{w}^t)}{s} \bigg\rfloor   
\end{align*}
\begin{align*}
    && - \frac{g^j_k(\mathbf{w}^t)}{s} \bigg)\bigg)^2 \bigg] + \mathbb{E} \bigg[ \sum_{k \in \mathcal{K}} \bigg( z^j_k - np \bigg)^2 \bigg] \bigg\} \\
    &=& \frac{s^2}{K^2} \sum_{j=1}^d \max_{1 \leq t \leq T}  \mathbb{E} \bigg[ \bigg( \sum_{k\in\mathcal{K}} \bigg( \delta^j_k + \bigg\lfloor \frac{g^j_k(\mathbf{w}^t)}{s} \bigg\rfloor  \\
    && - \frac{g^j_k(\mathbf{w}^t)}{s} \bigg) \bigg)^2 \bigg] + \frac{s^2}{K^2} \sum_{j=1}^d \sum_{k\in\mathcal{K}} \mbox{Var}(\mathcal{B}(n, p)) \\
    &=& \frac{s^2}{K^2} \sum_{j=1}^d \max_{1 \leq t \leq T} \mathbb{E} \bigg[ \bigg( \sum_{k\in\mathcal{K}} \bigg( \delta^j_k + \bigg\lfloor \frac{g^j_k(\mathbf{w}^t)}{s} \bigg\rfloor  \\
    &&  - \frac{g^j_k(\mathbf{w}^t)}{s} \bigg) \bigg)^2 \bigg] + \frac{4G^2dnp(1-p)}{K(q-1)^2},
\end{align*}
where $(c)$ uses the facts that $\delta^j_k + \big\lfloor g^j_k(\mathbf{w}^t)/s \big\rfloor - g^j_k(\mathbf{w}^t)/s$ and $z^j_k - np$ are independent, $z^j_k - np$ and $z^j_{k'} - np$ are independent for $k \neq k'$, and $\mathbb{E} \big[\delta^j_k + \big\lfloor g^j_k(\mathbf{w}^t)/s \big\rfloor - g^j_k(\mathbf{w}^t)/s\big] = \mathbb{E}\big[z^j_k - np\big] = 0$. On the other hand, since: $0 \leq \big(\delta_k^j + \big\lfloor g^j_k(\mathbf{w}^t)/s \big\rfloor| - g^j_k(\mathbf{w}^t)/s 
\big) \leq 1$, we have: 
$$0 \leq \mathbb{E} \bigg[ \bigg( \sum_{k \in \mathcal{K}}\bigg( \delta^j_k + \bigg\lfloor \frac{g^j_k(\mathbf{w}^t)}{s} \bigg\rfloor - \frac{g^j_k(\mathbf{w}^t)}{s} \bigg) \bigg)^2 \bigg] \leq K.$$
Consequently, we obtain the following result:
\begin{align*}
    \frac{4G^2dnp(1-p)}{K(q-1)^2} \leq B \leq \frac{4G^2d\big(1+np(1-p)\big)}{K(q-1)^2}.
\end{align*}
Theorem \ref{upperbound-convergence-rate} is proved.
\qed

\section{Proof of Theorem \ref{relationship-P1-P2-models}}
\label{proof-of-theorem-relationship-P1-P2-models}
At first, we state Lemma \ref{theorem:state-of-value-m} to aid the proof of Theorem \ref{relationship-P1-P2-models}.
    \begin{lemma} [The domain range of the Binomial trial number $n$ in dependence to the level quantization $q$, Binomial distribution's parameter $p$ and privacy budget upper bound $\bar{\epsilon}$]
    \label{theorem:state-of-value-m}
    	    For each upper bound $\bar{\epsilon}$ of privacy budget, fixing values of $q$ and $p$ there exists an integer $n_1$ such that $\epsilon(n) \leq \bar{\epsilon}$ if and only if $n \geq n_1$.
	\end{lemma}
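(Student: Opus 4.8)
The plan is to reduce the lemma to a single structural fact: with $q$ and $p$ held fixed, the privacy budget estimate $\epsilon(n)$ of Eq.~\eqref{new-privacy-budget-estimation} is a non-increasing function of the Binomial trial number $n$ that tends to $0$ as $n \to \infty$. Given this, the threshold $n_1$ is simply the smallest admissible $n$ at which the budget first drops to or below $\bar\epsilon$, and the ``if and only if'' follows immediately from monotonicity. Note that non-strict monotonicity already suffices for both directions, so I would not bother proving strict decrease.

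First I would establish the monotonicity term by term. The sensitivity bounds $\Delta_1, \Delta_2, \Delta_\infty$ in Eqs.~\eqref{eq:Delta_1-1-exp}--\eqref{eq:Delta_inf-1-exp} depend only on $q$ (through $s = 2D/(q-1)$), $d$, and $\delta$, hence are constants in $n$; likewise $\alpha = -3 - 9\ln(2/3)$ is a positive constant, and $p^2 + (1-p)^2$, $\max\{p,1-p\}$ are fixed positive numbers. The first and last terms of Eq.~\eqref{new-privacy-budget-estimation} are positive constants over $\sqrt{np(1-p)}$ and $np(1-p)$ respectively, hence manifestly decreasing. The second term is a positive constant times $\frac{1}{np(1-p)} + \frac{1}{n^2 p^2(1-p)^2}$, a sum of decreasing functions. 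For the fourth term, substituting the expansion $S_2 = A^2 n + 2AB_0\sqrt{n} + B_0^2$ with $A = \sqrt{2p(1-p)\ln\frac{20d}{\delta}}$ and $B_0 = 1 + \frac{2}{3}\max\{p,1-p\}\ln\frac{20d}{\delta}$ turns $S_2/n^2$ into $A^2/n + 2AB_0 n^{-3/2} + B_0^2/n^2$, again a sum of decreasing functions. The remaining (third) term is proportional to $\sqrt{S_1}$, so it suffices to show $S_1$ decreases; writing $S_1$ from Eq.~\eqref{s-1-formula} as the fixed positive factor $\frac{3p^2 - 3p + 1}{p^2(1-p)^2}$ (positive since $3p^2 - 3p + 1 = 3(p-\tfrac12)^2 + \tfrac14 > 0$) times $h(n) = \frac{3n + a}{n(n+1)(n+2)}$ with $a = 2 + \frac{2}{p(1-p)} \ge 10$, a direct check of the sign of $h(n+1) - h(n)$ confirms $h$ is decreasing for all integers $n \ge 2$.

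Next I would verify the limit. After the $S_1$ and $S_2$ substitutions, every term is a constant multiple of a negative power of $n$, the slowest-decaying being the order-$n^{-1/2}$ contribution of the first term; hence $\epsilon(n) \to 0$ as $n \to \infty$. Since $\bar\epsilon > 0$, the set $\{n \in \mathcal{N} : \epsilon(n) \le \bar\epsilon\}$ is nonempty. I would then define $n_1 = \min\{n : \epsilon(n) \le \bar\epsilon\}$, which exists by nonemptiness: for $n \ge n_1$, monotonicity gives $\epsilon(n) \le \epsilon(n_1) \le \bar\epsilon$; for $n < n_1$, minimality together with monotonicity gives $\epsilon(n) \ge \epsilon(n_1 - 1) > \bar\epsilon$. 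This is exactly the equivalence claimed, and as a byproduct it records property (ii) of Proposition~\ref{epsilon-properties-remark}.

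I expect the only delicate step to be the monotonicity of the $\sqrt{S_1}$ term, whose numerator grows linearly while its denominator is cubic, so the sign of the difference $h(n+1) - h(n)$ must be checked rather than read off; all other terms are transparently decreasing and the limit computation is routine.
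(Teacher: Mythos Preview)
Your proposal is correct and follows essentially the same route as the paper: both argue term by term that the right-hand side of Eq.~\eqref{new-privacy-budget-estimation} is monotonically decreasing in $n$ for fixed $q,p$, from which the threshold $n_1$ follows. The only cosmetic difference is that the paper handles $S_1$ via the split $\tfrac{3n+a}{n(n+1)(n+2)} = \tfrac{3}{(n+1)(n+2)} + \tfrac{a}{n(n+1)(n+2)}$ (two transparently decreasing pieces) rather than your direct sign check of $h(n+1)-h(n)$, and it omits your explicit limit/existence argument for $n_1$.
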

	\begin{proof}
        We transform some terms of the right-hand side of Eq. \eqref{new-privacy-budget-estimation} as follows,
        \begin{align*}
            \frac{np(1-p)+1}{n^2p^2(1-p)^2} = \frac{1}{np(1-p)} + \frac{1}{n^2p^2(1-p)^2},
        \end{align*}
        \begin{align*}
            S_1 &=& \frac{3(3p^2-3p+1)}{(n+1)(n+2)p^2(1-p)^2}\\
            &&+ \frac{(3p^2-3p+1)\left( 2 + \frac{2}{p(1-p)} \right)}{n(n+1)(n+2)p^2(1-p)^2},
        \end{align*}
        \begin{align*}
            \frac{S_2}{n^2p^2(1-p)^2} &=& \frac{\sqrt{2\ln{\frac{20d}{\delta}}}}{n^{\frac{3}{2}}p^{\frac{3}{2}}(1-p)^{\frac{3}{2}}}\\
            &&+ \frac{1 + \frac{2}{3}\max\{p, 1-p\}}{n^2p^2(1-p)^2}.
        \end{align*}
        Replace the above equations into Eq. \eqref{new-privacy-budget-estimation}, it is clear that all terms monotonically decrease as $n$ increases by fixing $p$ and $q$.
        Lemma \ref{theorem:state-of-value-m} is proved.
        \end{proof}
    
Let's consider an arbitrary feasible solution $\mathcal{S}_2 = (q_2, n_2, p_2, P_{k2})$ of $(\Phi_2)$.
    Based on constraint \eqref{eq:value_m}, we get $n_2 \geq \lfloor \max\{23 \ln(10d / \delta), 2(q_2 + 1)\} / (Kp_2(1 - p_2)) \rfloor$ that leads to $Kn_2p_2(1 - p_2) \geq \max\{23 \ln(10d / \delta), 2(q_2 + 1)\}$ according to the constraint \eqref{eq:DP_cons}.
    It is clear that constraint \eqref{eq:P_k_compute_cons} leads to $P_{k2} \geq \omega_0[(q_2 + n_2)^{d / (TW)} - 1] / h_k$ that is equivalently to $d \log_2(q_2 + n_2) \leq TW \log_2(1 + P_{k2}h_k / \omega_0)$.
    It proves that solution $\mathcal{S}_2$ satisfies the constraint \eqref{eq:channel_cap_cons}.
    The constraints \eqref{eq:P_k_compute_cons} implies that $P_k^{\min} \leq P_{k2} \leq P_k^{\max}$. Thus, the constraints \eqref{eq:trans_pow_cons} are satisfied. Therefore,
    $\mathcal{S}_2$ satisfies all the constraints of ($\Phi_1$) then it is also a feasible solution of ($\Phi_1$).
	
Next, we will prove that any optimal solution of $(\Phi_2)$ is also an optimal solution of $(\Phi_1)$.
    We consider an arbitrary optimal solution $\mathcal{S}^* = (q^*, n^*, p^*, P_{k}^*)$ of $(\Phi_1)$.
    Since $S^*$ is an optimal solution of ($\Phi_1$) and $\varphi$ is an increasing function of $n$ when fixing $q$ and $p$, $\mathcal{S}^*$ satisfies the constraint \eqref{eq:value_m}.
    Consider $P_k^{*}$, we have can see that $P_k^{*} \geq P_k^{\min}$ and $P_k^{*} \geq \omega_0 [(q^{*} + n^{*})^{d/(TW)} - 1] / h_k$.
    Therefore, $P_k^{\min} \leq \max\{P_k^{\min}, \omega_0[(q^* + n^*)^{d/(TW)} - 1] / h_k\} \leq P_k^* \leq P_k^{\max}$.
    We consider $P_k' = \max\{P_k^{\min}, \omega_0[(q^* + n^*)^{d/(TW)} - 1] / h_k\}$.
    It is clear that  $\mathcal{S}' = (q^*, n^*, p^*, P_k')$ is a feasible solution of ($\Phi_2$).
    On the other hand, since $\varphi(\mathcal{S}') = \varphi(\mathcal{S}^*)$, the optimal objective function value of ($\Phi_1$) is not less than the optimal objective function value of ($\Phi_2$).
    But as proved above, the feasible solution set of ($\Phi_2$) is a subset of the feasible solution set of ($\Phi_1$).
    Thus, $\mathcal{S}'$ is also an optimal solution of ($\Phi_2$). Therefore, any optimal solution of ($\Phi_2$) is also an optimal solution of ($\Phi_1$).
    Statement \textit{(i)} of Theorem \ref{relationship-P1-P2-models} is proved.
	
As proved above, considering any optimal solution $S^*$ of ($\Phi_1$) we can compute an optimal solution $S'$ of ($\Phi_2$).
    Therefore, if ($\Phi_1$) is feasible then ($\Phi_2$) is also feasible.
    In addition, by applying proof by contradiction, we get that if ($\Phi_2$) is infeasible then ($\Phi_1$) is also infeasible. Statement \textit{(ii)} of Theorem \ref{relationship-P1-P2-models} is proved.

Now, we have the following observations:
\begin{itemize}
	    \item If $(q^*, n^*, p^*, P_k^*)$ is an optimal solution of ($\Phi_2$), $(q^*, n^*, p^*, \tilde{P}_k)$ is also optimal solution of ($\Phi_1$), for $P_k^* \leq \tilde{P}_k \leq P_k^{\max}$.
	    \item If $(q^*, n^*, p^*, P_k^*)$ is an optimal solution of ($\Phi_1$), $(q^*, n^*, p^*, \tilde{P}_k)$ is also optimal solution of ($\Phi_2$), where $\tilde{P}_k = \max \{P_k^{\min}, \omega_0 [ (q^* + n^*)^{d/(TW)} - 1 ]/h_k\}$.
	\end{itemize}

    The above observations about the optimal solutions of ($\Phi_1$) and ($\Phi_2$) show that after solving the optimal solutions of ($\Phi_2$) and then varying the value of the transmit power of these solutions, we can derive all the optimal solutions of ($\Phi_1$).
    This proves the statement (\textit{iii}). \qed
	
\section{Proof of Lemma \ref{thr:replace-p-by-1p-if-plt1o2}}
\label{proof-of-replace-p-by-1p-if-plt1o2}
First, since $3p^2 - 3p + 1 = 3(1-p)^2 - 3(1-p) + 1$, we have $\epsilon(q, n, p) = \epsilon(q, n, 1-p)$.
    Second, it is clear that $\varphi(q, n, p, P_k) = \varphi(q, n, 1 - p, P_k)$.
    Third, formulas of the constraints \eqref{eq:DP_cons} and \eqref{eq:value_m} which contain $p$ do not change if we replace $p$ by $1 - p$.
    Therefore, we conclude that if $(\tilde{q}, \tilde{n}, \tilde{p}, \tilde{P}_k)$ is a feasible solution of problem ($\Phi_1$) then $(\tilde{q}, \tilde{n}, 1 - \tilde{p}, \tilde{P}_k)$ is also a feasible solution of problem ($\Phi_1$) with the equal objective value.
    The same conclusion holds for ($\Phi_2$).
    As a result, we only need to consider $p \geq 1/2$. \qed
	
\section{Proof of Lemma \ref{thr:narrow-quatization-level-1}}
\label{proof-of-narrow-quatization-level-1}
We replace $2D / s$ with  $q - 1$ in Eqs. \eqref{eq:Delta_1-1-exp}, \eqref{eq:Delta_2-1-exp} and \eqref{eq:Delta_inf-1-exp} and obtain:
	\begin{alignat}{2}
	&\Delta_1  &&= \sqrt{d}(q - 1) + \sqrt{2\sqrt{d}(q - 1)\ln\frac{2}{\delta}} + \frac{4}{3}\ln\frac{2}{\delta}, \nonumber\\
	&\Delta_2  &&= q - 1 + \sqrt{\Delta_1 +  \sqrt{2\sqrt{d}(q - 1)\ln\frac{2}{\delta}}}, \nonumber\\
	&\Delta_\infty &&= q + 1. \nonumber
	\end{alignat}
	
We have: $np(1 - p) \leq n/4 = (q + n - q) / 4$.
    Combining this inequality and constraints \eqref{eq:channel_cap_cons}, \eqref{eq:trans_pow_cons}, we obtain:
	\begin{align}
	    np(1 - p) \leq \frac{1}{4} \left[ \left(1 + \min_{k \in \mathcal{K}} \frac{P_k^{\max} h_k}{\omega_0}\right)^{\frac{TW}{d}} - q \right] = r(q).\nonumber
	\end{align}

We provide bounds on $S_1$ and $S_2$ in Eqs. \eqref{s-1-formula} and \eqref{s-2-formula} as follows.
    \begin{align}
        S_1 &\geq \frac{3p^2-3p+1}{n^3p^2(1-p)^2}\left[2n + \frac{2}{p(1-p)}\right]\nonumber\\
        &\geq \frac{\frac{1}{4}}{n^3p^2(1-p)^2}\left[2n + \frac{2}{p(1-p)}\right] \nonumber\\
        &\geq \frac{r(q)+1}{2r(q)^3}, \nonumber\\
        S_2 &\geq \left( \sqrt{2np(1-p)\ln{\frac{20d}{\delta}}} + 1 + \frac{2}{3} \frac{1}{2}     \ln{\frac{20d}{\delta}} \right) ^2 \nonumber\\
        \nonumber
    \end{align}

Applying the above inequalities,  we derive the following lower bounds of the terms in the right-hand side of Eq. \eqref{new-privacy-budget-estimation}
\begin{align}
    \frac{\Delta_2\sqrt{2\ln \frac{1.25}{\delta}}}{\sqrt{np(1-p)}} \geq \frac{\Delta_2\sqrt{2\ln \frac{1.25}{\delta}}}{\sqrt{r(q)}} = g_1(q), \\
    \frac{\alpha \Delta_1 \left( np(1-p) + 1 \right)}{n^2p^2(1-p)^2\left(1-\frac{\delta}{10}\right)} (p^2 + (1-p)^2) \geq \nonumber
\end{align}
\begin{align}
    \frac{\alpha \Delta_1 (r(q) + 1)}{2\left(1 - \frac{\delta}{10}\right)r^2(q)} = g_2(q), \\
    \frac{\Delta_2}{\sqrt{1 - \frac{\delta}{10}}} \sqrt{2S_1 \ln \frac{10}{\delta}} \geq \frac{\Delta_2}{\sqrt{1 - \frac{\delta}{10}}} \nonumber\\ 
    \sqrt{\frac{r(q)+1}{r^3(q)}\ln \frac{10}{\delta}} = g_3(q),\\
    \frac{2}{3}\alpha \frac{S_2 \left(p^2 + (1-p)^2\right) \ln \frac{10}{\delta} \Delta_{\infty}}{n^2p^2(1-p)^2} \geq \frac{\alpha}{3} \ln{\frac{10}{\delta}} \Delta_{\infty} \nonumber\\
     \left[ \sqrt{\frac{2\ln{\frac{20d}{\delta}}}{r(q)}} + \frac{3 + \ln{\frac{20d}{\delta}}}{3r(q)}\right]^2 = g_4(q),\\
    \frac{2\ln{\frac{1.25}{\delta}}\Delta_{\infty}}{np(1-p)} \geq \frac{2\ln{\frac{1.25}{\delta}}\Delta_{\infty}}{r(q)} = g_5(q).
\end{align}
Therefore, we get
\begin{align}
    \epsilon \geq g_1(q) + g_2(q) + g_3(q) + g_4(q) + g_5(q) = g(q).
    \label{upper-bound-level-quantization-function}
\end{align}
Since $g_i(q)$ monotonically increases with respect to $q$, for $i = 1, 2, \ldots, 5$, $g(q)$ monotonically increases with respect to $q$. Therefore, for each value $\bar{\epsilon}$, we can compute an integer $\bar{q}$ such that $g(q) \leq \bar{\epsilon}$ for every $q \leq \bar{q}$. Lemma \ref{thr:narrow-quatization-level-1} is proved. \qed

\section{Proof of Theorem \ref{thr:relative-error-statement}}
\label{proof-theorem-relative-error-statement}
The expressions which contain $p$ and appear in the terminators of the terms in Eq. \eqref{new-privacy-budget-estimation} are as follows:
\begin{align*}
    p^2 + (1-p)^2,\\
    \frac{2}{3}\max\{p, 1-p\},\\
    3p^2-3p+1.
\end{align*}
    These expressions monotonically increase with respect to $p$ for $p \in [1/2, 1)$.
    Conversely, the expressions which contain $p$ and appear in the denominators of the terms in Eq. \eqref{new-privacy-budget-estimation} are the powers of $p(1-p)$ with non-negative exponents, where $p(1-p)$ monotonically decreases with respect to $p$ for $p \in [1/2, 1)$.
    Therefore, $\epsilon$ monotonically increases with respect to $p$ for $p \in [1/2, 1)$.
	
We consider an arbitrary optimal solution $(q^*, n^*, p^*, P_k^*)$ of $(\Phi_2)$.
In the case: $p^* = 1/2$, since $1/2$ is an element of the search domain of $\mathcal{P}$ of Algorithm \ref{alg:dis-general-solution}, it is clear that the solution returned by Algorithm \ref{alg:dis-general-solution} is $(q^*, n^*, p^*, P_k^*)$.
Therefore, we have $\tilde{\varphi} = \varphi^*$, that leads to $\tilde{\varphi} / \varphi^* = 1 < 1 + \rho$.
We now consider $p^* > 1/2$.
With fixed variables $q = q^*$, $n = n^*$, $P_k = P_k^*$, $\varphi(q^*, n^*, p, P^*_k)$ is a continuous function over $p \in [1/2, 1)$.
    Therefore, there exists a closed interval $[p^* - \lambda_1, \ p^* + \lambda_1] \subset [1/2, 1)$ such that $\varphi(q^*, n^*, p, P^*_k) / \varphi^* < 1 + \rho$ for all $p \in [p^* - \lambda_1, p^* + \lambda_1]$.
	
Fixing variables $q = q^*$, $n = n^*$, we consider $\epsilon$ as mono-variable function with respect to $p \in [1/2, 1)$.
Since $\epsilon$ monotonically increases with respect to $p$ for $p \in [1/2, 1)$, if $\epsilon(p^*)$ is a local minimum point,  we have $p^* = 1/2$.
We already proved that if $p^* = 1/2$, we have $\tilde{\varphi} / \varphi^* < 1 + \rho$.
    Therefore, we only need to consider the case where $\epsilon(p^*)$ is not a local minimum point of $\epsilon(p)$ and $p^* > 1/2$.
	
Note that $\epsilon(q^*, n^*, p)$ is a continuous function with respect to $p \in [1/2, 1)$.
    Since $\epsilon(p^*)$ is not a local minimum point, there exists $\lambda_2$ such that for all $p \in [p^* - \lambda_2, \ p^*] \subset [1/2, 1)$ or for all $p \in [p^*, p^* + \lambda_2] \subset [1/2, 1)$, we have $\epsilon(q^*, n^*, p) < \epsilon(q^*, n^*, p^*) \leq \bar{\epsilon}$.
	
Let's denote $\lambda_3 = \min\{\lambda_1, \lambda_2\}$.
    There exists a positive real $\bar{\lambda} \leq \lambda_3$ such that for any $\lambda \leq \bar{\lambda}$, we have that $\{i\lambda| i \in \mathbb{N}^+, 1/2 < i\lambda < 1
	\} \cap [p^* - \lambda_3, p^*]$ is not empty.
	Let's consider an arbitrary element $p' \in \{i\lambda | i \in \mathbb{N}, 1/2 \leq i\lambda < 1\} \cap [p^* - \lambda_3, p^*]$ for an arbitrary $\lambda \leq \bar{\lambda}$.
	
Algorithm \ref{alg:m-epsilon-domain} with parameters $q = q^*, p = p'$ returns $n_1'$ such that $\epsilon(q^*, p', n) \leq \bar{\epsilon}$ if and only if $n \geq n_1'$. Since $p' \in [p^* - \lambda_3, p^*]$, we have $\epsilon(q^*, p', n^*) \leq \bar{\epsilon}$. Therefore, we have $n^* \geq n_1'$.
    Let's denote:
	\begin{align}
	    n' = \max \bigg\{ \left\lceil \frac{\max\{23 \ln\frac{10d}{\delta}, \ 2(q^* + 1)\}} {Kp'(1- p')} \right\rceil, n_1' \bigg\}.\label{proof-theorem-relative-error-statement-m-strick}
	\end{align}
$(q^*, n', p')$ is a feasible solution of $(\Phi_2)$ and is considered by Algorithm \ref{alg:dis-general-solution}, i.e., $(q^*, p') \in \mathcal{Q}\times\mathcal{P}$.

It is clear that $p(1 - p)$ is a parabolic curve over $p \in (0, 1)$ with the maximum point at $p = 1/2$, and $1/2 < p' < p^*$.
Therefore, $p'(1 - p') > p^*(1 - p^*)$.
We have the following inequality:
	$$\frac{\max\{23 \ln{\frac{10d}{\delta}}, 2(q^* + 1)\}}{Kp'(1 - p')} <  \frac{\max\{23 \ln{\frac{10d}{\delta}}, 2(q^* + 1)\}}{Kp^*(1 - p^*)}.$$
	Therefore, we have:
	\begin{align}
	    n^* \geq \max \bigg\{ \left\lceil \frac{\max\{23 \ln\frac{10d}{\delta}, \ 2(q^* + 1)\}} {Kp'(1- p')} \right\rceil, n_1'\bigg\}.\label{proof-theorem-relative-error-statement-m-star}
	\end{align}
	
Combining \eqref{proof-theorem-relative-error-statement-m-strick} and \eqref{proof-theorem-relative-error-statement-m-star}, we get $n^* \geq n'$.
    Therefore, $\varphi(q^*, n', p', P_k^*) < \varphi(q^*, n^*, p', P_k^*)$.
    Note that $\varphi(q, n, p, P_k) = (1+np(1-p))/(q-1)^2$ monotonically increases with respect to $n$.
    Moreover, since $p' \in [p^* - \lambda_1, p^* + \lambda_1]$, we have:
	$$\frac{\varphi(q^*, n^*, p', P_k^*)}{\varphi^*} < 1 + \rho.$$
	Consequently, we have:
	$$\frac{\varphi(q^*, n', p', P_k^*)}{\varphi^*} < 1 + \rho.$$
	
Since $(q^*, p') \in \mathcal{Q} \times \mathcal{P}$, the objective function value of the solution returned by Algorithm \ref{alg:dis-general-solution} does not exceed $\varphi(q^*, n', p', P_k^*)$.
    Therefore, the solution returned by Algorithm \ref{alg:dis-general-solution} is a $\rho$-relative solution.
    Recall that the optimal objective values of problems ($\Phi_1$) and ($\Phi_2$) are equal and each feasible solution of ($\Phi_2$) is also a feasible solution of ($\Phi_1$).
    Therefore, the solution returned by Algorithm \ref{alg:dis-general-solution} is also a $\rho$-relative error solution of ($\Phi_1$).
    Theorem \ref{thr:relative-error-statement} is proved. \qed
    
\section{Proof of Theorem \ref{thr:relative-error-estimation}}
\label{proof-thr:relative-error-estimation}
We will prove that:
    \begin{align}
        \frac{\varphi(q^*, n^*, p, P_k^*)}{\varphi(q^*, n^*, p^*, P_k^*)} < 1 + \mu\lambda, \label{proof-relative-fraction}
    \end{align}
    where $p \in (0, 1)$ is an integer multiple of $\lambda$ and $|p - p^*| < \lambda$.   
We transform \eqref{proof-relative-fraction} as follows:
    \begin{align}
        \eqref{proof-relative-fraction} & \Leftrightarrow \frac{1 + n^*p(1 - p)}{1 + n^*p^*(1 - p^*)} < 1 + \mu\lambda \nonumber \\
        & \Leftrightarrow  n^*p(1 - p) < n^*p^*(1 - p^*) + \mu\lambda + \nonumber\\
        & \quad\quad \mu \lambda n^*p^*(1 - p^*) \nonumber \\
        & \Leftrightarrow n^*(p - p^*)(1 - p - p^*) < \mu\lambda n^* p^*(1 - p^*) \nonumber \\
       & \quad\quad + \mu\lambda. \label{relative-inquality-form-1}
    \end{align}
    
We will prove that: $n^*(p - p^*)(1 - p - p^*) < \mu\lambda n^* p^*(1 - p^*)$ or $(p - p^*)(1 - p - p^*) < \mu\lambda p^*(1 - p^*)$. We need to consider only the case where $(p - p^*)(1 - p - p^*) > 0$.
    The case where $(p - p^*)(1 - p - p^*) \leq 0$ is trivial.
    Firstly, we have $\lambda > |p - p^*|$. Secondly, we will prove that: 
    \begin{align}
        \mu p^*(1 - p^*) > |1 - p - p^*|. \label{relative-inquality-form-2}
    \end{align}
    
Considering $(p - p^*)(1 - p - p^*) > 0$, we have two following cases.
    
\textit{Case 1}: $p > p^*$, we have $0 < 1 - p - p^* < 1 - 2p^*$. We will prove that $\mu p^*(1 - p^*) > 1 - 2p^*$, which is equivalent to the quadratic inequality $\mu(p^*)^2 - (\mu + 2)p^* + 1 < 0$.
    The quadratic inequality holds if $(\mu + 2 - \sqrt{\mu^2 + 4}) / (2\mu) < p^* < (\mu + 2 + \sqrt{\mu^2 + 4}) / (2\mu)$. It is clear that, since $\mu > 0$, we have $(\mu + 2 + \sqrt{\mu^2 + 4}) / (2\mu) > 1 > p^*$.
    We transform $(\mu + 2 - \sqrt{\mu^2 + 4}) / (2\mu)$ as follows:
    \begin{align*}
        \frac{\mu + 2 - \sqrt{\mu^2 + 4}}{2\mu} = \frac{2}{\mu + 2 + \sqrt{\mu^2 + 4}} < \frac{1}{\mu} \nonumber \\
        = \frac{1 - \sqrt{1 - 4\eta}}{2}.
    \end{align*}
    
Now, we need to prove that: 
    \begin{align}
        p^* \geq \frac{1 - \sqrt{1 - 4\eta}}{2}. \label{relative-inquality-form-3}
    \end{align}
    
\textit{Case 2}: $p < p^*$, then $0 < p + p^* - 1 < 2p^* - 1$.
    We will prove that: $\mu p^*(1 - p^*) > 2p^* - 1$, which is equivalent to the quadratic inequality $\mu(p^*)^2 - (\mu - 2)p^* - 1 < 0$.
    The quadratic inequality holds if $(\mu - 2 - \sqrt{\mu^2 + 4}) / (2\mu) < p^* < (\mu - 2 + \sqrt{\mu^2 + 4}) / (2\mu)$.
    It is clear, since $\mu > 0$, we have $(\mu - 2 - \sqrt{\mu^2 + 4}) / (2\mu) < 0 < p^*$. We transform $(\mu - 2 + \sqrt{\mu^2 + 4}) / (2\mu)$ as follows:
    \begin{align*}
        \frac{\mu - 2 + \sqrt{\mu^2 + 4}}{2\mu} > \frac{2\mu - 2}{2\mu} &= 1 - \frac{1}{\mu} = 1 \nonumber \\ - \frac{1 - \sqrt{1 - 4\eta}}{2} 
        &= \frac{1 + \sqrt{1 - 4\eta}}{2}.
    \end{align*}
    
Now, we need to prove that: 
    \begin{align}
        \frac{1 + \sqrt{1 - 4\eta}}{2} \geq p^*. \label{relative-inquality-form-4}
    \end{align}
    
Applying \eqref{eq:DP_cons}, we have: $p^*(1 - p^*) \geq \max\{23\ln(10d / \delta), 2(q^* + 1)\} / (Kn^*) \geq \max\{23\ln(10d / \delta), 6\} / (K\bar{n}_{\mathcal{N}}) = \eta$, then $0 \geq (p^*)^2 - p^* + \eta$. We get $(1 - \sqrt{1 - 4\eta}) / 2 \leq p^* \leq (1 + \sqrt{1 - 4\eta}) / 2$.
    Therefore, the inequalities \eqref{relative-inquality-form-3} and \eqref{relative-inquality-form-4} hold.
    Hence, the inequality \eqref{relative-inquality-form-1} holds, and Theorem \ref{thr:relative-error-estimation} is proved. \qed
	
	

\ifCLASSOPTIONcaptionsoff
\newpage
\fi
	
\bibliography{ref}
\bibliographystyle{IEEEtran}

\end{document}